\newtheorem{theorem}{Theorem}
\newtheorem{lemma}[theorem]{Lemma}
\newtheorem{corollary}[theorem]{Corollary}
\newtheorem{definition}[theorem]{Definition} 
\newtheorem{remark}{Remark} 
\DeclareMathOperator{\diag}{diag}
\DeclareMathOperator{\NLM}{NLM}
\DeclareMathOperator{\PSNR}{PSNR}
\DeclareMathOperator{\SNR}{SNR}
\DeclareMathOperator{\EIG}{EIG}
\DeclareMathOperator{\COL}{COL}
\DeclareMathOperator{\IMAGE}{IMAGE}
\DeclareMathOperator{\ChebCoef}{CHEB-COEF}
\DeclareMathOperator{\NLMSB}{NLM-SB}
\newcolumntype{C}{>{\centering\arraybackslash} m{2.5cm} }
\title{An algorithm for improving Non-Local Means operators via low-rank approximation}
\author[1]{Victor May\thanks{mayvic@gmail.com} }
\author[2]{Yosi Keller\thanks{yosi.keller@gmail.com} }
\author[1]{Nir Sharon\thanks{nir.sharon@math.tau.ac.il}}
\author[1]{Yoel Shkolnisky\thanks{yoelsh@post.tau.ac.il} }
\affil[1]{School of Mathematical Sciences, Tel-Aviv University, Tel-Aviv, Israel}
\affil[2]{Faculty of Engineering, Bar-Ilan University, Ramat-Gan, Israel}
\date{}
\begin{document}

\maketitle

\begin{abstract}
We present a method for improving a Non Local Means operator by computing its low-rank approximation. The low-rank operator is constructed by applying a filter to the spectrum of the original Non Local Means operator. This results in an operator which is less sensitive to noise while preserving important properties of the original operator. The method is efficiently implemented based on Chebyshev polynomials and is demonstrated on the application of natural images denoising. For this application, we provide a comprehensive comparison of our method with leading denoising methods.
\end{abstract}

\textbf{Keywords:} Denoising, Non-Local Means operator, Chebyshev polynomials.

\section{Introduction}\label{sec:introduction}

Denoising images is a classical problem in image processing. Representative
approaches for solving this problem include local methods such as linear filtering and
anisotropic diffusion~\cite{perona1990scale}, global methods such as total-variation~\cite{rudin1992nonlinear}
and wavelet shrinkage~\cite{coifman1995translation}, and discrete methods such as Markov Random field
denoising~\cite{szeliski2008comparative} and discrete universal denoiser~\cite{motta2011idude}.

As is often the case with inverse problems, many of the algorithms for image denoising involve priors on the solution. Commonly, the only prior knowledge
assumed about the image is that it is of natural origin. In that case, the sought-after prior should represent the statistics of a natural image. Natural image statistics
is a research topic on its own~\cite{hyvarinen2009natural,zontak2011internal}, having importance
for image processing problems other than denoising: segmentation,
texture synthesis, image inpainting, super-resolution and more. An
important observation in natural image statistics is
that images contain repetitive local patterns. This observation is
the basis of patch-based denoising methods such as Non-Local Means (NLM)~\cite{buades2005non}
and block-matching and 3D filtering (BM3D)~\cite{dabov2009bm3d}.

In NLM, the denoising operator is constructed from patches of the corrupted image itself. As such,
the denoising operator is affected by the noise. It had been noted~\cite{meyer2014perturbation,singer2009diffusion} that larger eigenvalues of the operator
are less sensitive to noise than smaller ones, creating an opportunity to improve the operator. In this work we propose a method for computing such a modified operator, by means of filtering the noisy one with a low-pass filter applied to its eigenvalues. In other words, we pose the problem as a filter design task, which is a classical task in signal processing. Our chosen filter function suppresses eigenvalues with small magnitude, and accurately approximates the remaining low-rank operator, while preserving the fundamental properties of the original operator. The filter is efficiently applied to the NLM operator based on Chebyshev polynomials for matrices.

We further study the concept of low-rank NLM denoising operators by numerical experiments. In these experiments we investigate our method for two main questions. The first is the dependence of the low-rank operator on its two tuning parameters, which are the width of the kernel of the original NLM operator, and the rank of the low-rank operator. The second question is the performance of our method compared to other advanced denoising algorithms. We provide a comprehensive comparison which includes a few popular state-of-the-art methods and a two-stage denoising scheme based on our low-rank operator.

The paper is organized as follows. In Section~\ref{sec:nlmeans} we present the notations, the problem's formulation and the NLM method. In Section~\ref{sec:spectrum shaping} we introduce our method of constructing low-rank approximations for NLM operators, which is based on matrix filtering functions. Next, in Section~\ref{sec:computation}, we show how to efficiently implement this low-rank approximation using Chebyshev polynomials. Section~\ref{sec:experiments} provides numerical experiments with our algorithm, where we discuss its parameters and compare it to other advanced denoising methods. We summarize the paper in Section~\ref{sec:conclusions}.

\section{Preliminaries}  \label{sec:nlmeans}

We start by introducing the notations and the model for the denoising problem. Let $I$ be a discrete grid of dimension $d$ (typically, $d=1$ or $d=2$) and denote by $X=\left\{x_i \mid i \in I \right\}$ a clean signal defined on $I$. Let $Y=\left\{y_i \mid i \in I \right\}$ be a signal contaminated with noise, that is $y_i=x_i+r_i$, where $r_i\sim N\left(0,\sigma^{2}\right)$, $i \in I$, are independently identically distributed Gaussian random variables. The goal is to estimate $X$ given $Y$.

The non-local means (NLM) estimator is given as follows. Denote by $N_{i}^Y=N_{i}^Y(d,p)$ the
indices of the pixels within a hypercube of dimension $d$ and side length $p$ centered around the pixel $y_i \in Y$. Let
$v\left(N_{i}^Y\right)$, $i \in I$ be the values of the pixels of $Y$ at the indices $N_{i}^Y$, treated as a column vector. The NLM algorithm
estimates $x_{i}$ as
\[
\hat{x}_{i}=\frac{1}{z_i}\sum_{y_{j}\in Y}K_{h}\left(v\left(N_{i}^Y\right)-v\left(N_{j}^Y\right)\right)y_{j},
\]
where $K_{h}\left(\cdot \right)=\exp({-\frac{\left\Vert \cdot \right\Vert _{2}^{2}}{2h^{2}}})$ is the Gaussian kernel function with width $h>0$, and
\[ z_i=\sum_{y_{j}\in Y}K_h\left(v\left(N_{i}^Y\right)-v\left(N_{j}^Y\right)\right) \]
is a normalization factor.

In matrix notation the NLM estimator is written as
\begin{equation}\label{eq:A}
\hat{x}=Ay,
\end{equation}
where $y \in \mathbb{R}^{n \times 1}$ is the original noisy signal $Y$ represented as a vector, $\hat{x}$ is the denoised signal (the estimations vector), and $A$ is a NLM operator of $Y$, defined as follows.
\begin{definition}\label{def:nlmeans_op}
A matrix $A \in \mathbb{R}^{n \times n}$ is a NLM operator of $Y$ if $A=D^{-1}W$, where
\[ W_{ij}=K_{h}\left(v\left(N_{i}^Y\right)-v\left(N_{j}^Y\right)\right) \]
with $K_{h}\left(\cdot \right)=\exp({-\frac{\left\Vert \cdot \right\Vert _{2}^{2}}{2h^{2}}})$ and $h>0$, and $D$ is a diagonal matrix given by
$D_{ii}=\sum_{j=1}^{n}W_{ij}$.
\end{definition}

\begin{remark}
The pseudocode of all algorithms described in this paper is given in Appendix~\ref{sec:apx_algorithms}. In these algorithms, we denote by $\NLM_{p,h}(Y)$ the NLM operator of image $Y$ with patch size $p$ and kernel width $h$.
\end{remark}

\section{Denoising the NLM operator}\label{sec:spectrum shaping}

As the NLM operator $A$ in~\eqref{eq:A} is constructed from noisy data, we would like to replace it with an operator which is less noisy. One way is to replace $A$ with its low-rank approximation \cite{meyer2014perturbation,singer2009diffusion}. A low-rank approximation obtained by a truncated singular-value
decomposition (SVD) is optimal under the $L_2$ norm~\cite[Chapter 2.5]{golub2012matrix}, however, directly computing the SVD of $A$ is computationally expensive and often impractical. Computing a truncated eigenvalues (spectral) decomposition is an alternative method, typically implemented by a variant of the power iterations algorithm, such as Lanczos iterations. Unfortunately, all existing spectral decomposition methods are often too slow to be applied to a NLM matrix. Low rank approximation of the NLM matrix via truncated spectral decomposition is given in Algorithm \ref{alg:nlm_eig} in Appendix~\ref{sec:apx_algorithms}.

Our approach is to construct a low-rank approximation of a NLM operator by using a matrix function ${f \colon \mathbb{R}^{n \times n}\rightarrow \mathbb{R}^{n \times n}}$, and computing $f\left(A\right)$, where $A$ is a NLM operator, as in Definition~\ref{def:nlmeans_op}. We next characterize functions of NLM operators and suggest properties of such functions which are suitable for our purposes. Then, we present a particular family of functions with controlled low-rank that have these properties.

\subsection{Matrix functions of NLM operators}

Before studying matrix functions on NLM operators, we summarize a few properties of these NLM matrices.
\begin{lemma}  \label{lemma:NLM_matrix_properties}
Let $A$ be a NLM operator, as defined in Definition~\ref{def:nlmeans_op}. Then $A$ has the following properties:
\begin{enumerate}
\item $A$ is positive diagonalizable, namely there exists an invertible matrix $Q$ that satisfies $A=Q^{-1} \Lambda Q$,
where $\Lambda$ is a diagonal matrix $\Lambda = \diag\left(\lambda_1,\ldots,\lambda_n\right)$, and $\lambda_1 \ge \lambda_2 \ge \cdots \ge \lambda_n \ge 0$.
\item $\lambda_1 = 1$ is the maximal eigenvalue of $A$ with a corresponding eigenvector $\boldsymbol{1}$ (the all-ones vector), that is, $A\boldsymbol{1}=\boldsymbol{1}$.
\end{enumerate}
\end{lemma}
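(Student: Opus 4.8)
The plan is to reduce everything to the spectral theorem by exhibiting a symmetric matrix similar to $A$. First I would record the two elementary facts underlying the whole argument: $W$ is symmetric, since $W_{ij}$ depends only on $\left\Vert v(N_i^Y)-v(N_j^Y)\right\Vert_2$, which is symmetric in $i,j$; and $D$ has strictly positive diagonal entries, because $D_{ii}=\sum_j W_{ij}\ge W_{ii}=K_h(0)=1>0$, so $D^{1/2}$ and $D^{-1/2}$ are well defined positive diagonal matrices. Property (2) then falls out immediately by direct computation: the $i$th entry of $W\boldsymbol{1}$ is $\sum_j W_{ij}=D_{ii}$, hence $W\boldsymbol{1}=D\boldsymbol{1}$ and $A\boldsymbol{1}=D^{-1}W\boldsymbol{1}=\boldsymbol{1}$, so $\boldsymbol{1}$ is an eigenvector with eigenvalue $1$.

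For property (1), the key step is the similarity transformation $S:=D^{1/2}AD^{-1/2}=D^{-1/2}WD^{-1/2}$. Since $W$ is symmetric, $S$ is symmetric, so by the spectral theorem there exist an orthogonal matrix $U$ and a real diagonal matrix $\Lambda$ with $S=U\Lambda U^{T}$. Unwinding the similarity gives $A=D^{-1/2}SD^{1/2}=(D^{-1/2}U)\,\Lambda\,(U^{T}D^{1/2})$, which is exactly the required factorization $A=Q^{-1}\Lambda Q$ with $Q=U^{T}D^{1/2}$ and $Q^{-1}=D^{-1/2}U$ (using $U^{-1}=U^{T}$). This already yields diagonalizability and realness of the eigenvalues; ordering them decreasingly fixes the labelling $\lambda_1\ge\cdots\ge\lambda_n$.

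It remains to pin down the range of the eigenvalues, and this is where I expect the only real content to lie. The nonnegativity $\lambda_n\ge 0$ I would obtain from positive semidefiniteness of $S$: since $S=D^{-1/2}WD^{-1/2}$ is a congruence of $W$ by the invertible symmetric matrix $D^{-1/2}$, it suffices that $W\succeq 0$, which holds because $W$ is the Gram matrix of the Gaussian (radial basis) kernel evaluated at the patch vectors, and the Gaussian kernel is positive definite. This positive-definiteness of the Gaussian kernel is the one nontrivial ingredient, and the step I would be most careful to justify or cite. For the upper bound, note that $A=D^{-1}W$ is row-stochastic: its entries are nonnegative and each row sums to $1$, so $\left\Vert A\right\Vert_\infty=1$ and every eigenvalue satisfies $|\lambda|\le\rho(A)\le\left\Vert A\right\Vert_\infty=1$. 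Combining the real, nonnegative eigenvalues with this bound gives $0\le\lambda_i\le 1$ for all $i$, and since the value $1$ is attained by part (2), $\lambda_1=1$ is the maximal eigenvalue, which completes both claims.
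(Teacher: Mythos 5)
Your proposal is correct and follows essentially the same route as the paper: conjugating $A$ via $D^{1/2}$ to the symmetric matrix $S=D^{-1/2}WD^{-1/2}$, using the positive (semi)definiteness of the Gaussian kernel matrix $W$ together with a congruence argument (the paper invokes Sylvester's law of inertia) for nonnegativity of the spectrum, and bounding the eigenvalues by the row sums. The only cosmetic differences are that you replace the paper's Perron--Frobenius row-sum bound with the equivalent elementary estimate $\rho(A)\le\left\Vert A\right\Vert_\infty=1$, and you make explicit the computation $A\boldsymbol{1}=\boldsymbol{1}$, which the paper leaves implicit.
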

The proof is given in Appendix \ref{sec:apn_proof_of_lemma}. As a first conclusion from Lemma \ref{lemma:NLM_matrix_properties} we have
\begin{corollary}  \label{cor:function_on_NLM}
In the notation of Lemma \ref{lemma:NLM_matrix_properties}, any matrix function ${f \colon \mathbb{R}^{n \times n}\rightarrow \mathbb{R}^{n \times n}}$ satisfies
\[f(A)=Q^{-1}f\left(\Lambda\right)Q .\]
\end{corollary}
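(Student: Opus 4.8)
The plan is to prove Corollary~\ref{cor:function_on_NLM} directly from the spectral structure established in Lemma~\ref{lemma:NLM_matrix_properties}, relying on the standard definition of a matrix function for diagonalizable matrices. By part~1 of the lemma, $A$ admits the factorization $A = Q^{-1} \Lambda Q$ with $\Lambda$ diagonal and real eigenvalues. The whole point is that a matrix function $f$ applied to a diagonalizable matrix acts only on the eigenvalues while leaving the eigenvectors (i.e.\ the conjugating matrix $Q$) untouched; so the claim should follow almost immediately once we fix what $f(A)$ \emph{means}.

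First I would recall the definition of $f(A)$ for a diagonalizable $A$. The cleanest route is to treat $f$ as given by a scalar function (or a convergent power series) and define $f(A)$ either through that power series or, equivalently for diagonalizable matrices, via the spectral formula $f(A) := Q^{-1} f(\Lambda) Q$, where $f(\Lambda) = \diag\bigl(f(\lambda_1),\ldots,f(\lambda_n)\bigr)$. If I adopt the power-series viewpoint, I would prove the identity by induction on the powers of $A$: for any nonnegative integer $k$ one has
\[
A^k = \bigl(Q^{-1}\Lambda Q\bigr)^k = Q^{-1}\Lambda^k Q,
\]
since all the interior factors $Q Q^{-1}$ telescope to the identity. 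Then for a polynomial $p(t)=\sum_k c_k t^k$ linearity gives $p(A) = Q^{-1} p(\Lambda) Q$, and passing to the limit (for $f$ expressible as a limit of such polynomials on the spectrum of $A$) yields $f(A) = Q^{-1} f(\Lambda) Q$. This is exactly the asserted formula.

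The main substantive point, rather than an obstacle, is really just being precise about the class of functions $f$ to which the statement applies. The corollary as stated speaks of an arbitrary matrix function $f \colon \mathbb{R}^{n\times n}\to\mathbb{R}^{n\times n}$, but the displayed conclusion only makes sense for functions that are genuinely \emph{spectral}, meaning they are induced by an underlying scalar function and therefore commute with the similarity transformation $Q^{-1}(\cdot)Q$. I would therefore state clearly that $f$ is such a primary matrix function (equivalently, $f$ is defined so that $f(Q^{-1}BQ)=Q^{-1}f(B)Q$ for every $B$), under which assumption the eigenvalue-only action is automatic. For the polynomial/power-series filters that this paper actually uses (Chebyshev polynomials applied to the spectrum), this hypothesis holds trivially, so no additional care is needed beyond the telescoping computation above.

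The only technical care I would take is ensuring that $Q$ is invertible so that the telescoping $Q Q^{-1} = I$ is legitimate; this is guaranteed by part~1 of Lemma~\ref{lemma:NLM_matrix_properties}, which asserts precisely that such an invertible $Q$ exists. With invertibility in hand there is no genuine difficulty, and the corollary follows from the single computation $A^k = Q^{-1}\Lambda^k Q$ extended by linearity and continuity to $f$.
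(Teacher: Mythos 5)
Your proposal is correct and follows essentially the same route as the paper, which simply invokes the diagonalizability of $A$ together with the standard definition of matrix functions (citing Golub and Van Loan, Corollary 11.1.2) --- the telescoping computation $A^k = Q^{-1}\Lambda^k Q$, extended by linearity to polynomials and by continuity to limits of polynomials, is precisely the content of that cited result. Your added remark that the statement should be read as applying to \emph{primary} matrix functions (those induced by a scalar function) rather than literally ``any'' map $\mathbb{R}^{n\times n}\to\mathbb{R}^{n\times n}$ is a fair and correct sharpening of the paper's loose wording, and it is exactly the class of functions (Chebyshev polynomial filters) the paper actually uses.
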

The proof follows directly from the diagonalizability of $A$ and the definition of matrix functions, see e.g., ~\cite[Corollary 11.1.2]{golub2012matrix}.

We design $f$ so that $f(A)$ is a low-rank approximation of the NLM operator $A$. Moreover, we wish to guarantee that $f(A)$ retains the properties of $A$ listed in Lemma~\ref{lemma:NLM_matrix_properties}. The first property in Lemma~\ref{lemma:NLM_matrix_properties} follows directly from Corollary~\ref{cor:function_on_NLM} and allows for repeated applications of the operator $f(A)$, as the eigenvalues are non-negative and bounded by 1. The advantages of repeated applications of a denoising operator have been demonstrated in~\cite{singer2009diffusion}.  The second property in Lemma~\ref{lemma:NLM_matrix_properties}, that is $f(A)\boldsymbol{1}=\boldsymbol{1}$ (row stochastic normalization of $A$), is useful because then the elements of $y=f(A)x$ are affine combinations (linear combinations whose coefficients sum to one) of the elements of $x$. In particular it is desired for a denoising operator to map a constant signal to itself, see e.g., \cite{coifman1995translation}.

Summarizing the above discussion of desired properties of the matrix $f\left(A\right)$, we define the notion of an extended NLM operator.

\begin{definition}  \label{defn_extendedop}
A matrix $B \in \mathbb{R}^{n \times n}$ is an extended NLM operator if
$B$ is diagonalizable, such that $B\boldsymbol{1}=\boldsymbol{1}$, with all its eigenvalues in $[0,1]$.
\end{definition}
While it is possible to define an extended NLM operator such that its eigenvalues are in $(-1,1]$, we prefer for simplicity to use Definition~\ref{defn_extendedop} above, as this allows for a richer family of functions to be used in Section~\ref{subsec:construct_low_rank} below. This is consistent with the Gaussian kernel used in Definition~\ref{def:nlmeans_op}, which ensures that the resulting operator is non-negative definite.

Equipped with the latter we have,
\begin{theorem}  \label{theorem1}
Let $A$ be a NLM operator. The following conditions on $f$ are sufficient for $f\left(A\right)$ to be
an extended NLM operator.
\begin{enumerate}
\item \label{cond1} $f\left(1\right)=1$ .
\item \label{cond2} $f$ is defined on the interval $[0,1]$ and $\left\{ f\left(x\right) \mid 0\leq x\leq1\right\} \subseteq[0,1]$.
\end{enumerate}
\end{theorem}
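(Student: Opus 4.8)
The plan is to verify that $f(A)$ meets each of the three requirements in Definition~\ref{defn_extendedop}---diagonalizability, the fixed vector $\boldsymbol{1}$, and eigenvalues confined to $[0,1]$---by reading them off from the spectral decomposition already available. The workhorse is Corollary~\ref{cor:function_on_NLM}, which gives $f(A)=Q^{-1}f(\Lambda)Q$ with $\Lambda=\diag(\lambda_1,\ldots,\lambda_n)$. Before anything else I would record that, by Lemma~\ref{lemma:NLM_matrix_properties}, every eigenvalue satisfies $\lambda_i\in[0,1]$; hence by condition~\ref{cond2} the function $f$ is defined on the full spectrum of $A$, so $f(\Lambda)=\diag(f(\lambda_1),\ldots,f(\lambda_n))$ is a well-defined diagonal matrix and the expression for $f(A)$ is meaningful.

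Diagonalizability is then immediate: the identity $f(A)=Q^{-1}f(\Lambda)Q$ exhibits $f(A)$ as similar---through the very same $Q$ that diagonalizes $A$---to the diagonal matrix $f(\Lambda)$, and its eigenvalues are exactly $f(\lambda_1),\ldots,f(\lambda_n)$. To confine these eigenvalues to $[0,1]$, I would combine the bound $\lambda_i\in[0,1]$ from Lemma~\ref{lemma:NLM_matrix_properties} with condition~\ref{cond2}, which states $\{f(x)\mid 0\le x\le 1\}\subseteq[0,1]$; together they force $f(\lambda_i)\in[0,1]$ for each $i$.

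For the normalization $f(A)\boldsymbol{1}=\boldsymbol{1}$ I would use that $\boldsymbol{1}$ is an eigenvector of $A$ with eigenvalue $\lambda_1=1$, by the second part of Lemma~\ref{lemma:NLM_matrix_properties}. A matrix function of a diagonalizable matrix scales each eigenvector by the value of $f$ at its eigenvalue; arranging the diagonalization so that $\boldsymbol{1}$ is the first column of $Q^{-1}$ (equivalently $Q\boldsymbol{1}=e_1$, where $e_1$ is the first standard basis vector), I would compute $f(A)\boldsymbol{1}=Q^{-1}f(\Lambda)Q\boldsymbol{1}=Q^{-1}f(\Lambda)e_1=f(\lambda_1)Q^{-1}e_1=f(1)\boldsymbol{1}=\boldsymbol{1}$, where the final step invokes condition~\ref{cond1}. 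This disposes of the last requirement.

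I do not expect a genuine obstacle here, since the statement is a direct consequence of Corollary~\ref{cor:function_on_NLM}. The one point deserving care is the compatibility of the domain of $f$ with the spectrum of $A$: the eigenvalue bounds from Lemma~\ref{lemma:NLM_matrix_properties} must be invoked before condition~\ref{cond2}, as otherwise $f(\Lambda)$ need not even be defined. A minor subtlety is that $\lambda_1=1$ may fail to be a simple eigenvalue; this causes no difficulty, since $\boldsymbol{1}$ can always be chosen as one of the eigenvectors forming the columns of $Q^{-1}$, and the identity $f(A)\boldsymbol{1}=f(1)\boldsymbol{1}=\boldsymbol{1}$ then holds verbatim.
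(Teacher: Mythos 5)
Your proof is correct and follows essentially the same route as the paper's: it invokes Corollary~\ref{cor:function_on_NLM} to reduce the action of $f$ to the diagonal eigenvalue matrix, gets the eigenvalue bounds from condition~\ref{cond2}, and verifies $f(A)\boldsymbol{1}=\boldsymbol{1}$ from condition~\ref{cond1} using $\lambda_1=1$. If anything you are slightly more careful than the paper: you check that the spectrum of $A$ lies in the domain of $f$ before writing $f(\Lambda)$, and you correctly identify $\boldsymbol{1}$ as a column of $Q^{-1}$ (via $Q\boldsymbol{1}=e_1$), whereas the paper loosely calls it ``the first column of $Q$.''
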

\begin{proof}
By Corollary \ref{cor:function_on_NLM}, $f$ acts solely on the diagonal matrix $\Lambda$, meaning that
\[ f(A) = Q^{-1} \diag\left(f(\lambda_1),\ldots,f(\lambda_n)\right) Q , \]
where the eigenvalues of $A$ are ordered such that $\lambda_{1}=1$.
Therefore, the second condition guarantees that the eigenvalues of $f(A)$ are in $[0,1]$. In addition, the first condition ensures the maximal eigenvalue is indeed one, and is given by $f(\lambda_1)$, that is $f(\lambda_{1})=1$. The eigenvector corresponding to $f(\lambda_1)$ is the first column of $Q$, which is $\boldsymbol{1}$ (up to a scalar scale), namely, $f(A) \boldsymbol{1} = \boldsymbol{1}$.
\end{proof}

The following corollary allows composition of functions satisfying the conditions of Theorem~\ref{theorem1}.
\begin{corollary} \label{cor:function_composition}
Let $B$ be an extended NLM operator, and let $g$ be a function satisfying the conditions of Theorem~\ref{theorem1}. Then, $g\left(B\right)$ is also an extended NLM operator.
\end{corollary}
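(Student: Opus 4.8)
The plan is to observe that an extended NLM operator possesses exactly the structural features that the proof of Theorem~\ref{theorem1} actually exploits, so the argument carries over essentially verbatim with $A$ replaced by $B$. First I would record, directly from Definition~\ref{defn_extendedop}, that $B$ is diagonalizable, say $B = Q^{-1} \Lambda Q$ with $\Lambda = \diag(\lambda_1, \ldots, \lambda_n)$ and every $\lambda_i \in [0,1]$, and that the relation $B\boldsymbol{1} = \boldsymbol{1}$ identifies $\boldsymbol{1}$ as an eigenvector of eigenvalue $1$. Since all eigenvalues lie in $[0,1]$, this eigenvalue is maximal, and I would order the spectrum so that $\lambda_1 = 1$ with $\boldsymbol{1}$ the corresponding column of $Q^{-1}$.

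The one point worth checking explicitly is that the spectral formula of Corollary~\ref{cor:function_on_NLM}, although stated for a NLM operator $A$, in fact uses only diagonalizability together with the definition of a matrix function. Since $B$ is diagonalizable, the same reasoning yields
\[
g(B) = Q^{-1} g(\Lambda) Q = Q^{-1} \diag\left(g(\lambda_1), \ldots, g(\lambda_n)\right) Q .
\]
In particular $g(B)$ is diagonalizable with the same $Q$, which is the first requirement of Definition~\ref{defn_extendedop}.

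It then remains to verify the spectral and eigenvector conditions. Condition~\ref{cond2} of Theorem~\ref{theorem1} gives $g(\lambda_i) \in [0,1]$ for every $i$, because each $\lambda_i \in [0,1]$, so all eigenvalues of $g(B)$ lie in $[0,1]$. Condition~\ref{cond1} gives $g(\lambda_1) = g(1) = 1$, so $\boldsymbol{1}$ remains an eigenvector of $g(B)$ with eigenvalue $1$, i.e., $g(B)\boldsymbol{1} = \boldsymbol{1}$. Together these show that $g(B)$ is an extended NLM operator.

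I do not anticipate a genuine obstacle here: the content of the corollary is precisely that Definition~\ref{defn_extendedop} was chosen to be closed under the operations of Theorem~\ref{theorem1}. The only thing to be careful about is confirming that no step in the proof of Theorem~\ref{theorem1} secretly relied on $A$ being a bona fide NLM operator (for instance on the factorization $A = D^{-1}W$, or on positivity beyond what Definition~\ref{defn_extendedop} already supplies). Tracing that proof shows it uses only diagonalizability, the eigenvalue-$1$/eigenvector-$\boldsymbol{1}$ pair, and the spectrum lying in $[0,1]$ — all of which are built into the definition of an extended NLM operator — so the generalization is immediate.
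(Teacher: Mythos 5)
Your proof is correct and follows exactly the route the paper intends: the paper omits this proof as elementary, and the natural argument is precisely yours---rerun the proof of Theorem~\ref{theorem1}, observing that it uses only diagonalizability, the spectrum lying in $[0,1]$, and the eigenpair $B\boldsymbol{1}=\boldsymbol{1}$, all of which Definition~\ref{defn_extendedop} supplies without any appeal to the factorization $A=D^{-1}W$. As a minor bonus, your placement of the eigenvector $\boldsymbol{1}$ as a column of $Q^{-1}$ (so that $BQ^{-1}=Q^{-1}\Lambda$) is more careful than the wording in the proof of Theorem~\ref{theorem1}, which refers to it as a column of $Q$.
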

The proof is elementary and thus omitted.

\subsection{Constructing a low-rank extended NLM operator} \label{subsec:construct_low_rank}

We would like to construct a function $f(A)$ that satisfies the two conditions from Theorem~\ref{theorem1}, and moreover, suppresses the noise in the NLM operator $A$. A natural option is to choose a function that retains the eigenvalues above a given threshold. By Corollary \ref{cor:function_on_NLM}, choosing such a function reduces to choosing an appropriate scalar function.

Our first prototype for a scalar thresholding function is
\begin{equation} \label{eqn:g_thresholding_func}
g_\omega\left(x\right)=\begin{cases}
0 & x<\omega , \\
x &  \omega \le x .
\end{cases}
\end{equation}
This function satisfies the conditions of Theorem \ref{theorem1} for $0 \le \omega \le 1$, while zeroing values lower than $\omega$ and acting as the identity for higher values. However, this function is not smooth which eventually results in its slow evaluation for matrices (a detailed discussion is given in Section~\ref{sec:approximation_bound}).

Inspired by the gain function of the Butterworth filter~\cite{butterworth1930theory} (also known as maximal flat filter)
\[
f^{b}_{\omega,d}(x)=\left(1+\left(\frac{1-x}{1-\omega}\right)^{2d}\right)^{-\frac{1}{2}}, \quad x \in [0,1],
\]
which approximates the $\omega$-shifted Heaveside function on the segment $[0,1]$, we propose to use
\begin{equation}  f^{sb}_{\omega,d}(x)=x\left(1+\left(\frac{1-x}{1-\omega}\right)^{2d}\right)^{-\frac{1}{2}} , \quad x \in [0,1].
\label{eqn:sbwx}
\end{equation}
We term this function the Slanted Butterworth (SB) function. The SB function serves as an approximation to $g_\omega$ of \eqref{eqn:g_thresholding_func}. The response of both functions $f^{b}_{\omega,d}(x)$ and $f^{sb}_{\omega,d}(x)$ for various values of $d$ is shown in Figure~\ref{fig:filter_response}.

\begin{figure}
\centerline{
\begin{tabular}{cc}
\includegraphics[width=.45\textwidth]{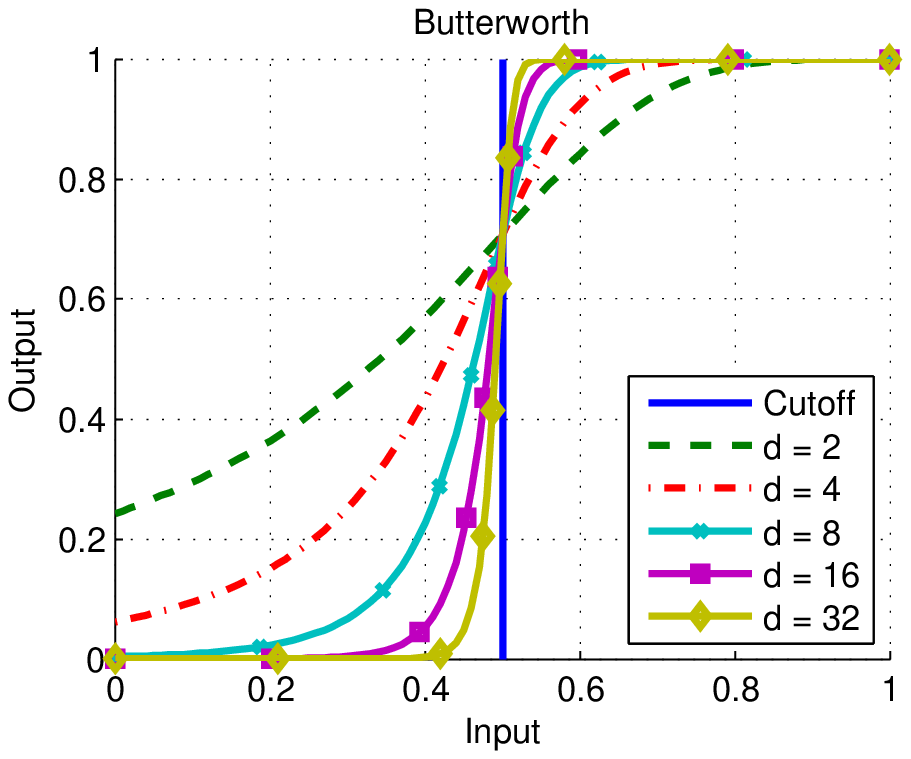} & \includegraphics[width=.45\textwidth]{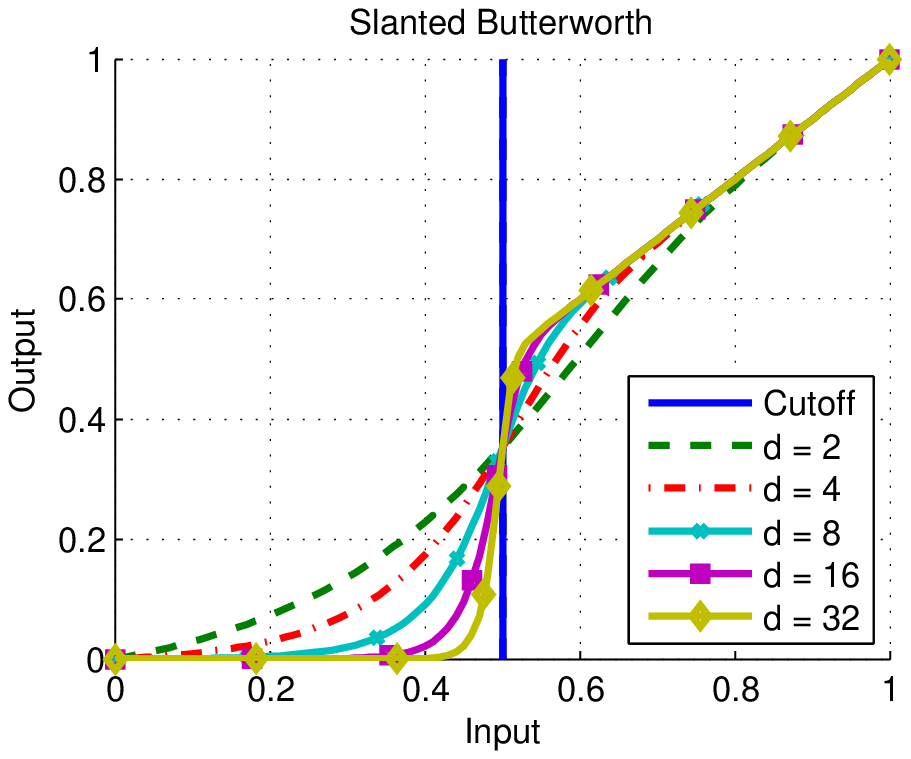}\tabularnewline
\end{tabular}}
\caption{Plots of the Butterworth and Slanted Butterworth functions, for cutoff set at 0.5 and different filter orders.}
\label{fig:filter_response}
\end{figure}

In its matrix version, the SB function becomes
\begin{equation}
f^{sb}_{\omega,d}\left(A\right)=A\left(I+\left(\frac{1}{1-\omega} (I-A) \right)^{2d}\right)^{-\frac{1}{2}},
\label{eqn:sbw}
\end{equation}
where $A\in\mathbb{R}^{n \times n}$ and $I$ is the $n\times n$ identity matrix. Combining Theorem \ref{theorem1} and Corollary \ref{cor:function_composition}, one can verify that $f^{sb}_{\omega,d}\left(A\right)$, $0 \le \omega \le 1$, is indeed an extended NLM operator and thus $f^{sb}_{\omega,d}$ is applicable for our purposes. The parameter $0\leq\omega\leq1$ is termed the filter cutoff and $d \in \mathbb{N}$ is the order of the filter.

\section{Computing low-rank approximation based on Chebyshev polynomials}\label{sec:computation}

The evaluation of the matrix function $f^{sb}_{\omega,d}(A)$ for a large matrix $A$ can be challenging. While the function essentially operates on the eigenvalues of the NLM operator $A$, the spectral decomposition of $A$ is assumed to be unavailable due to computational reasons.
Furthermore, evaluating the function $f^{sb}_{\omega,d}$ directly according to its definition~\eqref{eqn:sbw} requires computing the square root of a matrix. Evaluating the square root of a matrix is prohibitively computationally expensive, see e.g.~\cite{higham1997stable}, and thus, a direct evaluation of $f^{sb}_{\omega,d}$ is also infeasible. In addition, an important observation is that one does not need the resulting matrix $f^{sb}_{\omega,d}\left(A\right)$ but only the vector $\hat{x}=f^{sb}_{\omega,d}\left(A\right) y$, as can be seen from~\eqref{eq:A}.

\subsection{Evaluating the SB function based on Chebyshev polynomials}

The Chebyshev polynomials of the first kind of degree $n$ are defined as
\begin{equation*} 
 T_n(x) = \cos \left( n \arccos(x) \right) , \quad x \in [-1,1] , \quad n = 0,1,2,\ldots
\end{equation*}
These polynomials satisfy the three term recursion
\begin{equation} \label{eqn:three_term_recursion}
  T_n(x) = 2xT_{n-1} - T_{n-2} , \quad n=2,3,\ldots
\end{equation}
with $T_0(x)=1$ and $T_1(x) = x$, and form an orthogonal basis for  $L_2([-1,1])$ with the inner product
\begin{equation} \label{eqn:inner_product}
 \left\langle  f,g \right\rangle_T =  \frac{2}{\pi} \int_{-1}^1 \frac{f(t)g(t)}{\sqrt{1-t^2}} \, dt .
\end{equation}
The Chebyshev expansion for any $f \in L_2([-1,1])$ is thus given by
\begin{equation} \label{eqn:chebyshev_series_scalars}
f(x) = \sum_{j=0}^\infty \alpha_j T_j(x) , \quad  \alpha_0 = \frac{1}{2}\left\langle f, T_0 \right\rangle_T  , \quad  \alpha_n = \left\langle f, T_n \right\rangle_T , \quad n\in\mathbb{N} ,
\end{equation}
where the equality above holds in the $L_{2}$ sense for any $f\in L_{2}([-1,1])$, and becomes pointwise equality under additional regularity assumptions on $f$.

We will evaluate the function $f^{sb}_{\omega,d}:[0,1]\to\mathbb{R}$ by truncating the Chebyshev expansion \eqref{eqn:chebyshev_series_scalars}, that is
\begin{equation*} 
f^{sb}_{\omega,d}\left(z\right)\approx \sum_{j=0}^{N-1} \alpha_{j}T_{j}\left(y\right),
\end{equation*}
where $\alpha_j$ are the corresponding Chebychev coefficients for $f^{sb}_{\omega,d}$, and $y=2z-1$ maps $f^{sb}_{\omega,d}$ from $[0,1]$ to $[-1,1]$, as required by the definition of $\alpha_j$ above.

As shown in Corollary~\ref{cor:function_on_NLM}, applying a Chebychev expansion to a NLM matrix $A$ is reduced to applying it to the diagonal form of $A$. Thus,
\begin{equation} \label{eqn:chebyshev_element_wise}
\begin{aligned}
f^{sb}_{\omega,d}\left(A\right) &= Q^{-1} f^{sb}_{\omega,d}(\Lambda) Q  \\
&= Q^{-1} \left( \sum_{j=0}^\infty\alpha_{j}T_{j}(2\Lambda-I ) \right) Q \\
&= Q^{-1}  \left( \sum_{j=0}^\infty \diag ( \alpha_{j}T_{j} (2\lambda1-1),\ldots ,\alpha_{j}T_j(2\lambda_n-1) ) \right) Q
\end{aligned}
\end{equation}
where the second equality holds for any $A$ since $f^{sb}_{\omega,d}$ is a continuous function \cite[Chapter 8]{trefethen2000spectral}. In other words, one can see that the coefficients $\{\alpha_j\}$ required to evaluate the matrix function $f^{sb}_{\omega,d}\left(A\right)$ of~\eqref{eqn:sbw} are the same as those required to evaluate the scalar function $f^{sb}_{\omega,d}\left(x\right)$ of~\eqref{eqn:sbwx}.

For square matrices, substituting a matrix inside a polynomial is well-defined (see e.g. \cite[Chapter 11]{golub2012matrix}) and so given a truncation parameter $N \in \mathbb{N}$, the matrix SB function \eqref{eqn:sbw} is approximated by
\begin{equation}\label{eqn:cheb_mat_approx}
f^{sb}_{\omega,d}\left(A\right)\approx S_N(f^{sb}_{\omega,d},A) = \sum_{j=0}^{N} \alpha_{j}T_{j}\left(2A-I\right) .
\end{equation}

The common practice for calculating Chebyshev expansions is by using a Gauss quadrature formula for \eqref{eqn:inner_product} combined with the discrete orthogonality of the Chebyshev polynomials. Explicitly, the coefficients $\alpha_{j}$ are calculated by \eqref{eqn:chebyshev_series_scalars} combined with
\begin{equation} \label{eqn:Cheby_Coef}
\left\langle f, T_j \right\rangle_T =\frac{1}{N+1}\sum_{k=1}^{N+1} f(x_k)T_j(x_k) , \quad x_k=\cos \left( \frac{\pi(k - \frac{1}{2})}{N}\right) , \quad j=0,\ldots,N .
\end{equation}
For more details see~\cite[Chapter 5.8]{flannery1992numerical}.

Having obtained the Chebyshev expansion coefficients $\left\lbrace \alpha_j \right\rbrace_{j=0}^{N}$ of \eqref{eqn:cheb_mat_approx}, we turn to show how to efficiently evaluate $\hat{x}=S_N(f^{sb}_{\omega,d},A) y$. We do the latter by using a variant of Clenshaw's algorithm (see e.g., \cite[Chapter 5.5]{flannery1992numerical}), which is based on the three term recursion \eqref{eqn:three_term_recursion}, as described in \cite[p. 193]{flannery1992numerical}. This algorithm is adapted to matrices using the fact that each polynomial consists only of powers of $A$, which means that any two matrix polynomials commute. In addition, we exploit the fact that we need to compute only the vector $\hat{x}$ and not the matrix $S_N(f^{sb}_{\omega,d},A)$ itself, which has much larger dimensions. The pseudocode for evaluating $S_N(f^{sb}_{\omega,d},A)y$ for some vector $y$ is given in Algorithm~\ref{alg:clenshaw} in Appendix~\ref{sec:apx_algorithms}. Note that this algorithm does not require any matrix-matrix operations, and thus feasible even for large matrix sizes.

The complete denoising algorithm, which computes a denoised image based on $S_N(f^{sb}_{\omega,d},A)$ for a NLM operator $A$, is presented in Algoritm~\ref{alg:nlm_sbw} in Appendix \ref{sec:apx_algorithms}.

\subsection{Error bound for NLM operators}\label{sec:approximation_bound}  \label{sec:app_error}

We study the approximation error of the truncated Chebychev expansion for the case of matrix functions and for NLM operators, which are diagonalizable, non-symmetric matrices. The use of truncated Chebyshev expansions for matrices \eqref{eqn:cheb_mat_approx} and their approximation order has already been studied in the context of solutions for partial differential equations \cite{tal1989polynomial, trefethen2000spectral}. However, most results
assume that the approximated function is analytic, and so not applicable in our case.

In this section we use the following notation. Denote by $\| X \|_F = \sqrt{ \operatorname{tr}(XX^T) } $ the Frobenius norm of $X$ and by $\norm{X} $ its spectral norm (or the induced $2$-norm), that is the largest singular value of $X$. In addition, denote by $\kappa(X)$ the condition number of an invertible matrix $X$ (with respect to the spectral norm), that is $\kappa(X)=\| X^{-1} \| \| X \|$.

Recall that if $A$ is a NLM operator, then $A$ can be decomposed as $A=Q^{-1} \Lambda Q$ (see Lemma~\ref{lemma:NLM_matrix_properties}). In addition, $A$ is conjugated to a symmetric matrix via $D^{-\frac{1}{2}}$ with $D$ being the diagonal matrix of Definition~\ref{def:nlmeans_op}. Therefore, $Q = D^{-\frac{1}{2}} O$, where $O$ is an orthogonal matrix.

The next theorem presents the main error bound.
\begin{theorem} \label{thm:truncated_cheby}
Let $f \in C^{m+1}([-1,1])$ and let $A$ be an $n \times n$ NLM operator. Denote by
\[ e_N(f)(x) = f(x) -S_N(f,x) \]
the approximation error of the truncated Chebychev expansion of degree $N$. Then,
\[ \| e_N(f)(A) \| \leq C \frac{1}{(N - m)^m} \kappa(D^{1/2}), \]
where $C = \frac{2}{\pi m} \|f^{(m+1)}\|_T  $ is a constant that depends on the $m+1$ derivative of $f$ but independent of $n$ and $A$, with
\[ \| f^{(m+1)} \|_T = \int_{-1}^1 \frac{ \abs{f^{(m+1)}(t)} }{\sqrt{1-t^2}} \, dt . \]
\end{theorem}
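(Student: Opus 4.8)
The plan is to reduce the matrix-valued bound to a scalar estimate on the truncation error of the Chebyshev expansion, exploiting the diagonalizability of $A$. Since both $f(A)$ and the partial sum $S_N(f,A)$ are matrix functions of $A$, Corollary~\ref{cor:function_on_NLM} gives $e_N(f)(A) = Q^{-1} e_N(f)(\Lambda) Q$, a diagonal conjugation. Taking spectral norms and using submultiplicativity yields $\|e_N(f)(A)\| \le \kappa(Q)\,\|e_N(f)(\Lambda)\|$. The first gain comes from rewriting $\kappa(Q)$: using the decomposition $Q = D^{-1/2}O$ with $O$ orthogonal, which is recorded just before the theorem, orthogonal invariance of the spectral norm gives $\|Q\| = \|D^{-1/2}\|$ and $\|Q^{-1}\| = \|D^{1/2}\|$, hence $\kappa(Q) = \kappa(D^{1/2})$. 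Because $\Lambda$ is diagonal with entries in $[0,1] \subseteq [-1,1]$, the matrix $e_N(f)(\Lambda)$ is diagonal and $\|e_N(f)(\Lambda)\| = \max_i \abs{e_N(f)(\lambda_i)} \le \|e_N(f)\|_\infty$, where $\|e_N(f)\|_\infty$ denotes the supremum over $[-1,1]$. Thus the whole problem collapses to bounding the scalar Chebyshev truncation error $\|e_N(f)\|_\infty$.

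It remains to prove the scalar estimate $\|e_N(f)\|_\infty \le C\,(N-m)^{-m}$. First I would bound the Chebyshev coefficients $\alpha_k = \langle f, T_k\rangle_T$ through their decay rate for a $C^{m+1}$ function. Writing $\alpha_k^{(r)}$ for the coefficients of $f^{(r)}$, the standard differentiation relation $2k\,\alpha_k^{(r)} = \alpha_{k-1}^{(r+1)} - \alpha_{k+1}^{(r+1)}$, a consequence of the three-term recursion~\eqref{eqn:three_term_recursion}, lets me trade one factor of $k$ in the denominator for one derivative. Iterating it $m+1$ times expresses $\alpha_k$ as a $2^{m+1}$-term combination of coefficients of $f^{(m+1)}$ with indices within $m$ of $k$, each term carrying a denominator that is a product of $m+1$ factors no smaller than $k-m$. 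Every coefficient of $f^{(m+1)}$ is controlled by $\abs{\alpha_j^{(m+1)}} = \abs{\langle f^{(m+1)}, T_j\rangle_T} \le \tfrac{2}{\pi}\|f^{(m+1)}\|_T$, using $\abs{T_j}\le 1$ and the definition of $\|\cdot\|_T$. This produces the coefficient bound $\abs{\alpha_k} \le \tfrac{2}{\pi}\|f^{(m+1)}\|_T\,(k-m)^{-(m+1)}$ for $k > m$. Finally, since $\abs{T_k}\le 1$, the truncation error is bounded by the tail sum $\|e_N(f)\|_\infty \le \sum_{k=N+1}^\infty \abs{\alpha_k}$, and comparing this sum to $\int_{N-m}^\infty x^{-(m+1)}\,dx = \tfrac{1}{m}(N-m)^{-m}$ gives exactly $\|e_N(f)\|_\infty \le \tfrac{2}{\pi m}\|f^{(m+1)}\|_T\,(N-m)^{-m} = C\,(N-m)^{-m}$.

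Combining the two steps gives the claim,
\[ \|e_N(f)(A)\| \le \kappa(D^{1/2})\,\|e_N(f)\|_\infty \le C\,\frac{1}{(N-m)^m}\,\kappa(D^{1/2}). \]
I expect the main obstacle to lie entirely in the scalar estimate: the matrix reduction is essentially bookkeeping once $\kappa(Q)=\kappa(D^{1/2})$ is observed, but the iterated differentiation relation must be handled carefully to keep track of the $2^{m+1}$ terms and to verify that each denominator is bounded below by $(k-m)^{m+1}$, and the tail-sum comparison must be set up so that the exponent drops precisely from $m+1$ to $m$ while the constant stays $\tfrac{2}{\pi m}$. A secondary point requiring care is justifying the termwise operations, namely that $f(A)$ equals its Chebyshev series applied to $A$ and that the series may be differentiated and rearranged; this rests on the $C^{m+1}$ regularity of $f$ together with the continuity argument already invoked in~\eqref{eqn:chebyshev_element_wise}.
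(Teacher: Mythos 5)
Your proposal is correct, and its matrix half coincides with the paper's own argument: both diagonalize the error as $e_N(f)(A)=Q^{-1}e_N(f)(\Lambda)Q$, apply submultiplicativity of the spectral norm, and convert $\|Q\|\,\|Q^{-1}\|$ into $\kappa(D^{1/2})$ through the factorization $Q=D^{-1/2}O$ with $O$ orthogonal, the diagonal error matrix being controlled by the scalar sup-norm error. The genuine difference is in the scalar estimate $\|e_N(f)\|_\infty \le \frac{2}{\pi m}\|f^{(m+1)}\|_T\,(N-m)^{-m}$: the paper does not prove this at all, but imports it as a known result (Theorem 4.3 of \cite{trefethen2008gauss}), whereas you derive it from first principles. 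Your derivation is sound and is in substance the proof of the cited theorem: iterating the coefficient relation $2k\,\alpha_k^{(r)}=\alpha_{k-1}^{(r+1)}-\alpha_{k+1}^{(r+1)}$ a total of $m+1$ times yields $2^{m+1}$ terms whose denominators each contain $m+1$ factors of the form $2j$ with $j\ge k-m$, so the factor $2^{m+1}$ cancels against the term count and gives $|\alpha_k|\le \frac{2}{\pi}\|f^{(m+1)}\|_T\,(k-m)^{-(m+1)}$; the tail sum $\sum_{k>N}|\alpha_k|$ compared against $\int_{N-m}^\infty x^{-(m+1)}\,dx$ then drops the exponent to $m$ and produces exactly the factor $\frac{1}{m}$, so the constant $C=\frac{2}{\pi m}\|f^{(m+1)}\|_T$ comes out right. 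What each route buys: the paper's citation keeps the proof short and delegates the coefficient bookkeeping to the literature; your version is self-contained, makes visible where $f\in C^{m+1}$ and $N>m$ actually enter (positivity of the factors $k-m$ and convergence of the tail), and verifies the constant rather than asserting it. Two points to handle carefully in a full write-up: the differentiation relation is used only at indices $j\ge k-m\ge N+1-m\ge 1$, so the degenerate form of the recurrence at index $0$ never intervenes in the denominators; and the pointwise identity $f=\sum_j\alpha_jT_j$, needed to bound $\|e_N(f)\|_\infty$ by the tail of coefficients, follows from the absolute and uniform convergence that your own coefficient bound supplies, together with continuity of $f$ --- the same kind of justification the paper invokes around \eqref{eqn:chebyshev_element_wise}.
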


\begin{proof}
Since $A$ is diagonalizable, and similarly to Corollary \ref{cor:function_on_NLM} and \eqref{eqn:chebyshev_element_wise},
\begin{equation*}
e_N(f)(A)=f(A)-\sum_{j=0}^N \alpha_jT_j(A)=Q^{-1}(f(\Lambda)-\sum_{j=0}^N \alpha_jT_j(\Lambda))Q=Q^{-1}E_\Lambda Q,
\end{equation*}
where $E_\Lambda$ is the diagonal matrix $E_\Lambda=f(\Lambda)-\sum_{j=0}^N \alpha_jT_j(\Lambda)$. For all submultiplicative norms, including the spectral norm, we have that
\begin{equation}\label{eq2}
\| e_N(f)(A) \| \leq \|Q^{-1}\| \| E_\Lambda \| \| Q \|.
\end{equation}
By the Chebychev approximation bound \cite[Theorem 4.3]{trefethen2008gauss} for scalar functions,
\[ \norm{ \left(E_\Lambda\right)_{ii} } \leq \frac{C}{(N - m)^m} , \quad   1 \le i \le n , \]
where $ C = \frac{2}{\pi m}\|f^{(m+1)}\|_T $. The spectral norm of a diagonal matrix is given by its element on the diagonal having maximal absolute value and thus
 \begin{equation}\label{eq3}
\norm{ E_\Lambda } \leq \frac{C}{(N - m)^m}.
\end{equation}
Now, it remains to find a bound on $\|Q^{-1}\| \| Q \|$. However,
\begin{equation}
\|Q^{-1}\| \| Q \| = \| O^{*} D^{1/2} \| \| D^{-1/2} O \|.
\end{equation}
Using again the submultiplicativity property we get
\begin{equation}\label{eq6}
\| Q^{-1} \|  \| Q \| \leq \| O \| \| O^* \| \| D^{1/2} \| \| D^{-1/2} \|=  \kappa(O)\kappa(D^{1/2}),
\end{equation}
where in the last equality we have used the orthogonality of $O$. By the same property, we have that $\| O \|=1$ and $\kappa(O)=1$. Combining the latter with \eqref{eq2}, \eqref{eq3}, and \eqref{eq6} concludes the proof.
\end{proof}

The proof of Theorem \ref{thm:truncated_cheby} holds with minor adjustments to various submultiplicative norms. However, for one particular norm a good bound can be achieved directly, as explained in the following remark.
\begin{remark} \label{rmk:Fro_remark}
The error bound of the truncated Chebyshev expansion for matrices, as appears in Theorem \ref{thm:truncated_cheby}, can be also expressed in terms of the Frobenius norm since $\| X \|_F \leq \sqrt{n} \| X \|$. Therefore, we immediately get
\[ \| e_N(f)(A) \|_F \leq C \frac{\sqrt{n}}{(N - m)^m} \kappa_F(D^{1/2}), \]
where $\kappa_F(D^{1/2}) = \norm{D^{1/2}}_F \norm{D^{-1/2}}_F $.
\end{remark}

The bound of Theorem \ref{thm:truncated_cheby} indicates that the approximation error decays as $\frac{1}{N^m}$, where $m$ is related to the smoothness class of the function $f$, and $N$ is the number of terms in the truncated Chebyshev expansion \eqref{eqn:cheb_mat_approx}. However, there are two additional factors that appear on top of this decay rate. The first one is the constant $C$, governed by $\norm{f^{(m+1)}}_T$. This constant can be large for a function whose $m+1$ derivative has large magnitude. The second one is the condition number of $D^{1/2}$. This condition number can be easily calculated numerically since $D$ is a diagonal matrix and thus
\begin{equation} \label{eqn:kappa_spectral}
 \kappa(D^{1/2}) = \frac{\max_i \sqrt{D_{ii}} }{\min_i \sqrt{D_{ii}}} .
\end{equation}
Similarly, for the Frobenius norm we have
\begin{equation} \label{eqn:kappa_fro}
 \kappa_F(D^{1/2}) = \sqrt{\frac{\sum_i D_{ii} }{\sum_i D^{-1}_{ii} }  }  .
\end{equation}
Moreover, we can bound \eqref{eqn:kappa_spectral} and \eqref{eqn:kappa_fro} a priori, as seen next.

\begin{lemma} \label{lemma:D_W_bound}
Let $D$ be the diagonal matrix from Definition \ref{def:nlmeans_op}. Then,
\[  \kappa(D^{1/2}) \leq \sqrt{n}  \qquad \text {and} \qquad  \kappa_F(D^{1/2}) \leq n .\]
\end{lemma}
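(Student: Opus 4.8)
The plan is to reduce everything to elementary entry-wise bounds on the diagonal entries of $D$, and then substitute these into the explicit formulas \eqref{eqn:kappa_spectral} and \eqref{eqn:kappa_fro} that have already been derived for $\kappa(D^{1/2})$ and $\kappa_F(D^{1/2})$. Since both condition numbers are expressed purely in terms of the scalars $D_{ii}$, once these scalars are pinned down between two constants the lemma becomes a short computation.

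First I would establish the key estimate $1 \le D_{ii} \le n$ for every $i$. The upper bound is immediate: each weight $W_{ij} = \exp(-\|v(N_i^Y) - v(N_j^Y)\|_2^2 / (2h^2))$ has a non-positive exponent, so $0 < W_{ij} \le 1$, and summing the $n$ entries of row $i$ gives $D_{ii} = \sum_{j=1}^n W_{ij} \le n$. For the lower bound, observe that the diagonal weight satisfies $W_{ii} = K_h(0) = 1$; since every other weight is strictly positive, $D_{ii} \ge W_{ii} = 1$. This yields $1 \le D_{ii} \le n$ for all $i$, which is the entire substance of the argument.

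With these bounds in hand, the spectral estimate follows by substitution into \eqref{eqn:kappa_spectral}: using $\max_i D_{ii} \le n$ and $\min_i D_{ii} \ge 1$,
\[ \kappa(D^{1/2}) = \frac{\max_i \sqrt{D_{ii}}}{\min_i \sqrt{D_{ii}}} \le \frac{\sqrt{n}}{\sqrt{1}} = \sqrt{n}. \]
For the Frobenius estimate I would bound the numerator and denominator of \eqref{eqn:kappa_fro} separately: the upper bound gives $\sum_i D_{ii} \le n \cdot n = n^2$, while $D_{ii} \le n$ forces $D_{ii}^{-1} \ge 1/n$, so that $\sum_i D_{ii}^{-1} \ge n \cdot (1/n) = 1$. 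Combining these,
\[ \kappa_F(D^{1/2}) = \sqrt{\frac{\sum_i D_{ii}}{\sum_i D_{ii}^{-1}}} \le \sqrt{\frac{n^2}{1}} = n. \]

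There is no genuine obstacle here: the whole content is the pair of entry-wise bounds $1 \le D_{ii} \le n$, and the only point that requires care is correctly attributing each bound to a property of the Gaussian kernel in Definition~\ref{def:nlmeans_op} — the upper bound to $K_h \le 1$ (the exponent is never positive), and the lower bound to the diagonal value $W_{ii} = K_h(0) = 1$ together with positivity of the remaining weights. Everything after that is direct substitution into the precomputed expressions for the two condition numbers.
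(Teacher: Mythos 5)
Your proof is correct and follows essentially the same route as the paper's: both derive the entry-wise bounds $1 \le D_{ii} \le n$ from the Gaussian kernel properties ($W_{ii}=K_h(0)=1$ and $0 \le W_{ij} \le 1$), then substitute directly into the explicit formulas \eqref{eqn:kappa_spectral} and \eqref{eqn:kappa_fro}. The only cosmetic difference is that you spell out the intermediate steps (e.g.\ $D_{ii}^{-1} \ge 1/n$ and the $n\cdot(1/n)=1$ sum bound) slightly more explicitly than the paper does.
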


\begin{proof}
$K_h$ in Definition~\ref{def:nlmeans_op} is a Gaussian kernel and so $W_{ii}=1$ and $0 \leq W_{ij} \leq 1$. Thus, $ 1 \leq D_{ii} \leq n$ and $ 1 \leq D^{1/2}_{ii} \leq \sqrt{n}$. Therefore, by \eqref{eqn:kappa_spectral} we have $\kappa_2(D^{1/2}) \leq \sqrt{n}$.

For the Frobenius norm, it follows that $ \frac{1}{n} \leq D^{-1}_{ii} \leq 1$ and thus $\sum_i D_{ii} \le n^2$ and $\sum_i D^{-1}_{ii} \ge 1$. Thus, by \eqref{eqn:kappa_fro}, $ \kappa_F(D^{1/2}) \leq n$.
\end{proof}

Equipped with Lemma~\ref{lemma:D_W_bound}, we conclude that
\begin{corollary}
In the notation of Theorem~\ref{thm:truncated_cheby} we have
\[ \| e_N(f)(A) \| \leq C \frac{ \sqrt{n}}{(N - m)^m}. \]
\end{corollary}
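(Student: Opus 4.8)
The plan is to obtain the stated estimate by a direct substitution that chains together the two results already established, so the proof amounts to little more than bookkeeping. Theorem~\ref{thm:truncated_cheby} provides the bound
\[ \| e_N(f)(A) \| \leq C \frac{1}{(N - m)^m} \kappa(D^{1/2}), \]
in which the only factor depending explicitly on the dimension $n$ (beyond what is already encoded in $A$) is the condition number $\kappa(D^{1/2})$. The remaining task is therefore to replace this condition number by an explicit, dimension-only quantity.

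First I would invoke Lemma~\ref{lemma:D_W_bound}, which supplies exactly the needed a priori estimate $\kappa(D^{1/2}) \leq \sqrt{n}$ in the spectral norm. Since every factor on the right-hand side of the bound in Theorem~\ref{thm:truncated_cheby} is non-negative, substituting this inequality for $\kappa(D^{1/2})$ preserves the direction of the inequality and yields
\[ \| e_N(f)(A) \| \leq C \frac{1}{(N - m)^m} \sqrt{n} = C \frac{\sqrt{n}}{(N - m)^m}, \]
which is precisely the claimed estimate.

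There is essentially no obstacle here: the corollary is a one-line consequence of combining Theorem~\ref{thm:truncated_cheby} with the spectral-norm half of Lemma~\ref{lemma:D_W_bound}. The only point to keep straight is that one must use the spectral-norm bound $\kappa(D^{1/2}) \leq \sqrt{n}$ rather than its Frobenius-norm counterpart $\kappa_F(D^{1/2}) \leq n$, so that the dimensional factor matches the spectral norm appearing on the left-hand side; employing the Frobenius bound instead would reproduce the companion estimate of Remark~\ref{rmk:Fro_remark}.
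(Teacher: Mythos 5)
Your proposal is correct and is exactly the paper's argument: the corollary is stated immediately after Lemma~\ref{lemma:D_W_bound} precisely as the one-line substitution of $\kappa(D^{1/2}) \leq \sqrt{n}$ into the bound of Theorem~\ref{thm:truncated_cheby}. Your closing remark about using the spectral-norm half of the lemma (rather than the Frobenius bound, which would instead give the estimate in Remark~\ref{rmk:Fro_remark}) matches the paper's treatment as well.
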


Note that by Remark \ref{rmk:Fro_remark}, a bound on the Frobenius norm of the approximation error $e_N(f)(A)$ is given by
\[ \| e_N(f)(A) \|_F \leq C \frac{n^{1.5}}{(N - m)^m} . \]

To conclude the above discussion, we evaluate the approximation error numerically, depicted in Figure~\ref{fig:approx_accuracy}, where we use $50$ random, positive diagonalizable, and non-symmetric matrices, whose rank equals to $1000$. We average the relative approximation errors defined as $ \norm{E_N(f^{sw}_{\omega,d},A)} \Big/ \norm{f^{sw}_{\omega,d}(A)}$ over all 50 matrices, where the truncated Chebyshev series is evaluated by Algorithm \ref{alg:clenshaw}. The cutoff parameter $\omega$ of $f^{sw}_{\omega,d}$ has been set to $\omega=0.7$. The plotted curves correspond to the orders $4$, $8$ and $16$ of the function $f^{sw}_{\omega,d}$. The results clearly show that as the derivative grows, the error rate increases, since as $d$ gets larger so are the derivatives of $f^{sw}_{\omega,d}$.

\begin{figure}
\centerline{\includegraphics[width=.5\textwidth]{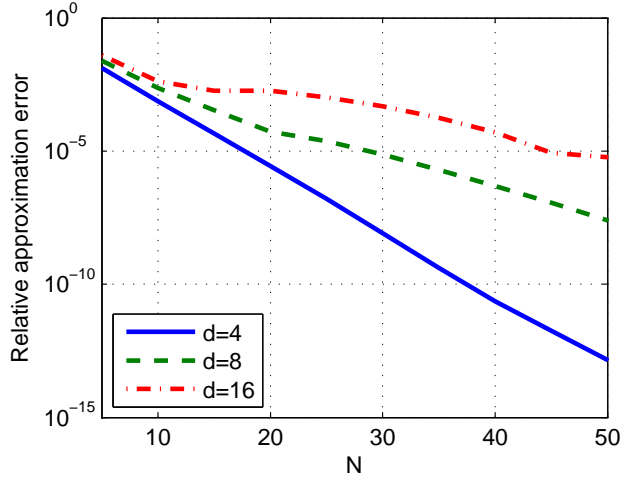}}
\caption{Relative approximation errors for the truncated Chebyshev expansion of $f^{sw}_{\omega,d}$ with a fixed $\omega$  and different values of $d$.}
\label{fig:approx_accuracy}
\end{figure}

\section{Numerical experiments}\label{sec:experiments}

The advantage of improving the NLM operator by using its low-rank approximation has already been argued in Section \ref{sec:spectrum shaping}. In this section we demonstrate this advantage by numerical examples, and in addition, study the effect of the two main parameters of our method -- the kernel width $h$ from Definition~\ref{def:nlmeans_op} and the filter cutoff $\omega$ from~\eqref{eqn:sbw}. As a reference, we use the naive approach for denoising a NLM operator, by computing its truncated eigenvalues decomposition (Algorithm~\ref{alg:nlm_eig}).

The numerical experiments are performed
on real images corrupted by synthetically added white Gaussian noise (as described in Section \ref{sec:nlmeans}) of varying levels. The clean test images of size $120 \times 120$ pixels are shown in Figure~\ref{fig:clean_images}.

\begin{figure}

\centering%
\subfloat[Mandril]{
\includegraphics[height=0.14\textheight]{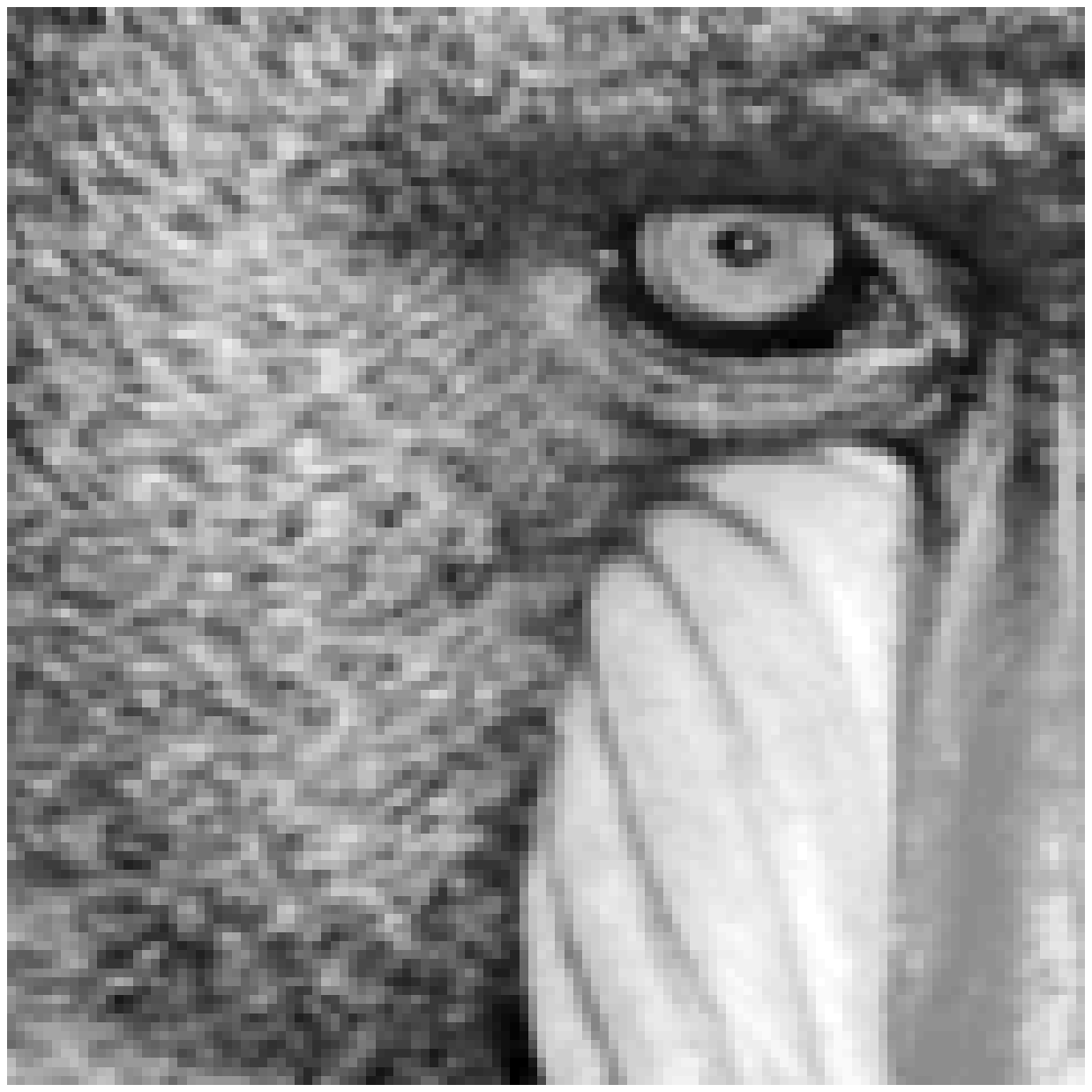}
}%
\subfloat[Barbara]{
\includegraphics[height=0.14\textheight]{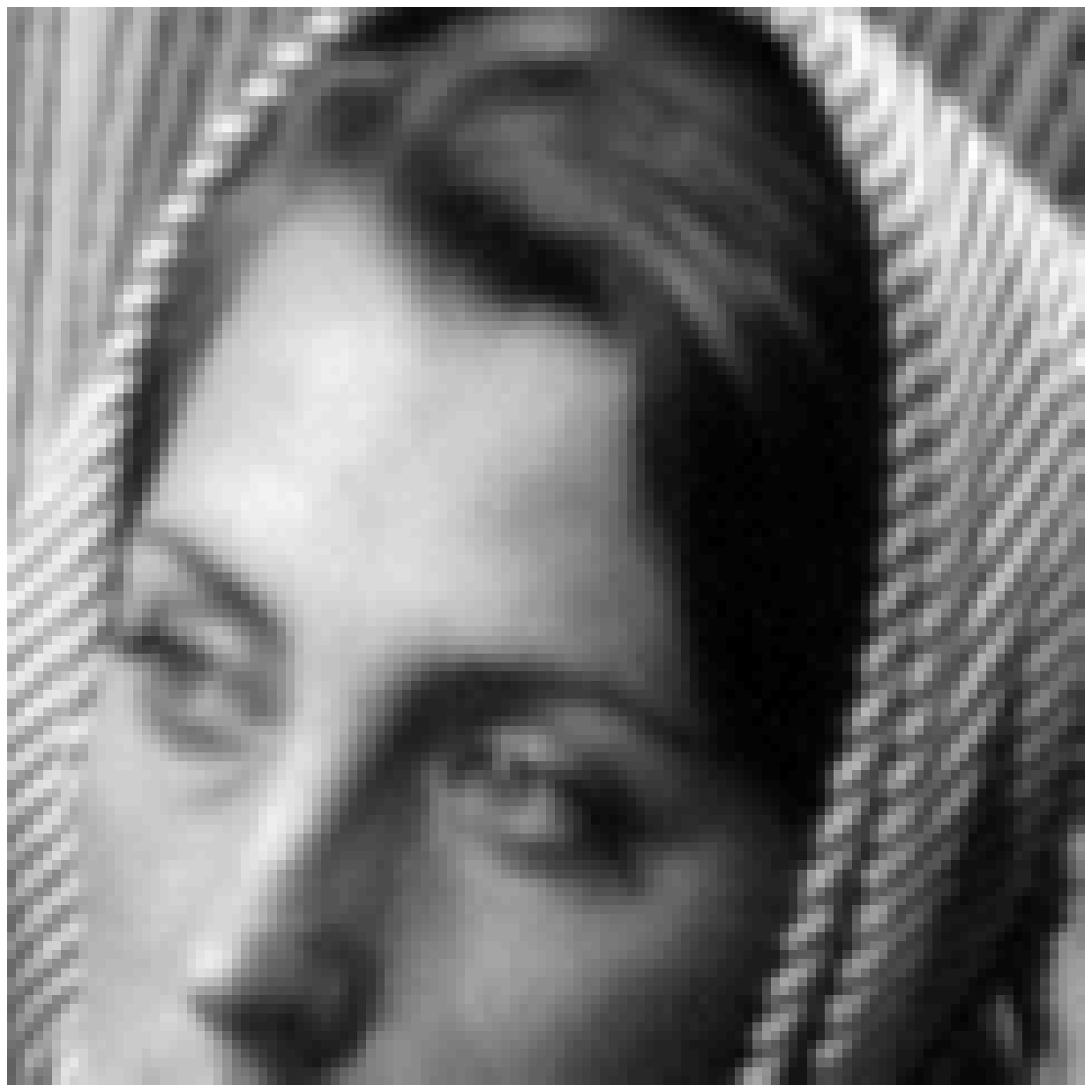}
}%
\subfloat[Roof]{
\includegraphics[height=0.14\textheight]{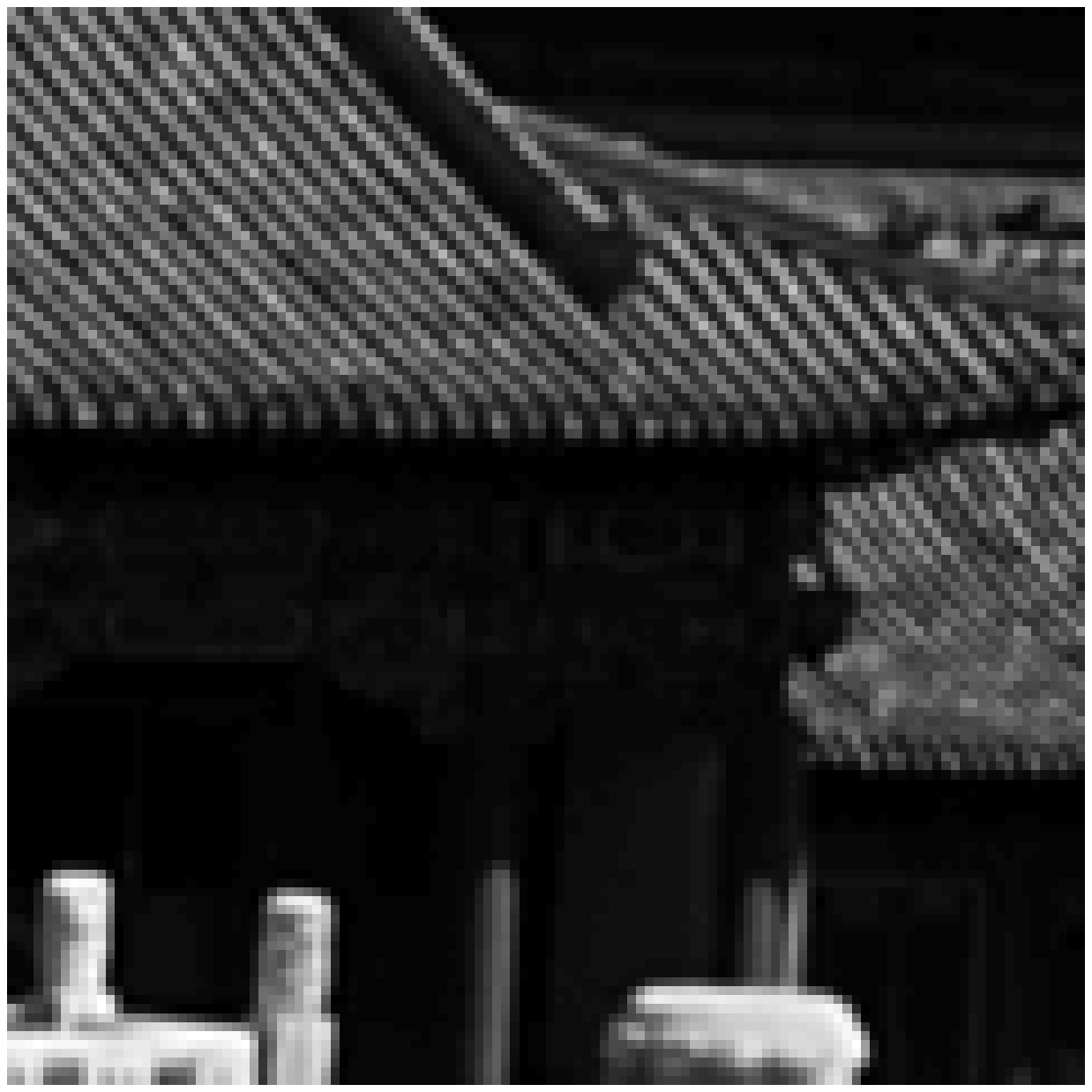}
}\\
\subfloat[Lena]{
\includegraphics[height=0.14\textheight]{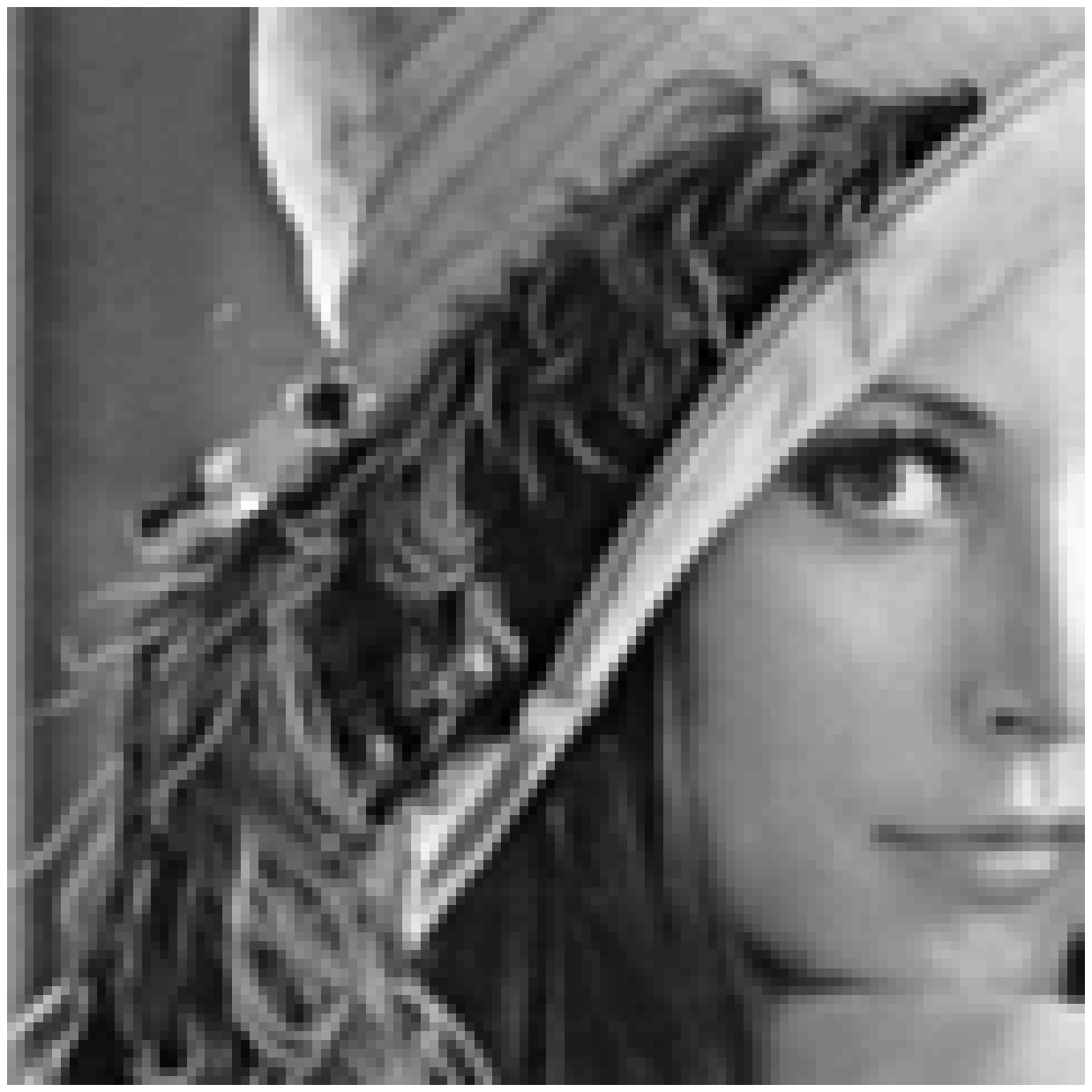}
}%
\subfloat[Clown]{
\includegraphics[height=0.14\textheight]{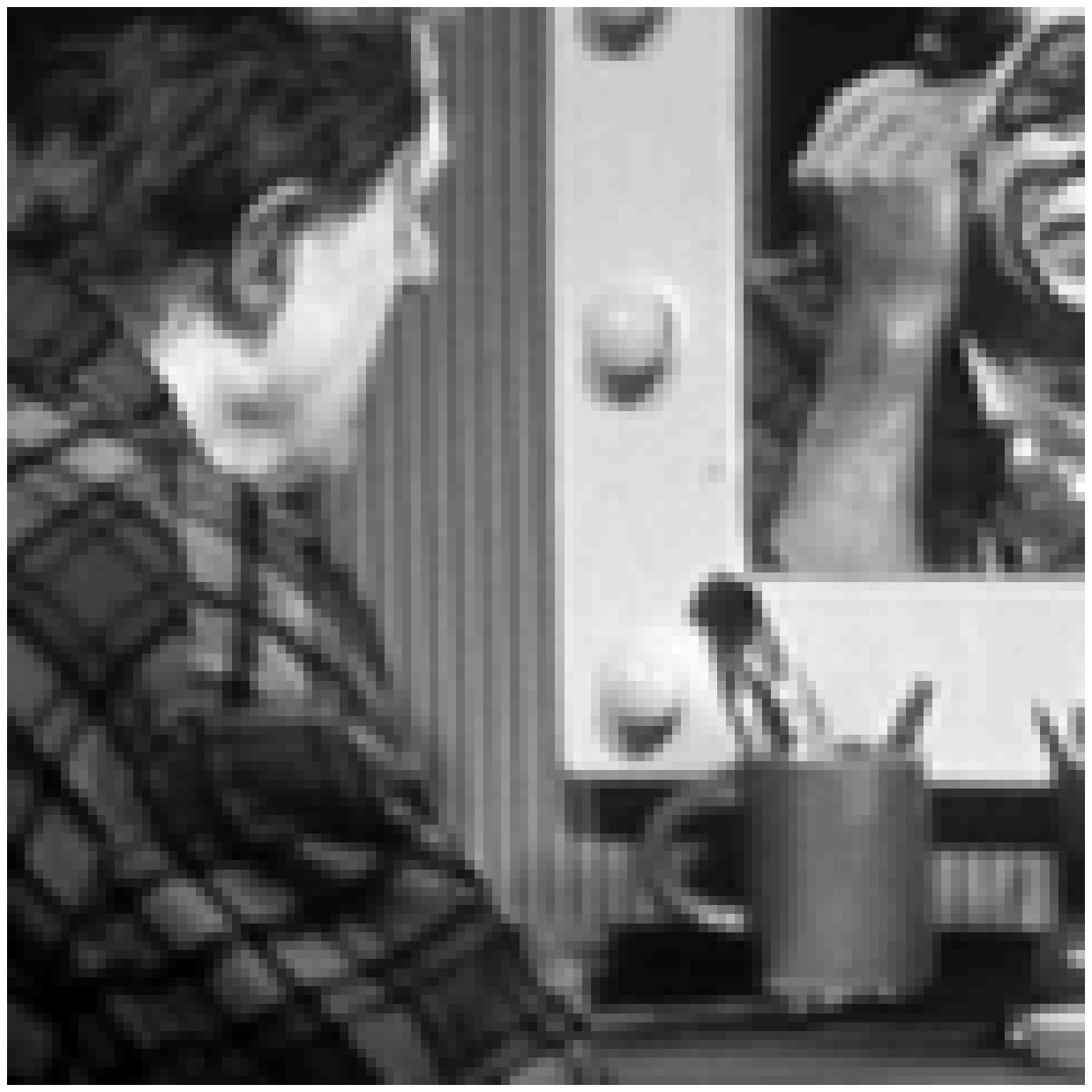}
}%
\subfloat[Couple]{
\includegraphics[height=0.14\textheight]{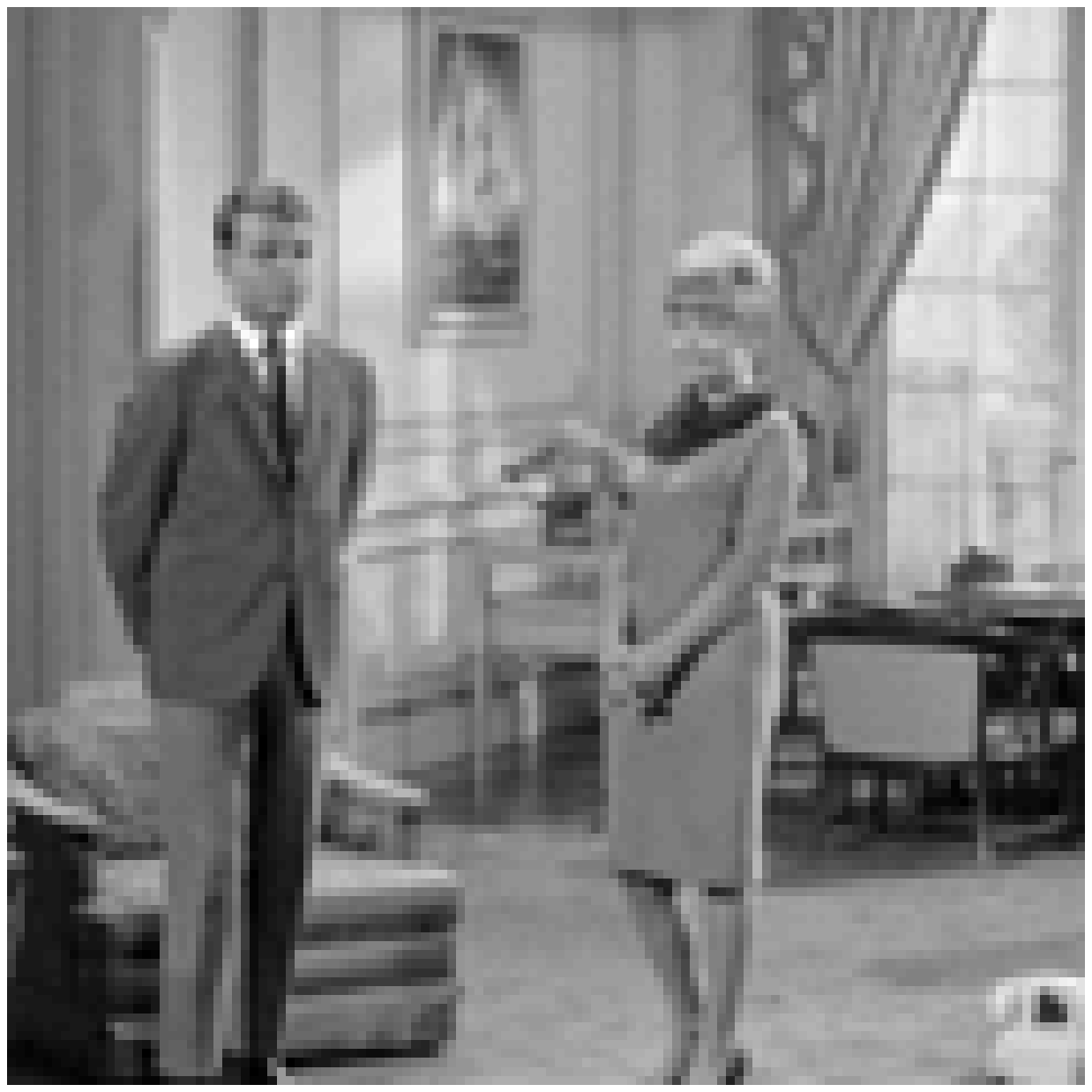}
}\\
\subfloat[Pentagon]{
\includegraphics[height=0.14\textheight]{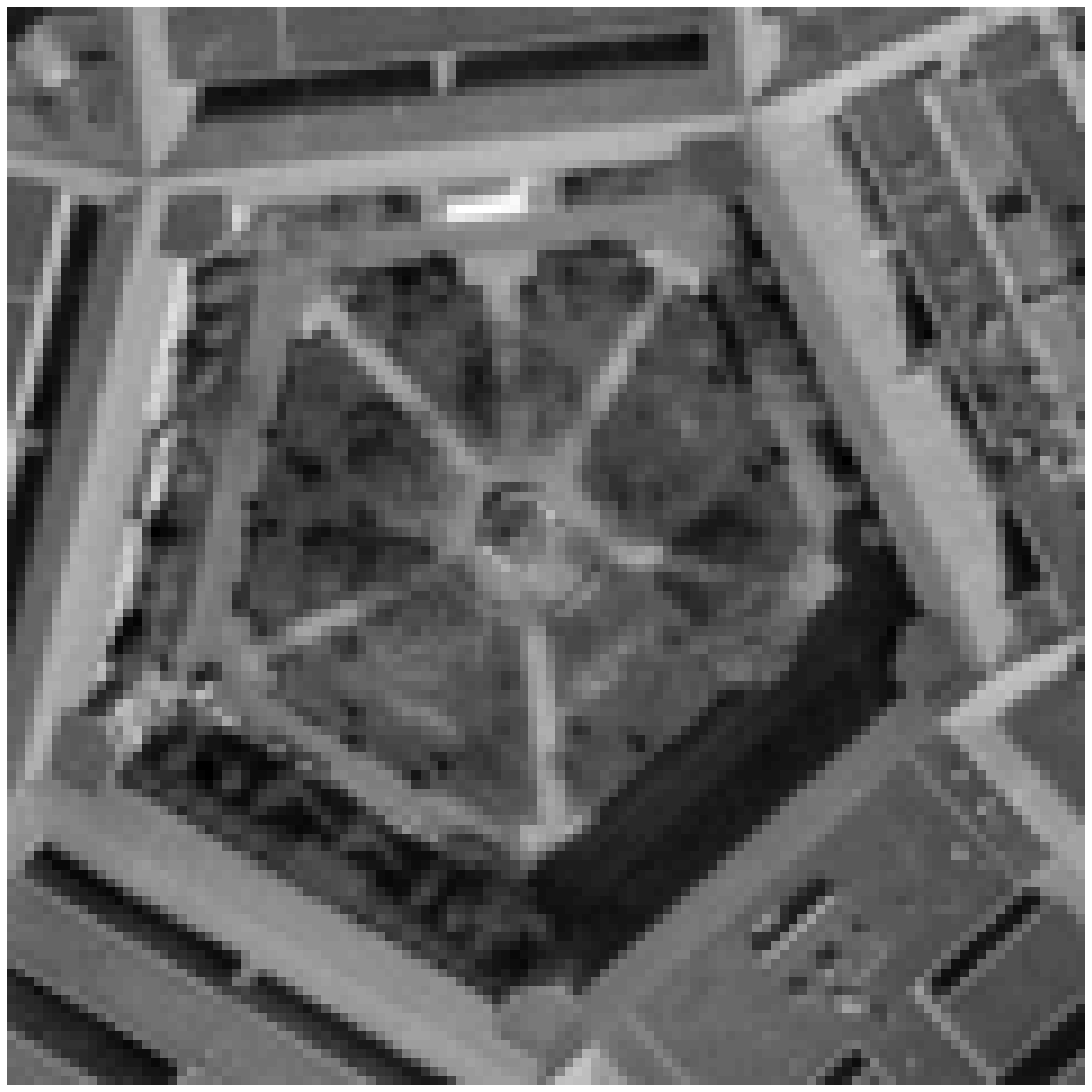}
}%
\subfloat[Woman\_Blonde]{
\includegraphics[height=0.14\textheight]{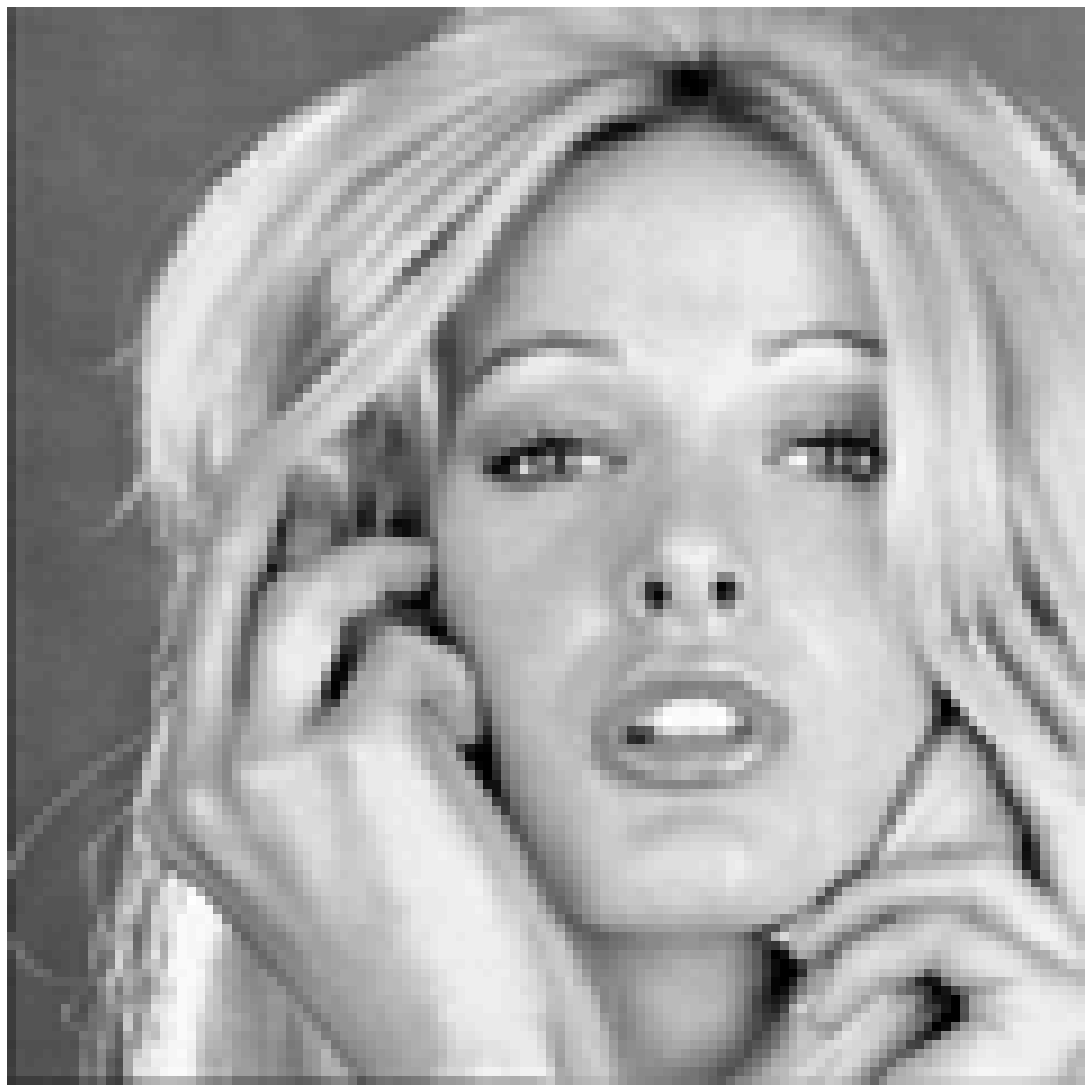}
}%
\subfloat[Man]{
\includegraphics[height=0.14\textheight]{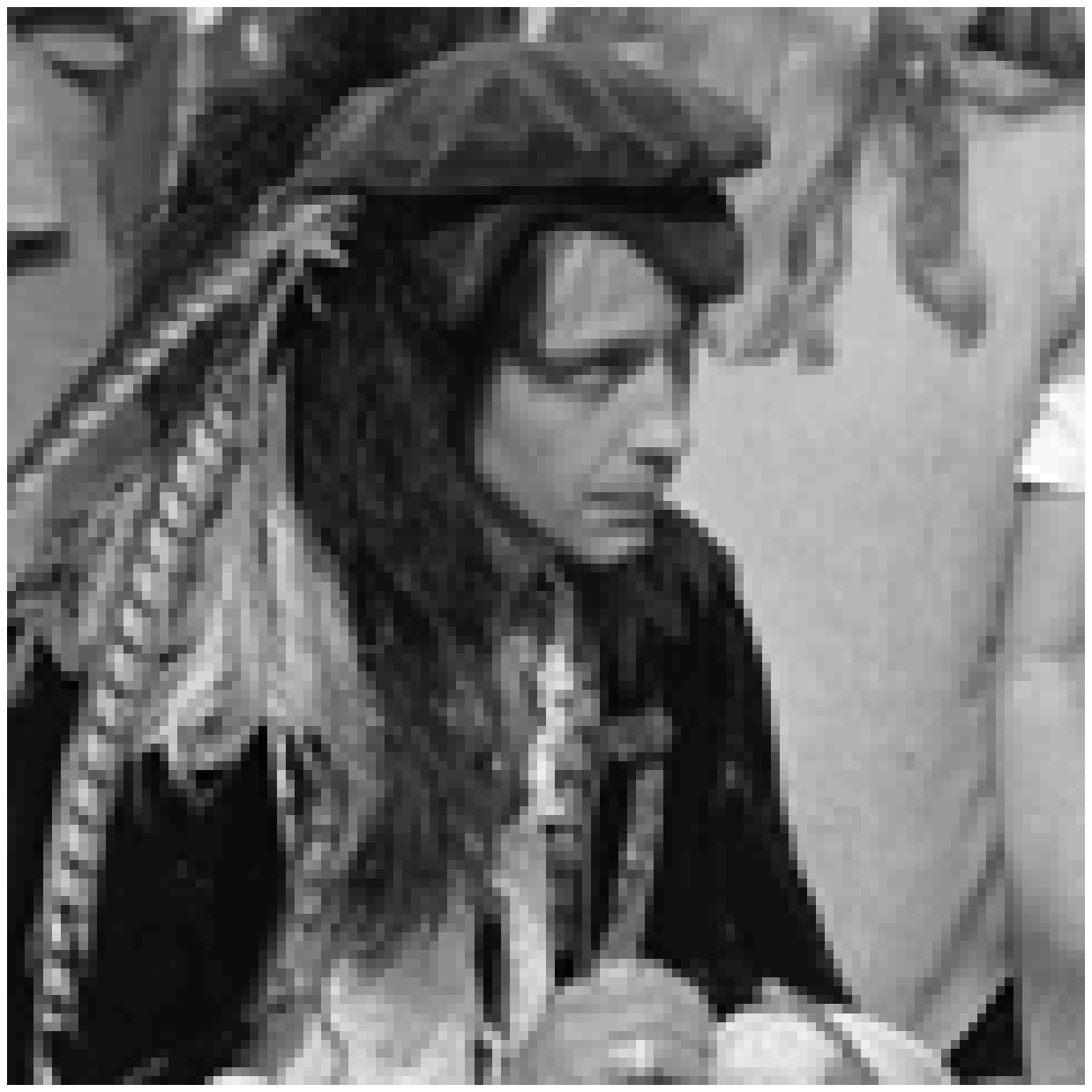}
}\\
\subfloat[Lake]{
\includegraphics[height=0.14\textheight]{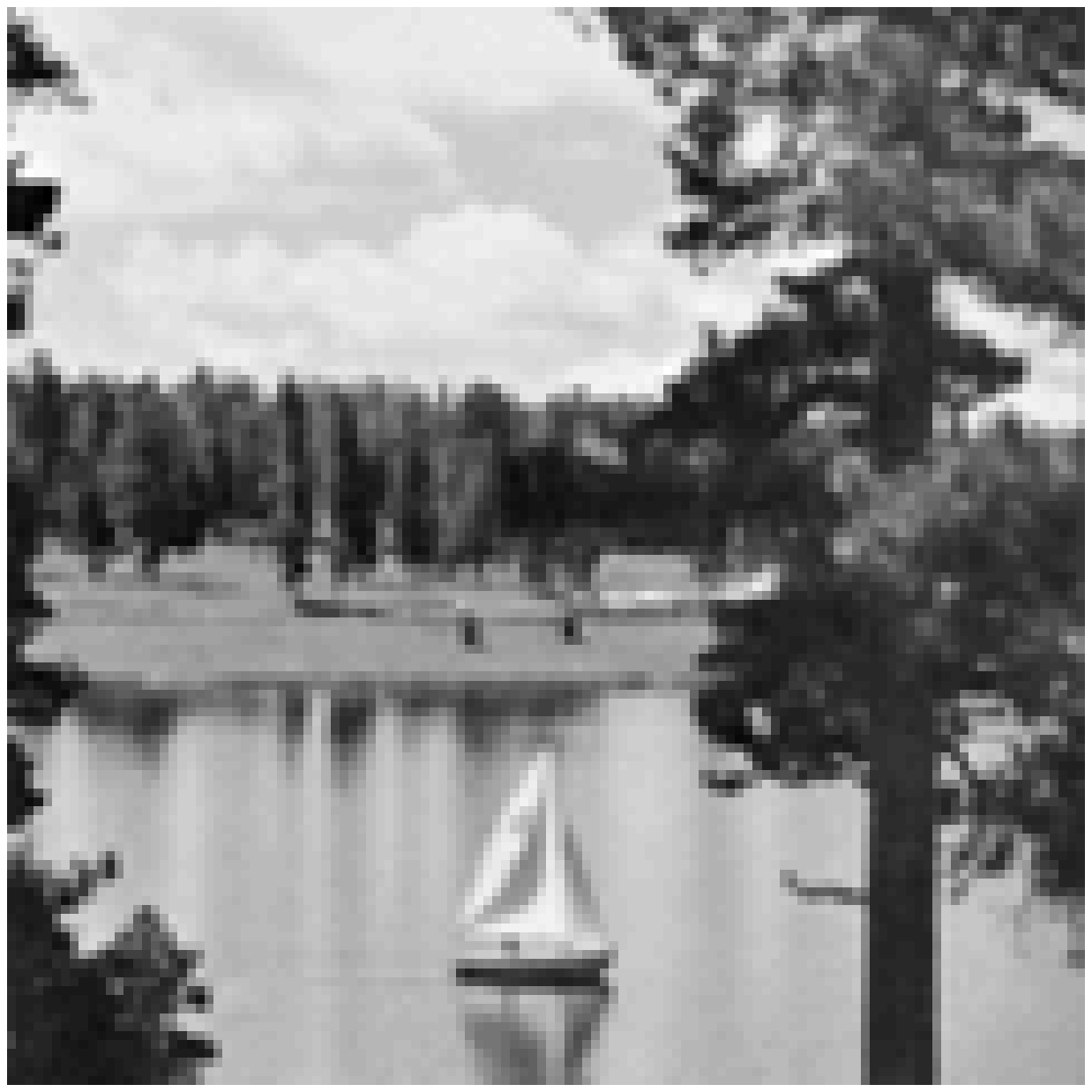} }%
\subfloat[Hill]{
\includegraphics[height=0.14\textheight]{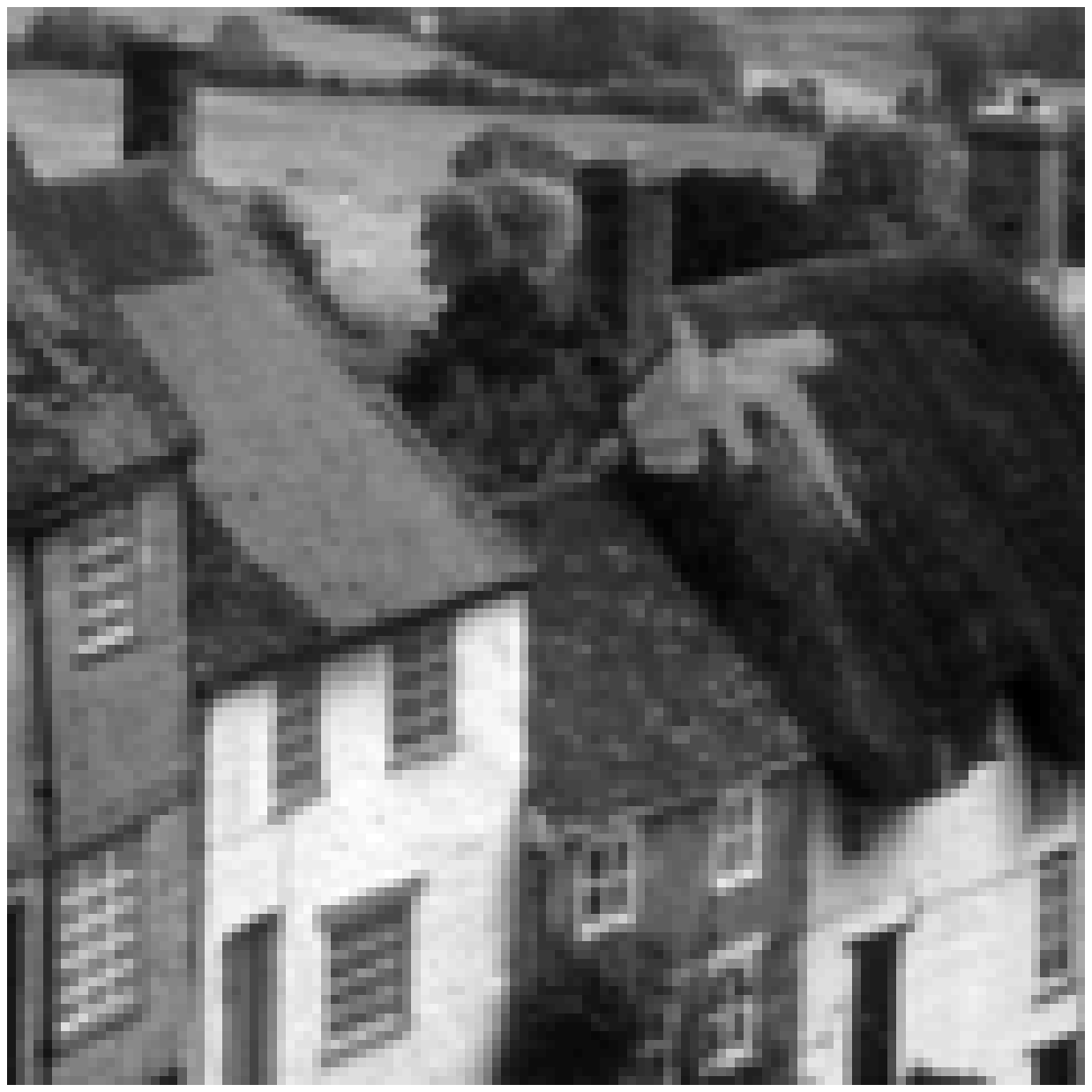}
}%
\subfloat[House]{
\includegraphics[height=0.14\textheight]{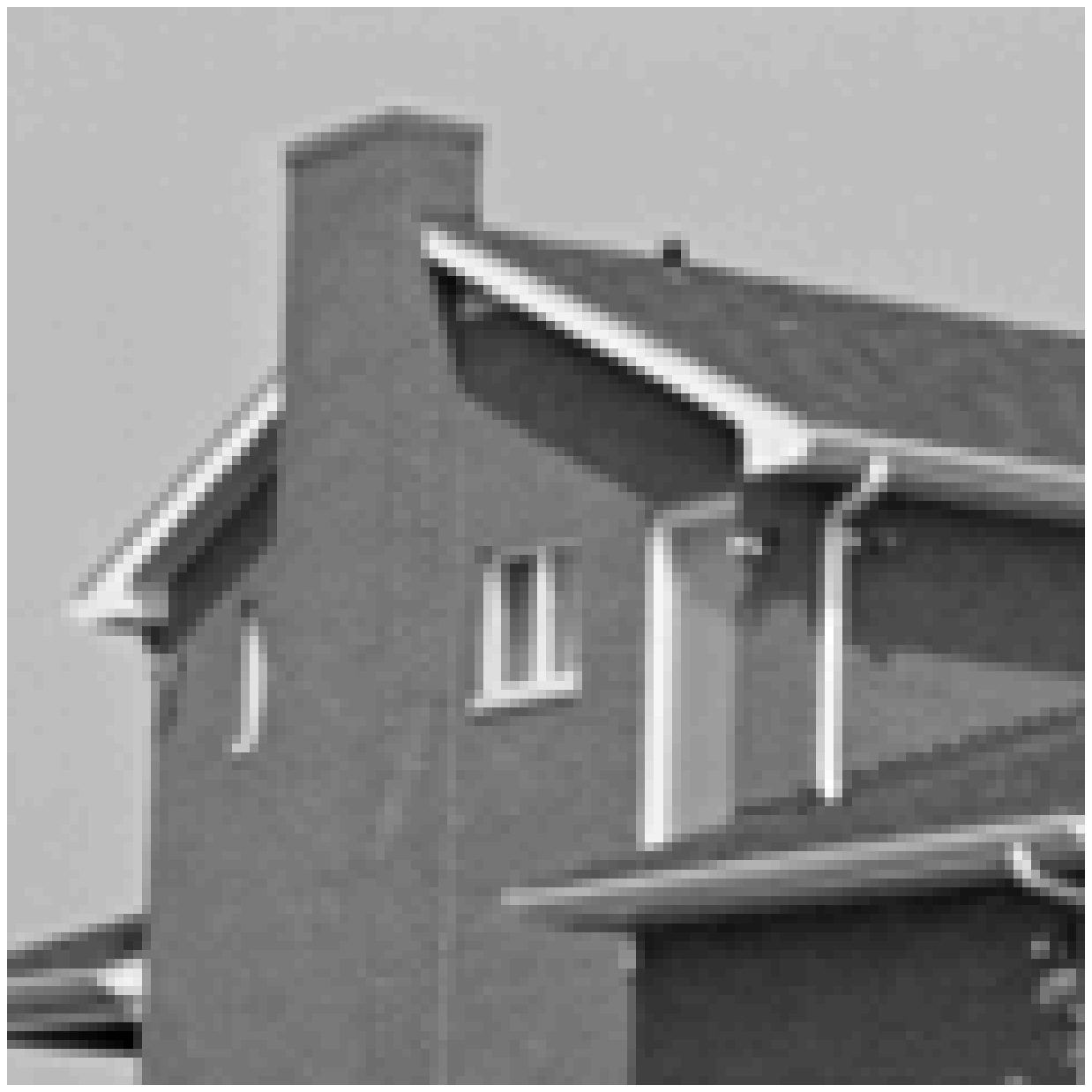}
}\\
\subfloat[Boat]{
\includegraphics[height=0.14\textheight]{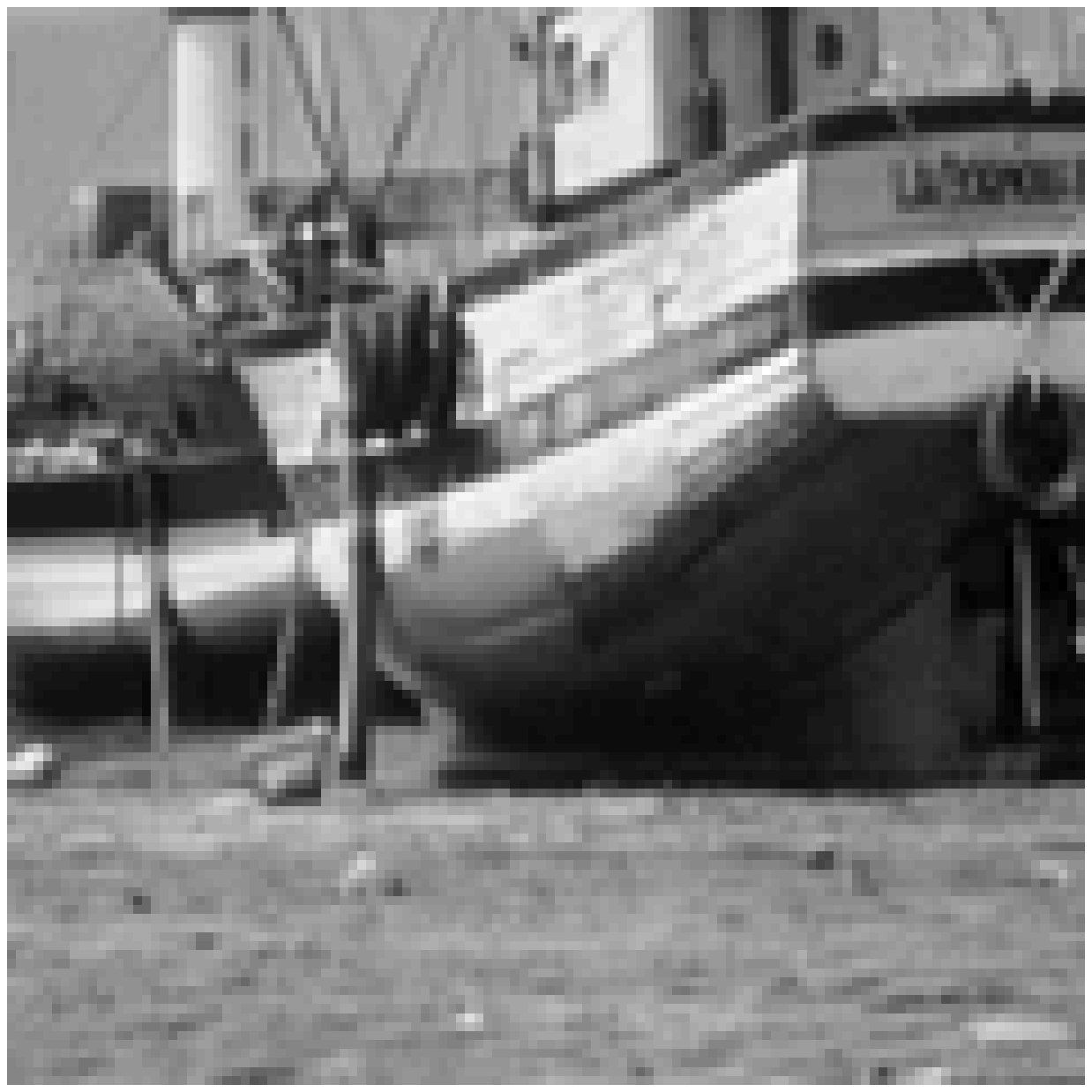}
}%
\subfloat[Man\_2]{
\includegraphics[height=0.14\textheight]{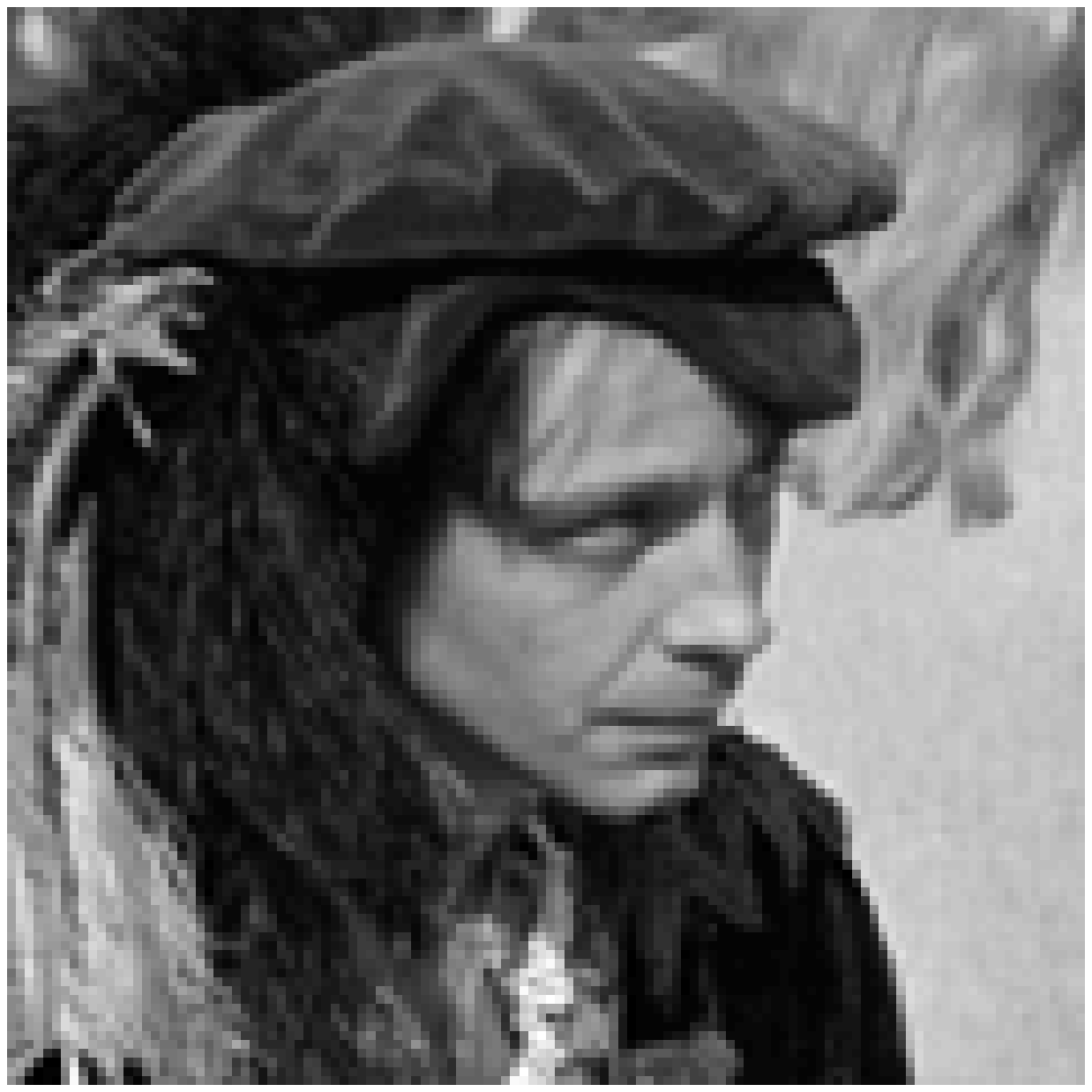}
}%
\subfloat[Couple\_2]{
\includegraphics[height=0.14\textheight]{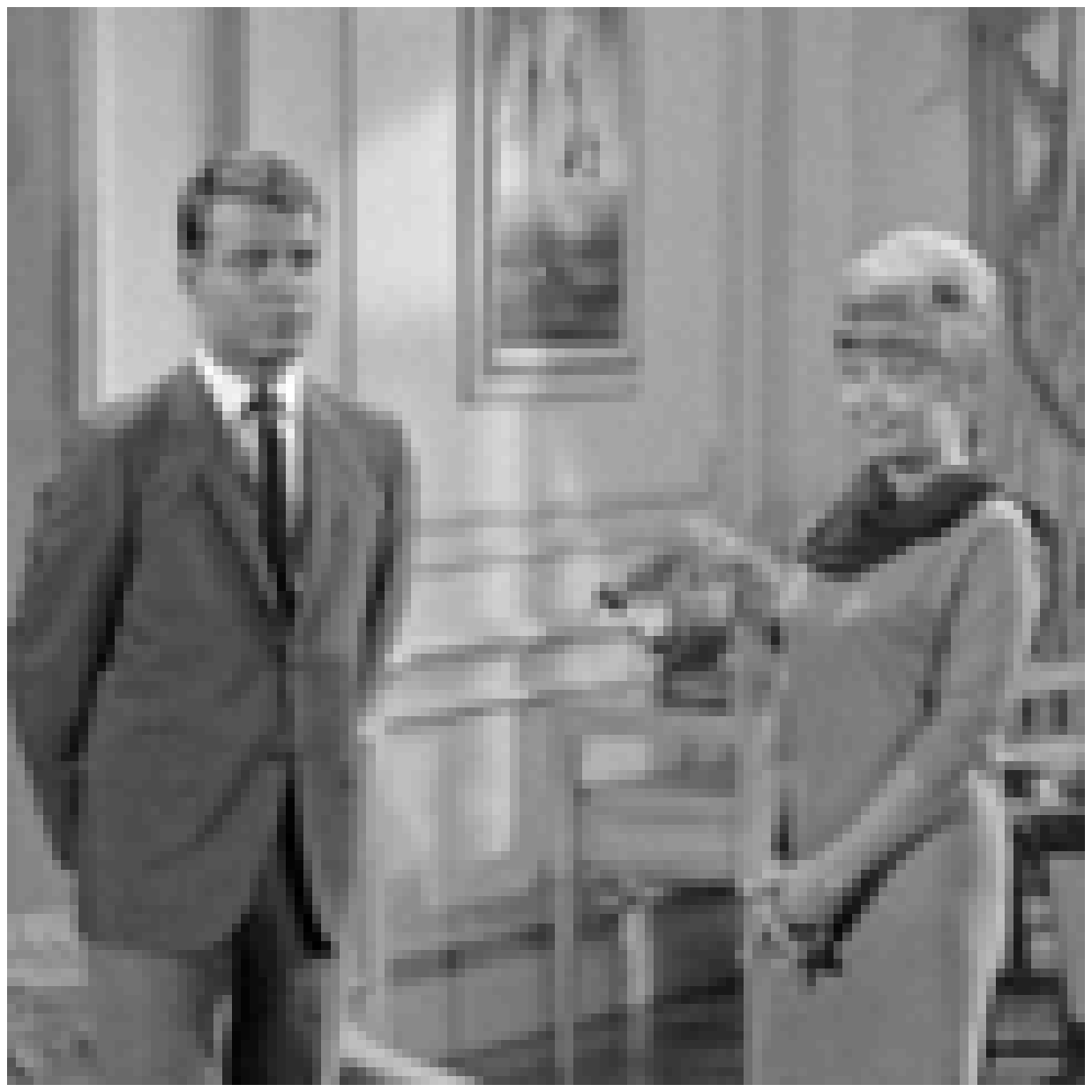}
}
\caption{Test images}
\label{fig:clean_images}
\end{figure}

The denoising performance in all of the following experiments is measured
by the peak signal-to-noise ratio (PSNR), which is given for images with values in the range
$0-255$ by
\begin{equation} \label{eqn:psnr}
 \PSNR\left(I_{1},I_{2}\right)=20\cdot \log_{10}\left(\frac{255}{\sqrt{\sum_{i,j}\left|I_{1}\left(i,j\right)-I_{2}\left(i,j\right)\right|^{2}}}\right) .
\end{equation}
In this metric, two similar images, for example an image and a good approximation of it, get high PSNR value, while two different images have low PSNR value.

In addition to the PSNR, which measures the difference between two images, we use the signal-to-noise ratio (SNR) to measure the noise level in a given (single) image. SNR is given by
\[ \SNR \left(I_c,\sigma_{noise}\right)=\frac{\sigma_{I_c}}{\sigma_{noise}} , \]
where $I_c$ is the clean image, $\sigma_{I_c}$ is the standard deviation of the intensity values of $I_c$ and $\sigma_{noise}$ is the standard deviation of the noise added to $I_c$.

The remaining of this section is organized as follows. First we examine the gain in using low-rank approximation, for different noise levels and kernel widths. Then, we compare between the naive approach of truncated spectral decomposition and our method with respect to the cutoff point, namely, the effect of different low-rank values on these two methods. We then proceed to comparing our method to a few popular denoising methods. Finally, we explore the performance of our method when combined in a state-of-the-art denoising scheme.

\subsection{Effectiveness of low-rank NLM operators} \label{sec:experiments_kernel_width}

We explore the improvement achieved by low-rank approximations of the NLM operator compared to the original NLM operator, as a function of the noise level (SNR) of the image and the kernel width of the operator.

For this experiment we used four of the test images given in Figure~\ref{fig:clean_images}, resized to $60 \times 60$ pixels by bi-cubic interpolation. We computed NLM operators for a range of kernel widths with patch size $p=5$. For each operator corresponding to a given kernel width, we constructed several low-rank approximations of it using the method of eigenvalues truncation, as given in Algorithm \ref{alg:nlm_eig} in Appendix \ref{sec:apx_algorithms}. The low-rank values are given in the set
\[
\begin{split}
K=\{1, 5, 10, 15, 20, 25, 50, 75, 100, 125, 150, 175, 200, \\
225, 250, 275, 300, 400, 600, 1200, 2000, 3000, 3600\}.
\end{split}
\]

In Figure~\ref{fig:kernel_width_and_order}, we present three charts for every image, corresponding to three SNR levels (for $4$ out of the $12$ images of Figure~\ref{fig:clean_images}). Each chart shows the PSNR between the clean image and its denoised version, as a function of the kernel width, for two operators. The two operators are the NLM operator and the best performing low-rank approximation, that is
\begin{equation} \label{eqn:optimal_low_rank}
  \arg \max_{k_i \in K}\left\lbrace \PSNR( \textrm{NLM-Eig}(I_c, p, k_i), I\right\rbrace,
\end{equation}
where $I$ and $I_c$ are the clean and corrupted images, respectively.

From the results of this experiment we can see that the performance gap between the optimal low-rank operator \eqref{eqn:optimal_low_rank} and the NLM operator can be very large, in favor of the low-rank approximation, for kernel widths that are much smaller than the ideal width. This advantage diminishes when the kernel width approaches its best performing value. Note that as the level of noise increases (the SNR value decreases) this optimal width value spreads to a broader range of values. In addition, one observes that the performance gain of the low-rank operator naturally diminishes as the SNR increases since less noise is involved.

\begin{figure}

\begin{centering}
Barbara
\par\end{centering}

\begin{centering}
\centerline{
\includegraphics[scale=1]{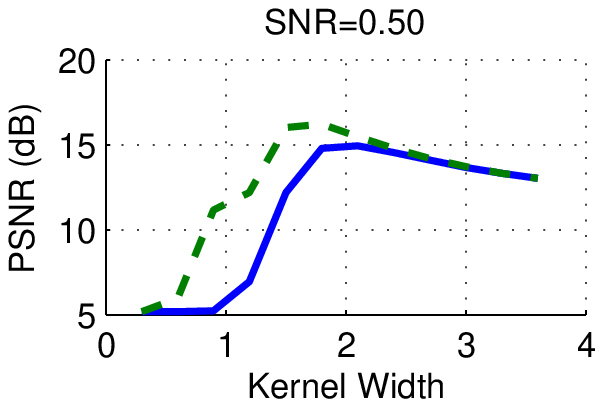}
\includegraphics[scale=1]{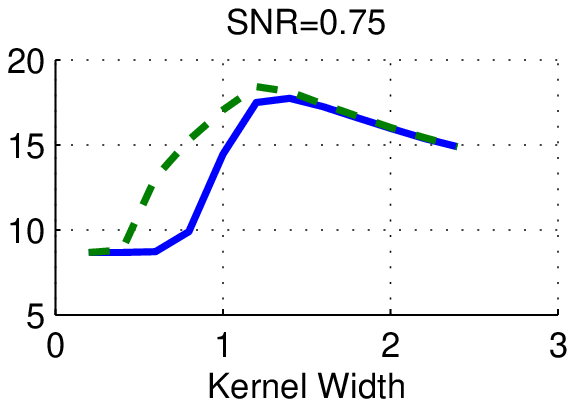}
\includegraphics[scale=1]{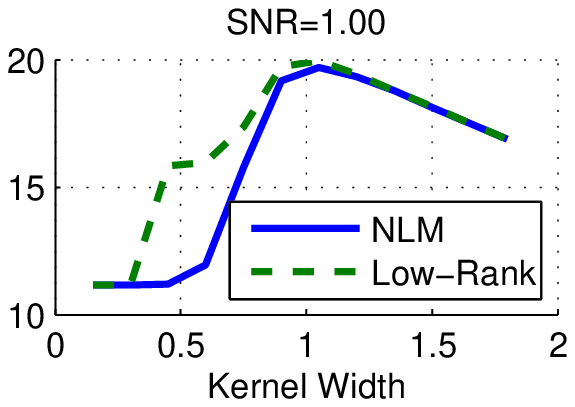}
}
\par\end{centering}

\begin{centering}
Clown
\par\end{centering}

\begin{centering}
\centerline{
\includegraphics[scale=1]{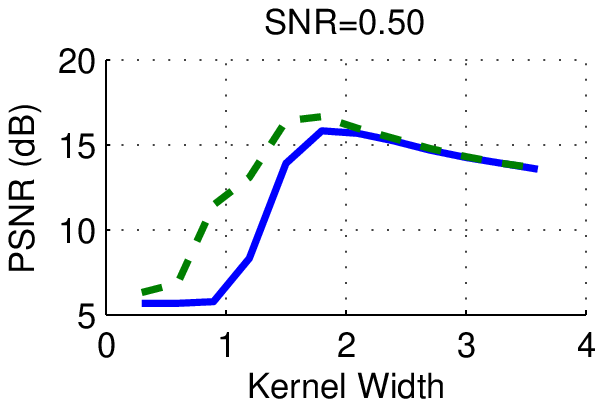}
\includegraphics[scale=1]{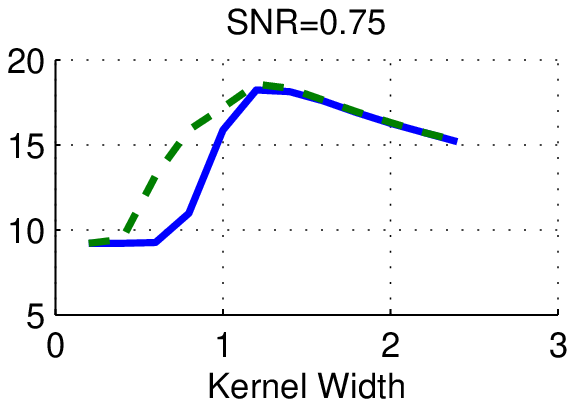}
\includegraphics[scale=1]{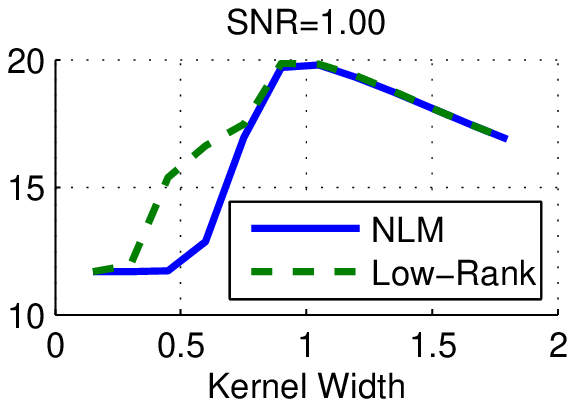}
}
\par\end{centering}

\begin{centering}
Mandril
\par\end{centering}

\begin{centering}
\centerline{
\includegraphics[scale=1]{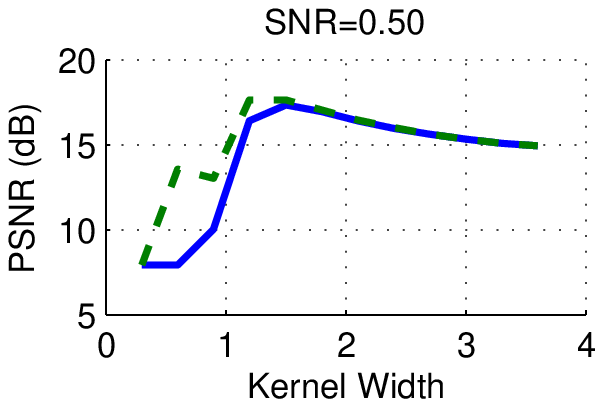}
\includegraphics[scale=1]{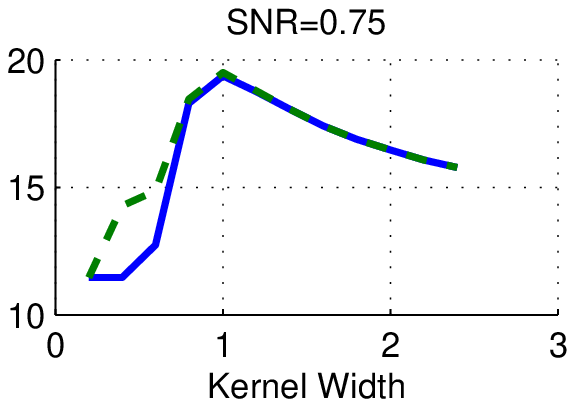}
\includegraphics[scale=1]{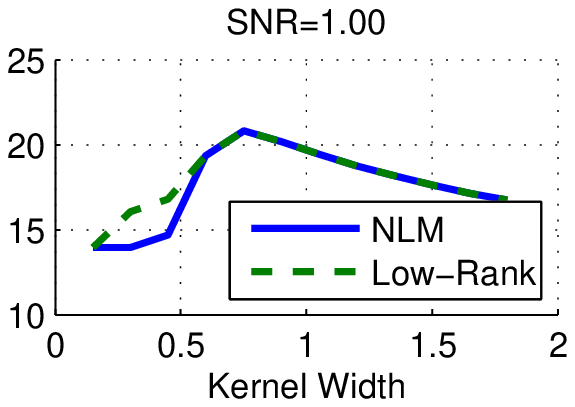}
}
\par\end{centering}

\begin{centering}
Roof
\par\end{centering}

\begin{centering}
\centerline{
\includegraphics[scale=1]{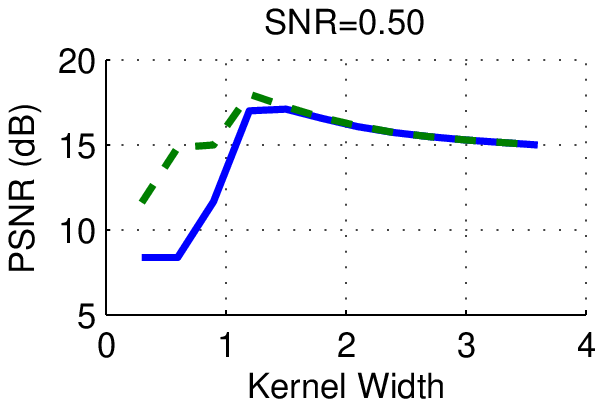}
\includegraphics[scale=1]{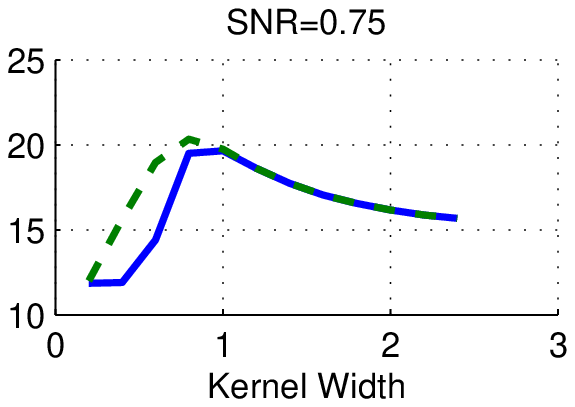}
\includegraphics[scale=1]{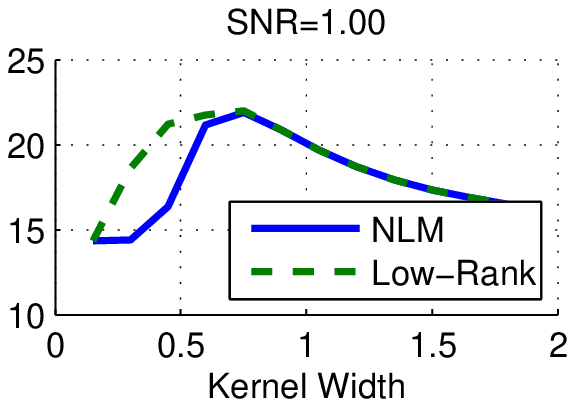}
}
\par\end{centering}

\caption{Using a low-rank operator versus the NLM operator: the PSNR between the clean image and its denoised version as a function of the kernel width, for different SNR (noise) levels. The original images are given in Figure \ref{fig:clean_images}.}
\label{fig:kernel_width_and_order}
\end{figure}

\subsection{The cutoff point} \label{sec:experiments_cutoff}

We investigate the effect of the cutoff point on the performance of two denoising methods. For the naive method of truncated eigenvalues (Algorithm \ref{alg:nlm_eig}), the cutoff point corresponds to the number of leading eigenvalues preserved in the low-rank operator. For our method based upon SB function (Algorithm \ref{alg:nlm_sbw}), the cutoff point corresponds to the filter's cutoff parameter $\omega$, which is a value between zero and one (Section \ref{subsec:construct_low_rank}). In the experiment of this section, we measure the ``PSNR gain", which is the difference between the PSNR of the low-rank operator and that of the original NLM operator it has been created from, as a function of the cutoff parameter.

The original NLM operators are constructed with a patch size $p=5$ and kernel widths $1.5$, $1.0$ and $0.5$ for the different  SNR values $0.5$, $0.75$ and $1$, respectively. These values were chosen based on the experiments in Section~\ref{sec:experiments_kernel_width}, as the kernel widths which result in the highest PSNR, on average. For our method based on the SB functions we use $N=150$ coefficients in the truncated Chebyshev expansion \eqref{eqn:cheb_mat_approx}. In addition, we select $d=15$ as the order of the filter \eqref{eqn:sbw} for the higher level of noise, that is SNR of $0.5$, while taking $d=4$ for the other two SNRs of $0.75$ and $1$.

Figures \ref{fig:psnr_trunc} and \ref{fig:psnr_sbw} show the results of the experiment, per image and noise level using eigenvalues truncation and our method, respectively. One can observe that for both methods, the best-performing cutoff, given the SNR, is far from being the same for all images. For cutoffs that are very low (resulting in extremely low-rank operators), the PSNR gain may be negative. For cutoffs that are very high, the PSNR of the low rank operator converges to the PSNR achieved by the original NLM operator, since the modified operator itself in both methods converges to the original NLM operator.

For a different perspective, we show in Figure~\ref{fig:mean_improvement} the mean improvement per noise level for the two methods. Indeed, there is a range of cutoffs that yields an improvement on the dataset for both methods.

\begin{figure}
\begin{center}
\subfloat{\includegraphics[width=.5\textwidth]{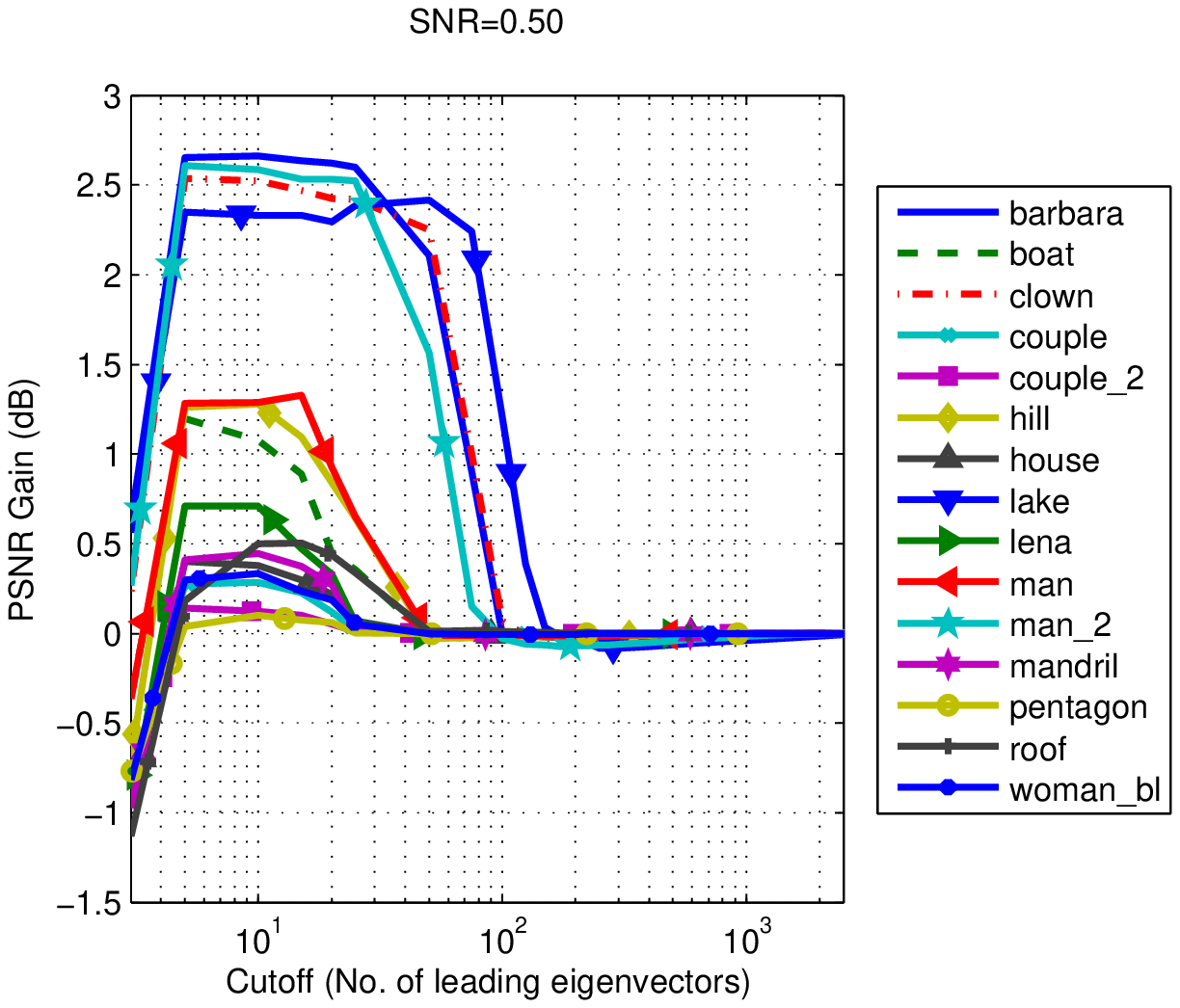}\label{subflaot:cutoff_snr_2}}
\subfloat{\includegraphics[width=.5\textwidth]{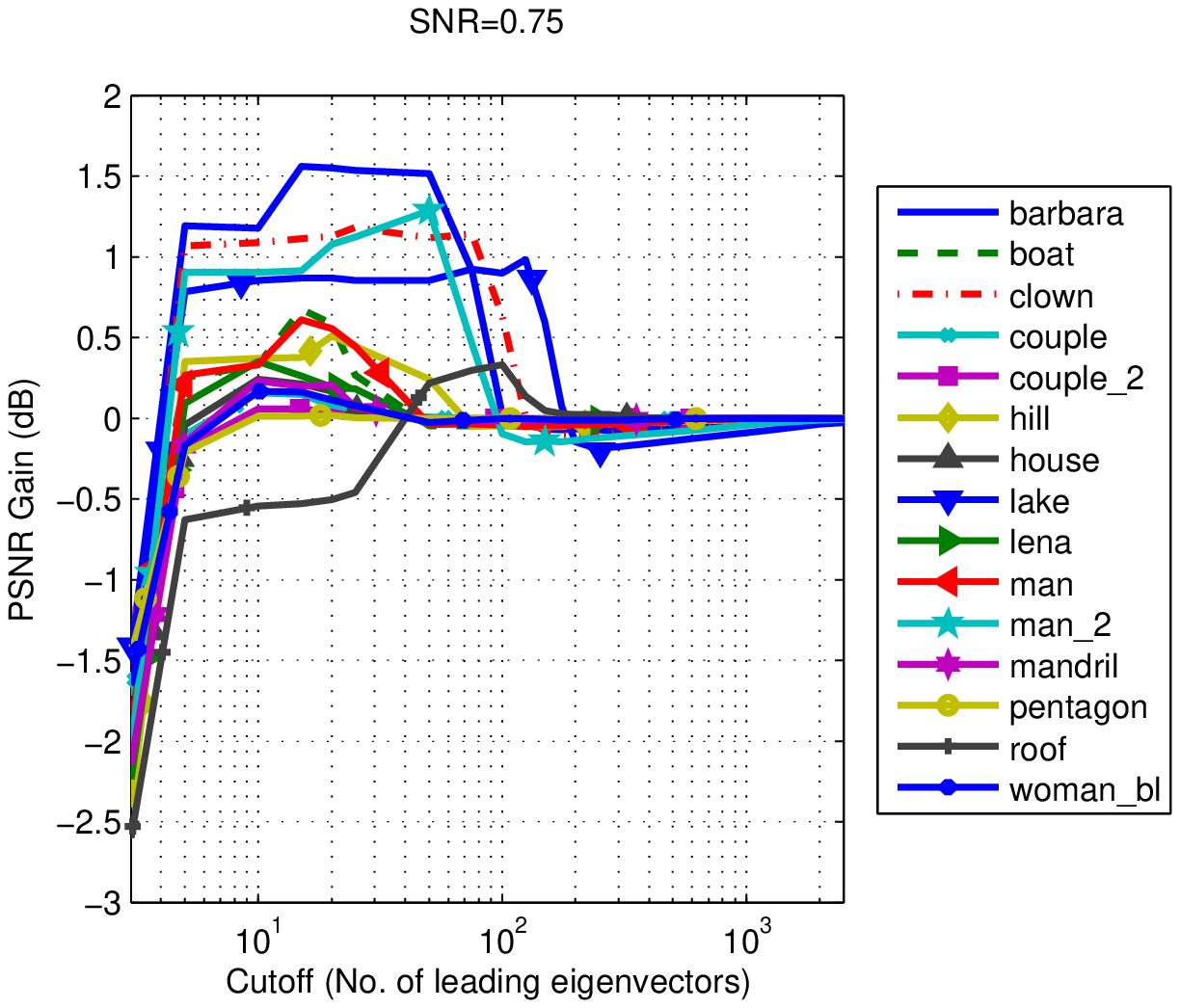}\label{subflaot:cutoff_snr_3}}  \\
\subfloat{\includegraphics[width=.5\textwidth]{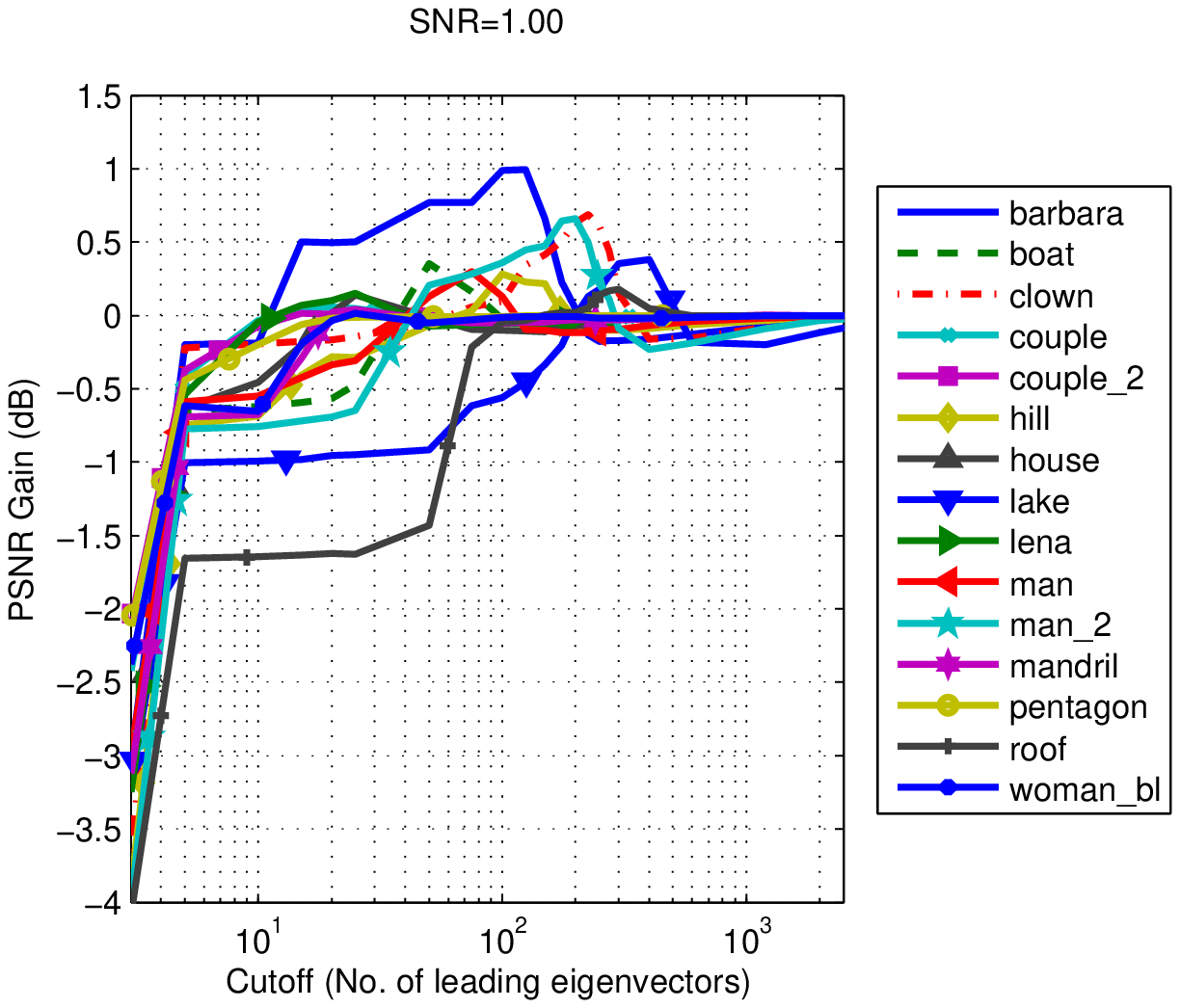}\label{subflaot:cutoff_snr_4}}   \\
\end{center}
\caption{The differences between the PSNR of the low-rank and the original NLM operator (PSNR gain), based on truncated eigenvalues, as a function of the cutoff point (number of preserved leading eigenvalues).}
\label{fig:psnr_trunc}
\end{figure}

\begin{figure}
\begin{center}
\subfloat{\includegraphics[width=.5\textwidth]{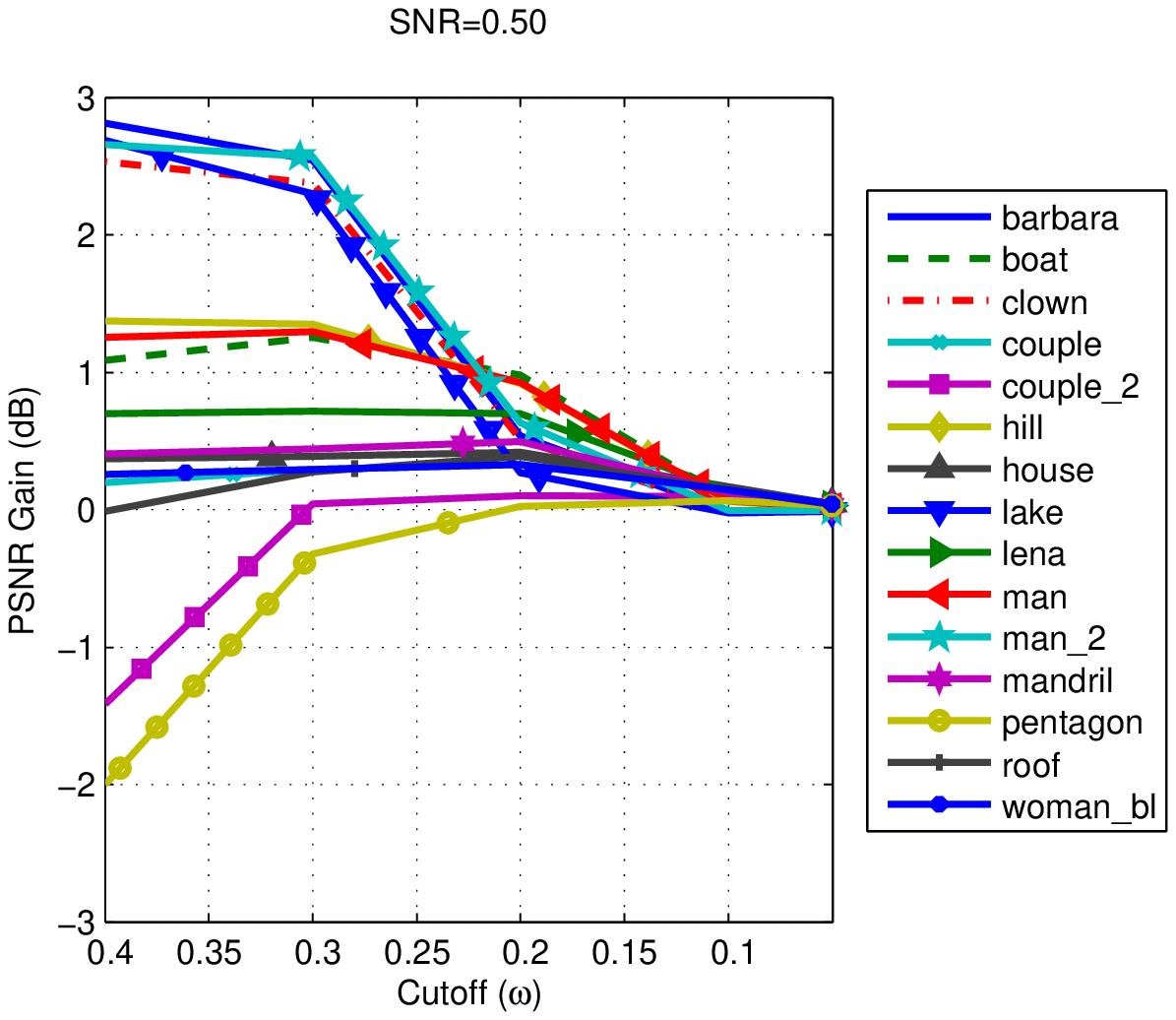}\label{subflaot:sbw_cutoff_snr_2}}
\subfloat{\includegraphics[width=.5\textwidth]{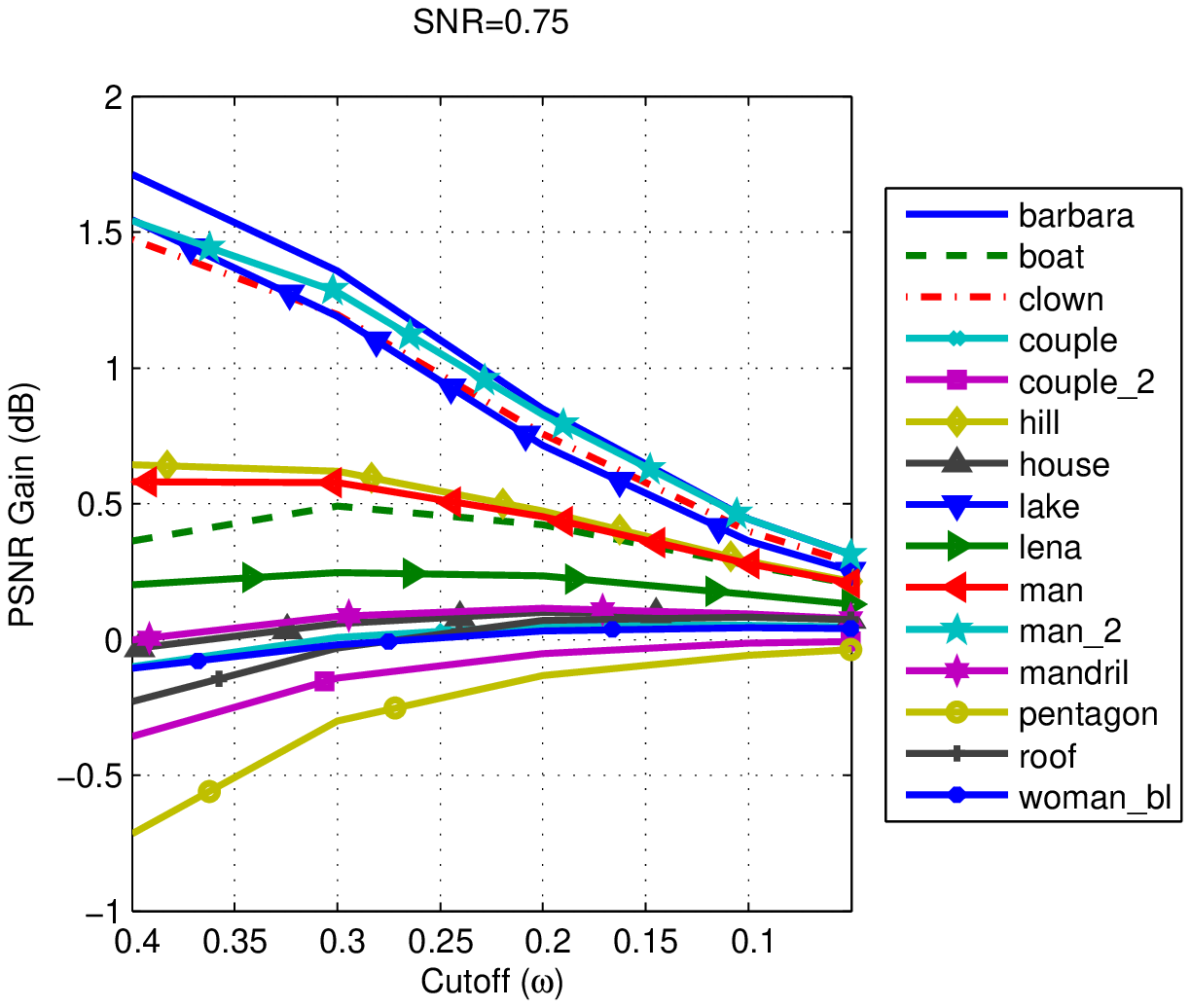}\label{subflaot:sbw_cutoff_snr_3}}  \\
\subfloat{\includegraphics[width=.5\textwidth]{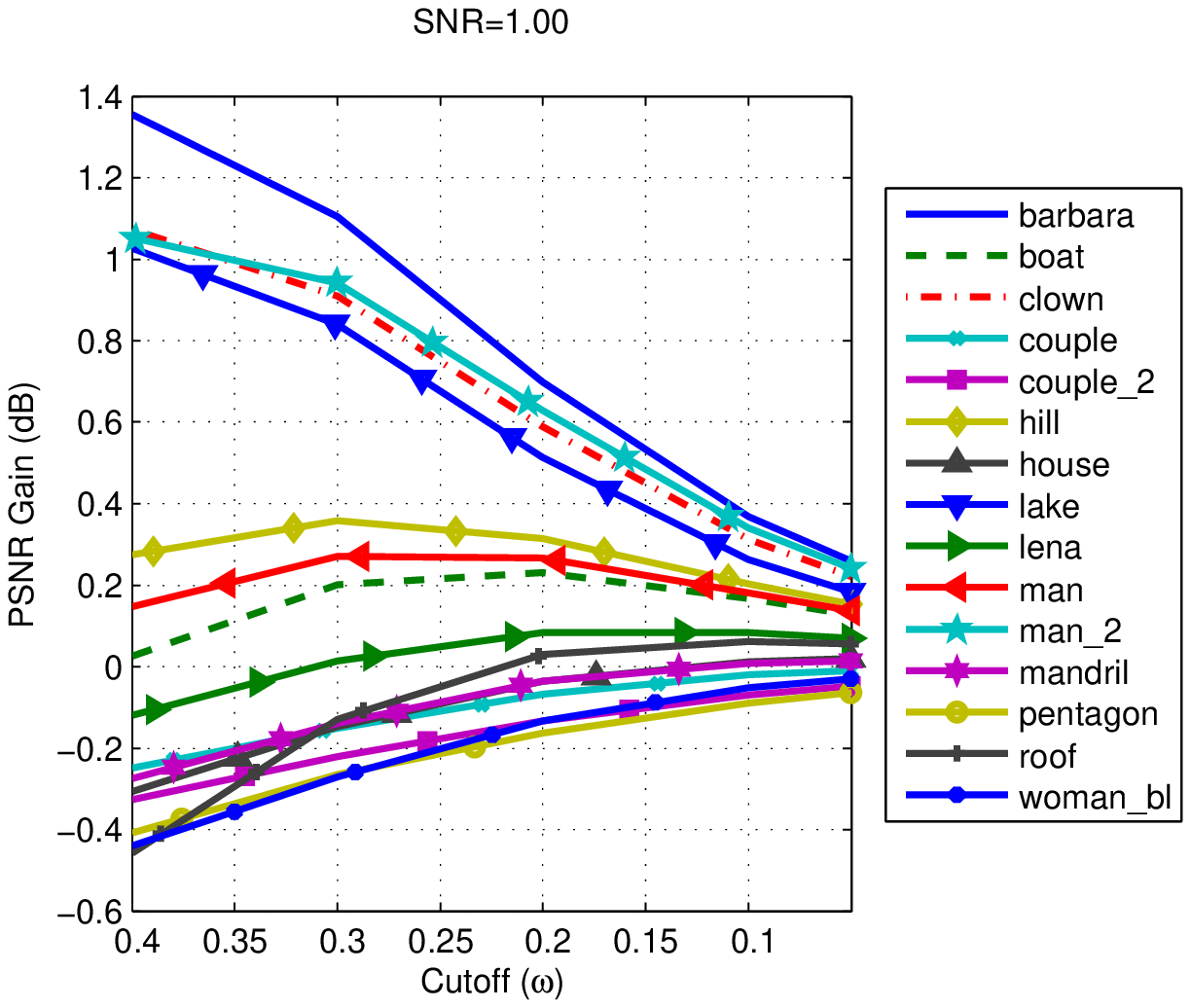}\label{subflaot:sbw_cutoff_snr_4}}   \\
\end{center}
\caption{The differences between the PSNR of the low-rank and the original NLM operator (PSNR gain), based on the SB function, as a function of the cutoff point (the parameter $\omega$).}
\label{fig:psnr_sbw}
\end{figure}

\begin{figure}
\begin{center}
\subfloat{\includegraphics[width=.5\textwidth]{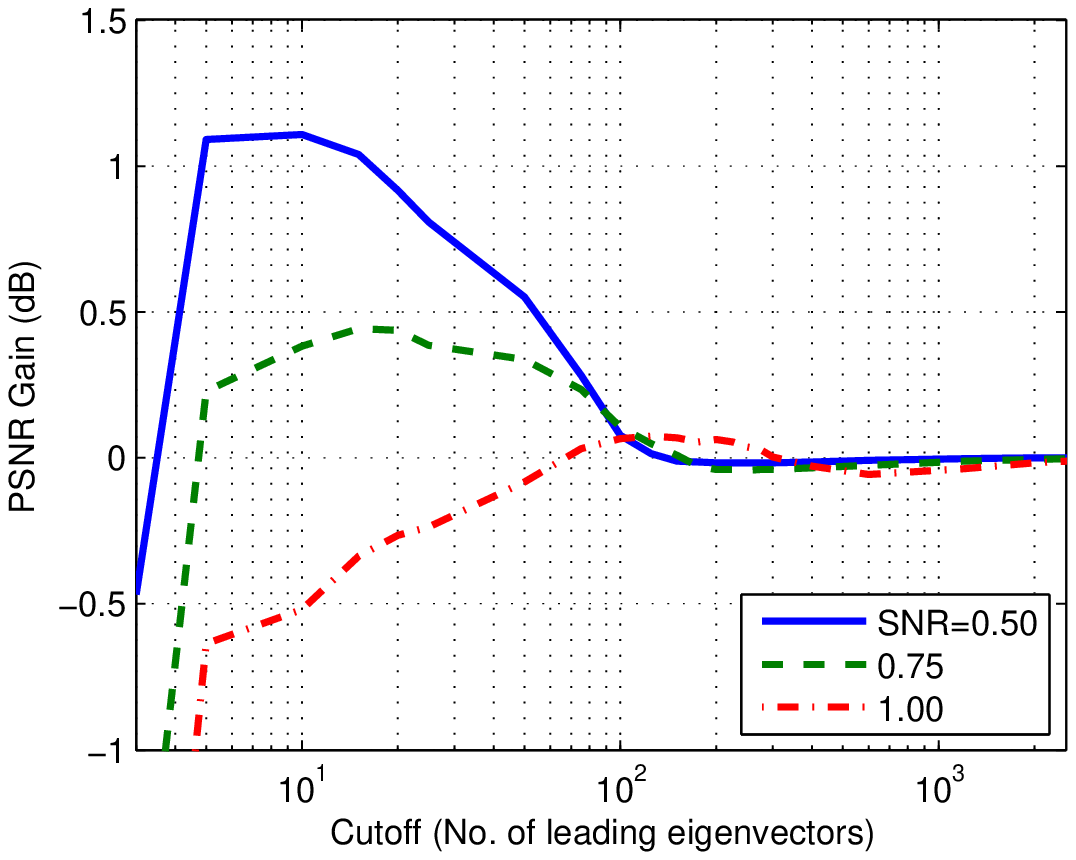}\label{subflaot:mean_cutoff}}
\subfloat{\includegraphics[width=.5\textwidth]{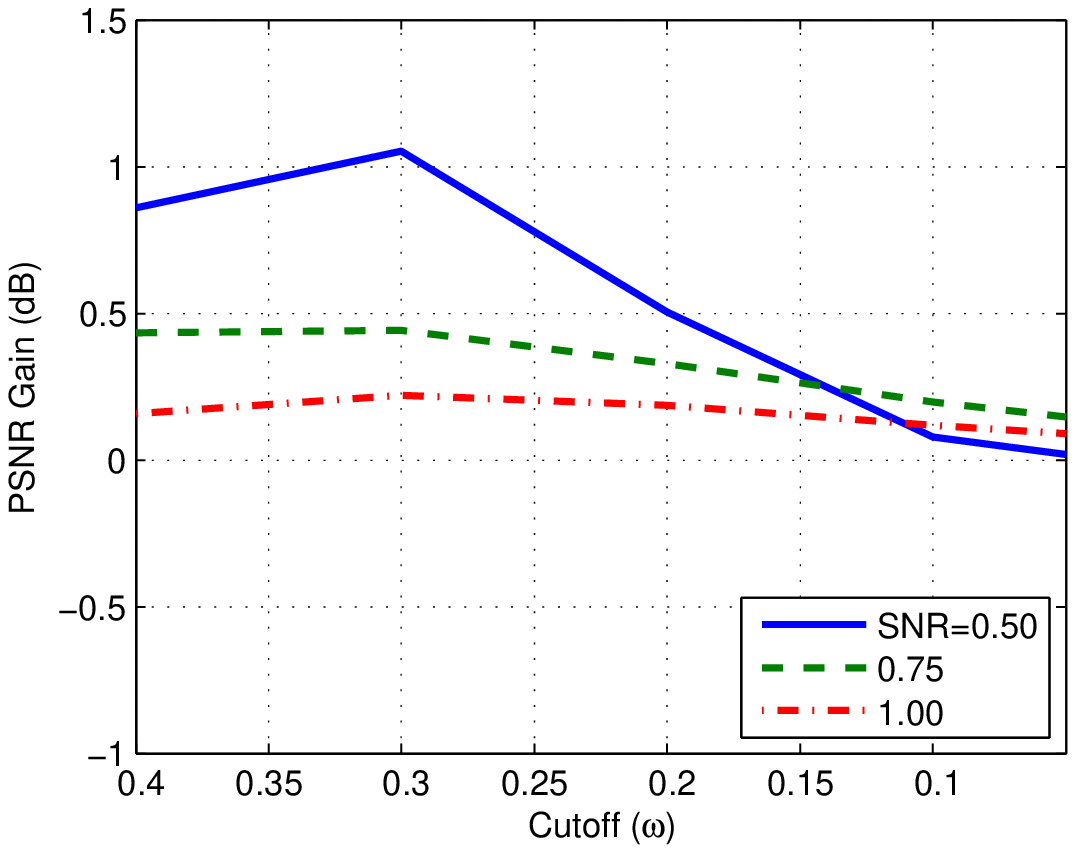}\label{subflaot:mean_sbw_cutoff}}
\end{center}
\caption{The average PSNR improvement over the full set of test images given in Figure~\ref{fig:clean_images} for three levels of noise. On the left, the results of the method of truncated eigenvalues (Algorithm \ref{alg:nlm_eig}). On the right, the results of our method, based on the SB function (Algorithm \ref{alg:nlm_sbw}). }
\label{fig:mean_improvement}
\end{figure}

\subsection{Comparison with other algorithms}

In Figure~\ref{fig:mean_improvement} we show the optimal average cutoff value for each noise level, averaged over the entire set of test images. In this section, we compare the results achieved using these cutoffs to the results of other denoising algorithms.

The denoising algorithms we compare are:  NLM, truncated eigenvalues NLM (NLM-Eig), our NLM based on the SB function (NLM-SB), NLM-SB2 (two stage denoising scheme to be described in Section~\ref{sec:twostagealg} below), K-SVD~\cite{elad2006image},  shape-adaptive DCT transform (SA-DCT)~\cite{foi2007pointwise}, and block-matching and 3D filtering (BM3D)~\cite{dabov2009bm3d}. The latter three have implementations which are publicly available. The parameters used for these algorithms are the default parameters given by the respective authors in their implementations.

Tables~\ref{tab:results_snr_05} and \ref{tab:results_snr_075} contain PSNR comparisons of the above algorithms for the images of Figure \ref{fig:clean_images}, including a summary line consisting of the average PSNR over the full set of images.

\begin{table}
\centerline{%
\begin{tabular}{cccccccc}
Image & NLM & NLM-Eig & NLM-SB & NLM-SB2 & K-SVD & SA-DCT & BM3D\tabularnewline\hline
barbara & 15.47 & 18.13 & 18.14 & 19.37 & 20.10 & 21.09 & 20.88\tabularnewline
boat & 16.98 & 18.06 & 18.20 & 19.13 & 18.26 & 20.40 & 20.81\tabularnewline
clown & 15.48 & 18.00 & 17.94 & 18.70 & 18.14 & 19.66 & 19.85\tabularnewline
couple & 17.97 & 18.25 & 18.25 & 18.69 & 20.03 & 21.00 & 21.24\tabularnewline
couple\_2 & 18.62 & 18.74 & 18.70 & 18.91 & 19.79 & 22.20 & 22.58\tabularnewline
hill & 16.56 & 17.84 & 17.86 & 18.86 & 18.54 & 20.09 & 20.30\tabularnewline
house & 18.15 & 18.52 & 18.52 & 19.27 & 18.95 & 22.38 & 23.44\tabularnewline
lake & 14.81 & 17.14 & 17.24 & 18.36 & 17.75 & 18.98 & 18.83\tabularnewline
lena & 17.52 & 18.23 & 18.21 & 19.01 & 18.90 & 20.61 & 20.99\tabularnewline
man & 16.57 & 17.86 & 17.82 & 18.83 & 17.88 & 19.56 & 19.75\tabularnewline
man\_2 & 15.82 & 18.41 & 18.46 & 19.51 & 19.23 & 20.37 & 20.49\tabularnewline
mandril & 17.30 & 17.75 & 17.72 & 18.37 & 17.84 & 20.05 & 20.27\tabularnewline
pentagon & 18.07 & 18.17 & 18.04 & 17.96 & 15.56 & 20.55 & 21.27\tabularnewline
roof & 17.33 & 17.83 & 17.52 & 18.15 & 17.19 & 20.64 & 21.80\tabularnewline
woman\_blonde & 18.02 & 18.36 & 18.31 & 18.85 & 18.00 & 20.77 & 21.05\tabularnewline\hline\hline
\textbf{Average} & 16.98 & 18.09 & 18.06 & 18.80 & 18.41 & 20.56 & 20.90
\end{tabular}}
\caption{PSNR values of the tested algorithms for noise level corresponding to SNR=0.5.}
\label{tab:results_snr_05}
\end{table}

\begin{table}
\centerline{%
\begin{tabular}{cccccccc}
Image & NLM & NLM-Eig & NLM-SB & NLM-SB2 & K-SVD & SA-DCT & BM3D\tabularnewline\hline
barbara & 18.83 & 20.39 & 20.65 & 21.27 & 20.87 & 22.49 & 22.90\tabularnewline
boat & 19.68 & 20.35 & 20.15 & 21.13 & 18.64 & 21.92 & 22.22\tabularnewline
clown & 18.66 & 19.77 & 20.16 & 20.37 & 19.00 & 21.32 & 21.62\tabularnewline
couple & 20.06 & 20.21 & 19.99 & 21.86 & 19.93 & 22.52 & 22.85\tabularnewline
couple\_2 & 20.48 & 20.54 & 20.33 & 22.38 & 20.88 & 23.78 & 24.15\tabularnewline
hill & 19.41 & 19.79 & 20.00 & 21.25 & 18.97 & 21.72 & 21.94\tabularnewline
house & 20.78 & 20.99 & 20.76 & 22.74 & 20.75 & 24.67 & 25.58\tabularnewline
lake & 18.05 & 18.92 & 19.55 & 18.70 & 18.22 & 20.48 & 20.49\tabularnewline
lena & 20.02 & 20.28 & 20.14 & 21.35 & 19.43 & 22.26 & 22.57\tabularnewline
man & 19.45 & 20.06 & 19.97 & 20.66 & 18.28 & 21.08 & 21.30\tabularnewline
man\_2 & 19.13 & 20.04 & 20.74 & 21.06 & 19.44 & 21.98 & 22.16\tabularnewline
mandril & 19.50 & 19.70 & 19.44 & 20.76 & 19.51 & 21.15 & 21.39\tabularnewline
pentagon & 19.51 & 19.52 & 19.31 & 20.82 & 19.05 & 22.31 & 22.90\tabularnewline
roof & 19.84 & 19.30 & 19.53 & 20.71 & 19.98 & 22.12 & 23.73\tabularnewline
woman\_blonde & 20.31 & 20.47 & 20.15 & 21.26 & 19.80 & 22.55 & 22.87\tabularnewline\hline\hline
\textbf{Average} & 19.58 & 20.02 & 20.06 & 21.09 & 19.52 & 22.16 & 22.58
\end{tabular}}
\caption{PSNR values of the tested algorithms for noise level corresponding to SNR=0.75.}
\label{tab:results_snr_075}
\end{table}

\subsection{Two stages denoising schemes}\label{sec:twostagealg}

Meyer and Shen~\cite{meyer2014perturbation} have proposed to compute a second NL-means operator from the denoised image and apply it again.
Except for introducing two stages, there are a few additional differences between their algorithm and the NL-means algorithm, given in Definition~\ref{def:nlmeans_op}.

First, it contains a $k$-nearest neighbours search for constructing the denoising operators. Instead of constructing the operator $A=D^{-1}W$, where the elements of $W$ are given as in Definition~\ref{def:nlmeans_op}, they construct $W$ as $W=0.5(Z+Z')$, where the matrix $Z$ contains $k$ nonzero elements in row $i$, which are the $k$ largest values of the set
\[ \left\lbrace K_h(v(N_i^Y)-v(N_j^Y)) \mid 1 \leq j \leq n \right\rbrace . \]
Second, the way that the denoising operator is applied is different than in the NLM algorithm. Instead of computing the denoised image as $\hat{x}=Ay$,  where each pixel is denoised using only the patch of which it is the center, they average the values of the pixel from all patches in which it is included.

For comparison purposes, we have modified Meyer's scheme to use our low-rank operator, based on the SB function. The resulting algorithm is henceforth referred to as SB-Meyer. In order to use $f^{sb}_{\omega,d}$ in this scheme, we had to modify the way the denoising operator is computed; In contrary to the NLM operator, the operator in Meyer's scheme is not semi-positive definite (not all eigenvalues are non-negative). This is due to using $k$-nearest neighbours to construct it. Since our framework, as given in Section~\ref{sec:spectrum shaping}, requires a non-negative spectrum, we have approximated the denoising operator in Meyer's scheme with a new semi-positive definite matrix, by shifting its spectrum such that the largest negative eigenvalue becomes zero. Then, we normalized its rows such that they will sum to unity. This normalization transforms the largest positive eigenvalue back to unity (see also the proof of Lemma \ref{lemma:NLM_matrix_properties} in Appendix \ref{sec:apn_proof_of_lemma}).

The original algorithm of \cite{meyer2014perturbation} uses parameters given by the authors in their source code, which were tuned only for SNR level of $1$. Thus, we report in Table~\ref{tab:results_snr_1} its results (in the column ``Meyer'') and also those of SB-Meyer only for that noise level. The comparison is over the full set of images (as given in Figure \ref{fig:clean_images}), where the parameters of SB-Meyer are given in Table~\ref{tab:sbw_meyer_params}. In addition to SB-Meyer, we have implemented a simpler two-stage scheme employing our operator based on the SB function, which is given in Algorithm~\ref{alg:sbw_twostage} and is referred to as NLM-SB2. Its parameters are given in Table~\ref{tab:nlm_sbw2_params}. From Table~\ref{tab:results_snr_1} we see that using the method of the current paper improves the resulting PSNR (on average).  Using a two stage scheme further improves the resulting PSNR.

\begin{landscape}
\begin{table}
\centerline{%
\begin{tabular}{cccccccccc}
Image & NLM & NLM-Eig & NLM-SB & NLM-SB2 & SB-Meyer & Meyer & K-SVD & SA-DCT & BM3D\tabularnewline\hline
barbara & 20.84 & 21.83 & 21.94 & 22.11 & 23.18 & 20.30 & 21.67 & 23.61 & 24.07\tabularnewline
boat & 21.24 & 21.16 & 21.44 & 22.01 & 22.72 & 21.33 & 21.40 & 23.08 & 23.22\tabularnewline
clown & 20.51 & 20.85 & 21.42 & 21.08 & 22.23 & 20.14 & 20.40 & 22.54 & 22.85\tabularnewline
couple & 21.66 & 21.65 & 21.51 & 22.54 & 23.50 & 23.06 & 22.27 & 23.86 & 24.17\tabularnewline
couple\_2 & 22.03 & 22.03 & 21.81 & 23.17 & 24.46 & 24.61 & 22.20 & 25.17 & 25.37\tabularnewline
hill & 21.27 & 21.50 & 21.63 & 22.20 & 22.57 & 21.21 & 21.38 & 23.00 & 23.15\tabularnewline
house & 22.83 & 22.72 & 22.68 & 23.89 & 24.82 & 26.19 & 24.10 & 26.31 & 27.18\tabularnewline
lake & 19.73 & 19.28 & 20.57 & 19.24 & 21.42 & 20.78 & 19.77 & 21.66 & 21.66\tabularnewline
lena & 21.73 & 21.68 & 21.75 & 22.39 & 23.12 & 22.44 & 21.63 & 23.51 & 23.74\tabularnewline
man & 21.18 & 21.09 & 21.45 & 21.70 & 22.04 & 21.16 & 19.94 & 22.32 & 22.41\tabularnewline
man\_2 & 20.94 & 21.38 & 21.88 & 21.47 & 22.08 & 20.01 & 20.02 & 23.14 & 23.29\tabularnewline
mandril & 21.08 & 21.03 & 20.94 & 21.45 & 22.37 & 21.50 & 20.61 & 22.11 & 22.24\tabularnewline
pentagon & 20.84 & 20.84 & 20.57 & 21.61 & 23.50 & 24.22 & 21.62 & 23.67 & 24.12\tabularnewline
roof & 21.97 & 21.93 & 21.84 & 22.66 & 22.57 & 24.42 & 21.64 & 23.46 & 25.04\tabularnewline
woman\_blonde & 21.87 & 21.86 & 21.60 & 21.89 & 23.18 & 23.64 & 23.17 & 24.03 & 24.22\tabularnewline\hline\hline
\textbf{Average} & 21.31 & 21.39 & 21.53 & 21.96 & 22.92 & 22.33 & 21.45 & 23.43 & 23.78\tabularnewline
\end{tabular}}
\caption{Algorithm comparison for SNR=$1$}
\label{tab:results_snr_1}
\end{table}
\end{landscape}

\begin{table}
\begin{center}
\begin{tabular}{  c  | cccccccccccc }
& $v_1$ & $v_2$ & $p_1$ & $p_2$ & $h_1$ & $h_2$ & $\omega_1$ & $\omega_2$ & $d_1$ & $d_2$ &  $\gamma$ & $k$  \\ \hline
 SNR=1 & 200 & 200 & 7 & 3 & 1 & 1 & 0.6 & 0.4 & 50 & 16 & 0.2 & 150
\end{tabular}
\caption{The parameters of the SB-Meyer scheme.}
\label{tab:sbw_meyer_params}
\end{center}
\end{table}

\begin{table}
\begin{center}
\begin{tabular}{  c  | ccccccccc }
SNR & $p$ & $h_1$ & $h_2$ & $\omega_1$ & $\omega_2$ & $d_1$ & $d_2$ & $\gamma$ & $N$  \\ \hline
 0.5 & 5& 1.5 & 1& 0.3&0.3& 50& 50& 0.5& 150 \\
 0.75 & 5& 1.05 & 0.35& 0.3&0.3& 15& 15& 0.15& 150 \\
 1 & 5& 0.5 & 0.3& 0.3&0.3& 4& 4& 0.15& 150
 \end{tabular}
\caption{The parameters of the NLM-SB2 scheme, as given in Algorithm~\ref{alg:sbw_twostage}.}
\label{tab:nlm_sbw2_params}
\end{center}
\end{table}

For a more visual perspective of the comparison, we provide Figures~\ref{fig:mandril_denoised} and~\ref{fig:barbara_denoised}, where two images are denoised by the various tested algorithms. The first image is part of the Mandril image, whose clean version is in Figure \ref{fig:mandril_image}. Its denoised versions are presented in Figures~\ref{fig:mandril_nlm}--\ref{fig:mandril_bm3d}. One can see that SB-Meyer retained the texture much better than BM3D, and it also contains less artifacts than Meyer's original scheme (the finding regarding texture preservation is consistent with the conclusions in~\cite{meyer2014perturbation}). The second image is part of the Barbara image, shown in Figure \ref{fig:barbara_image}. The denoising results for the Barbara image are presented in Figures~\ref{fig:barbara_nlm}--\ref{fig:barbara_bm3d}, where we can see that the outputs of NLM-SB and NLM-SB2 are considerably less noisy than the output of NLM while retaining nearly the same amount of details. In addition, it appears that some small image regions that were incorrectly estimated by NLM were also incorrectly estimated by NLM-SB. A few of these artifacts are visible on the Barbara image and they are marked by red rectangles in Figures \ref{fig:barbara_image_arti1} and \ref{fig:barbara_image_arti2}. This effect of a remaining noisy region was amplified by the application of a second denoising stage in NLM-SB2. We believe that these artifacts can be removed by some simple heuristic, but this is outside the scope of this work.

\begin{figure}
\begin{center}
\subfloat[Clean]{
\includegraphics[width=0.22\textwidth]{mandril_clean} \label{fig:mandril_image}
}%
\subfloat[Noisy]{
\includegraphics[width=0.22\textwidth]{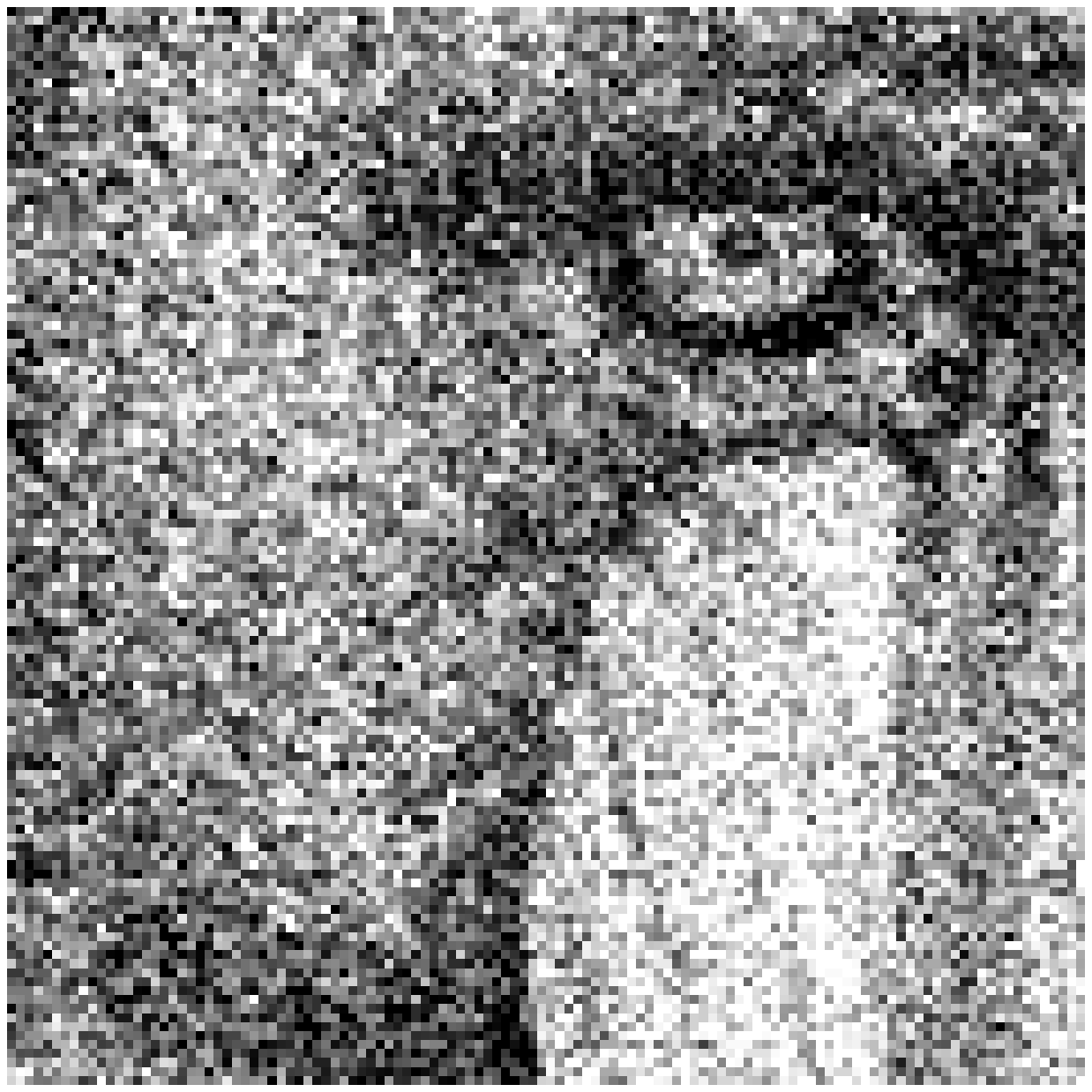}
}%
\subfloat[NLM]{
\includegraphics[width=0.22\textwidth]{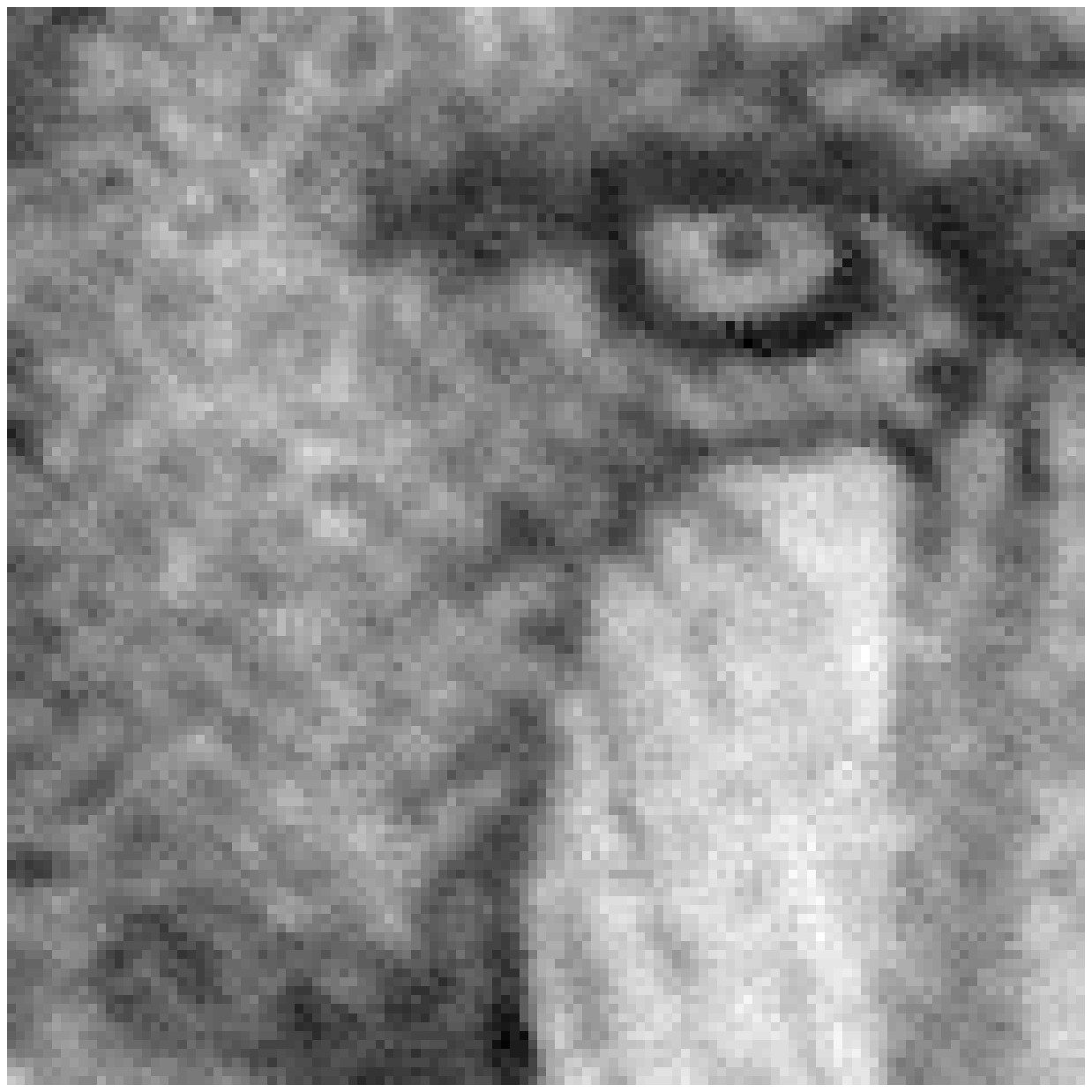}
\label{fig:mandril_nlm}
}\\
\subfloat[SB-Meyer]{
\includegraphics[width=0.22\textwidth]{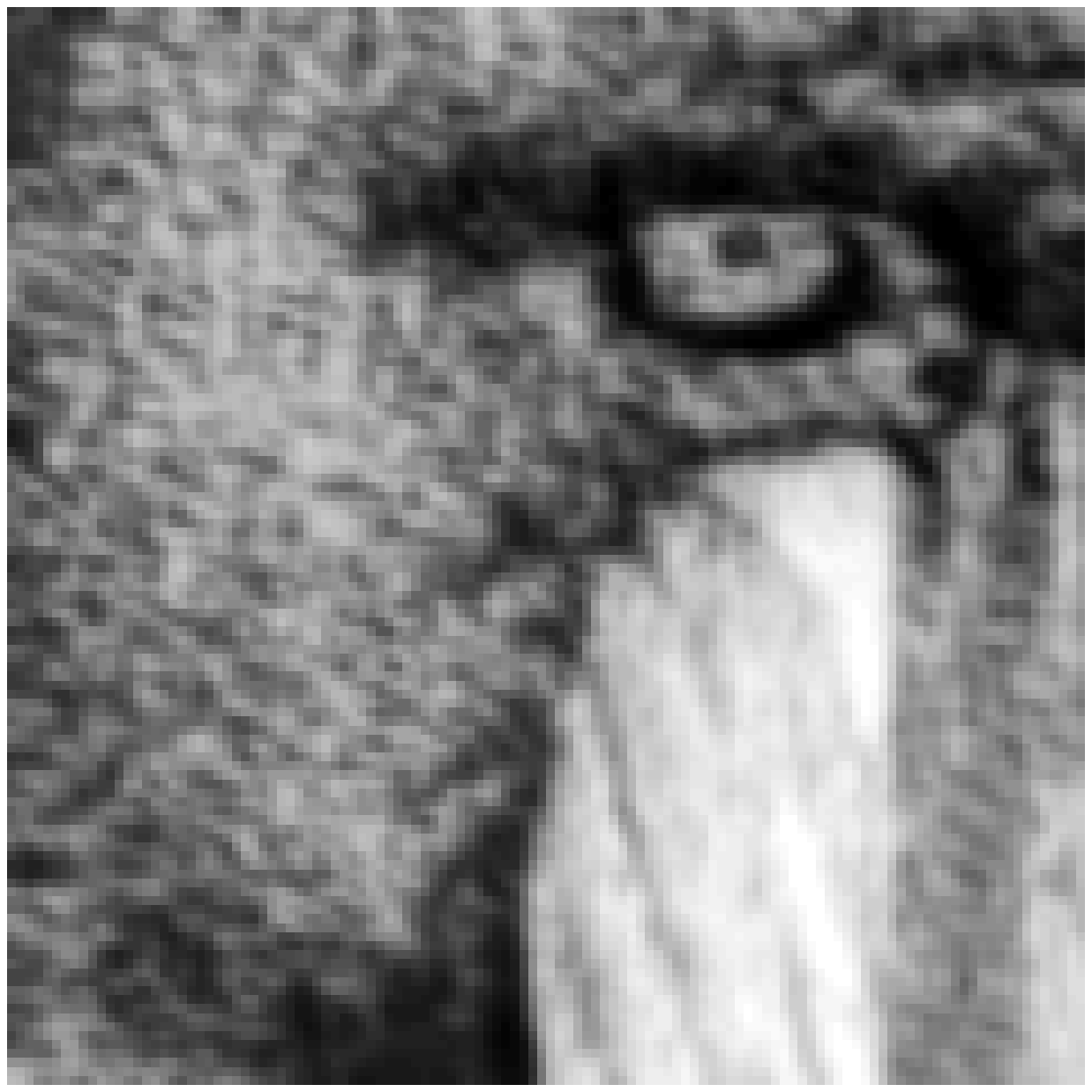}
}%
\subfloat[Meyer]{
\includegraphics[width=0.22\textwidth]{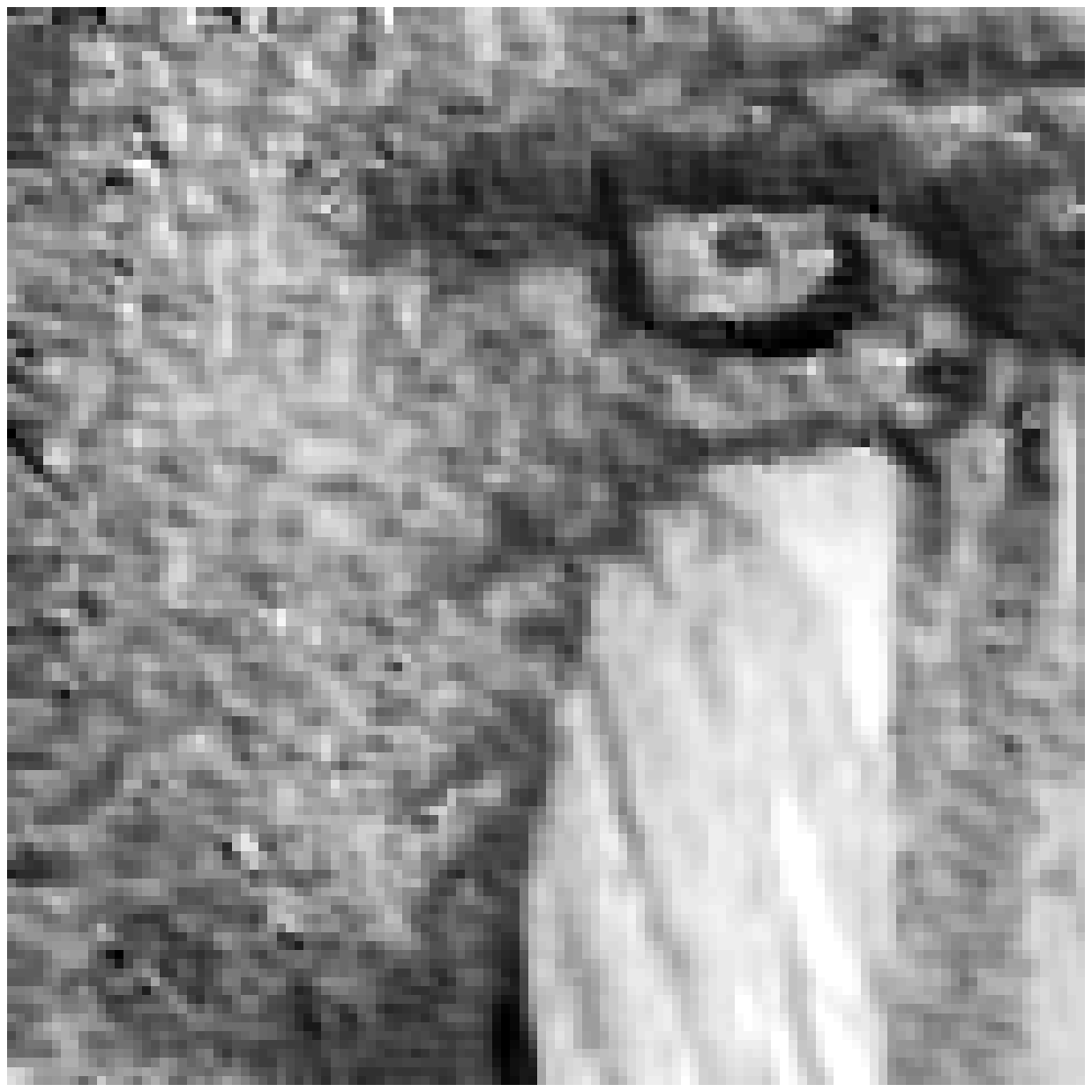}
}%
\subfloat[BM3D]{
\includegraphics[width=0.22\textwidth]{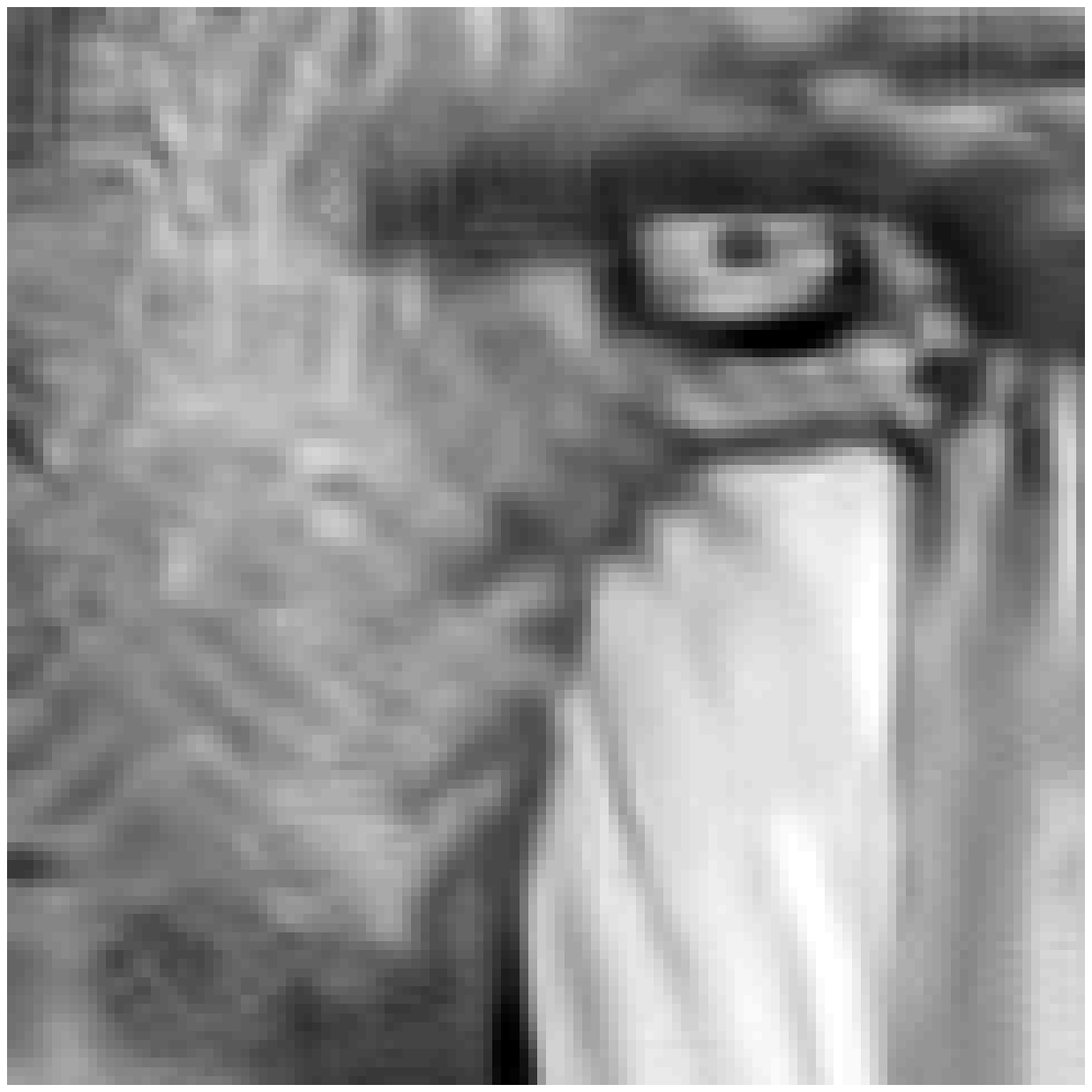}
\label{fig:mandril_bm3d}
}
\caption{Denoised examples taken from the Mandril image with SNR$=1$.}
\label{fig:mandril_denoised}
\end{center}
\end{figure}

\begin{figure}
\begin{center}
\subfloat[Clean]{
\includegraphics[width=0.22\textwidth]{barbara_clean} \label{fig:barbara_image}
}%
\subfloat[Noisy]{
\includegraphics[width=0.22\textwidth]{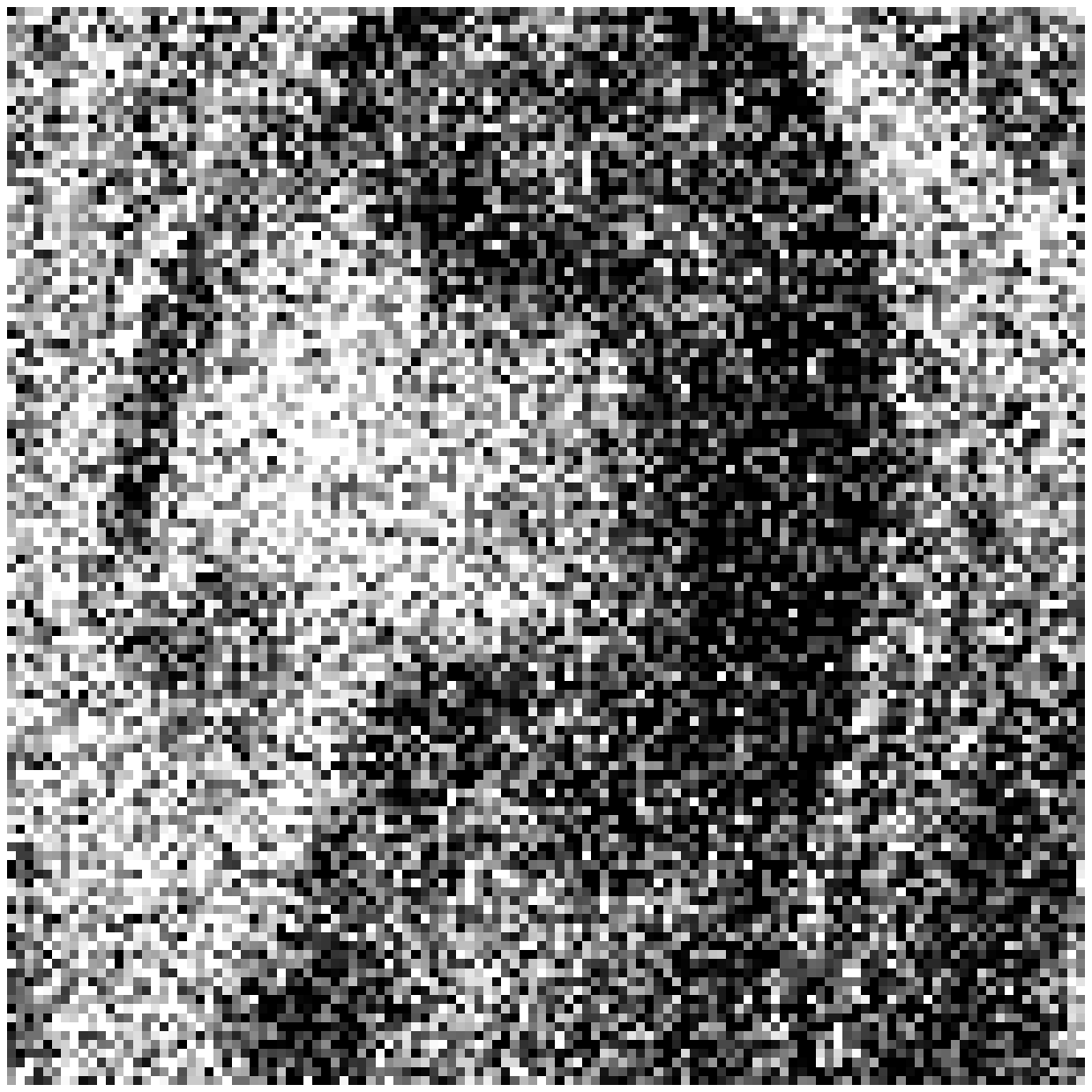}
}%
\subfloat[NLM]{
\includegraphics[width=0.22\textwidth]{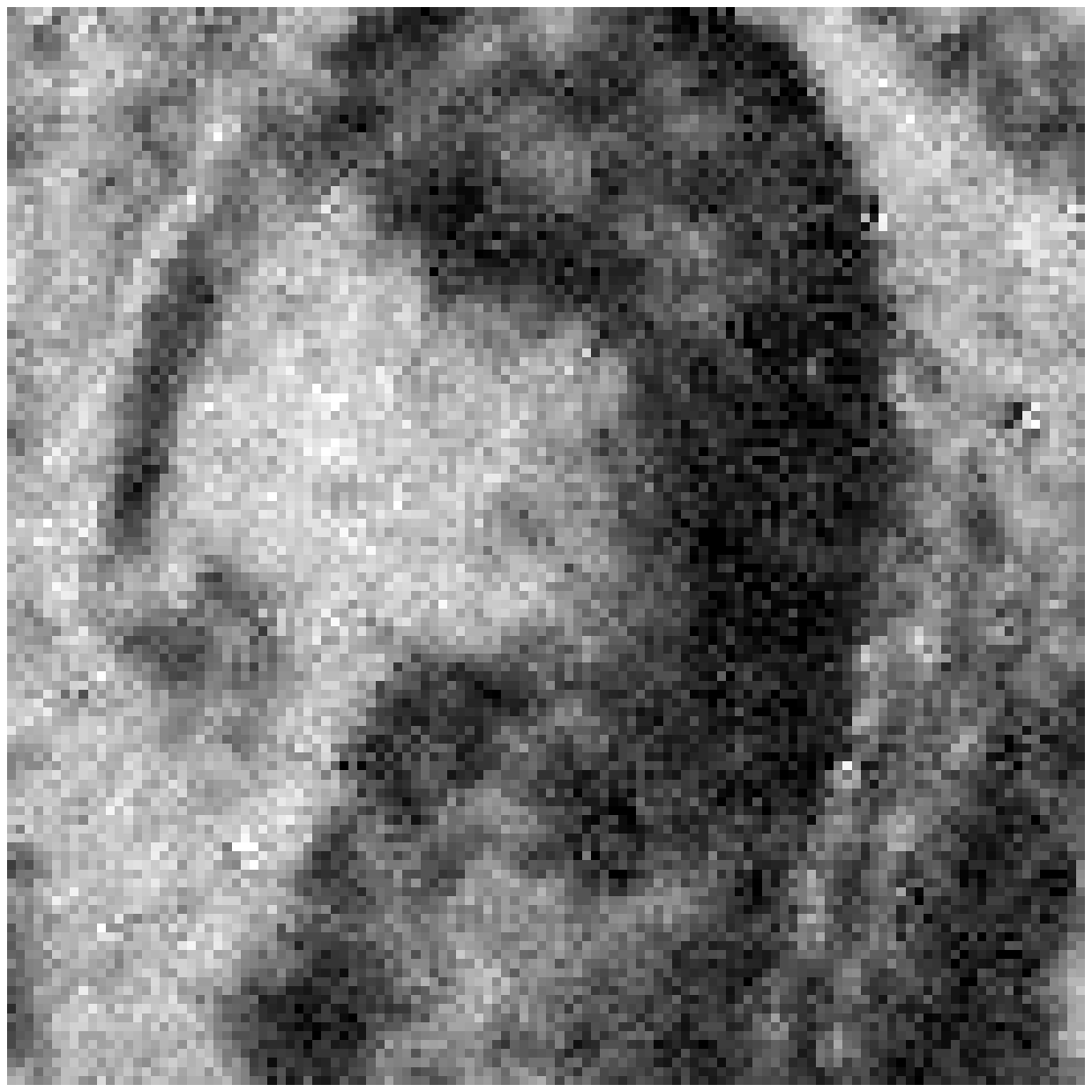}
\label{fig:barbara_nlm}
}\\
\subfloat[NLM-SB]{
\includegraphics[width=0.22\textwidth]{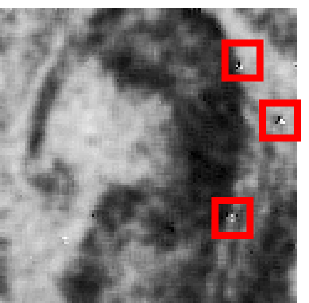} \label{fig:barbara_image_arti1}
}%
\subfloat[NLM-SB2]{
\includegraphics[width=0.22\textwidth]{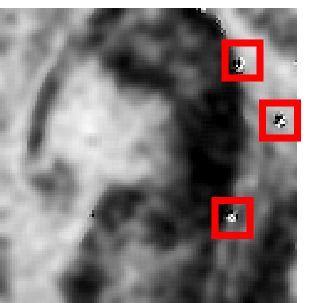} \label{fig:barbara_image_arti2}
}%
\subfloat[BM3D]{
\includegraphics[width=0.22\textwidth]{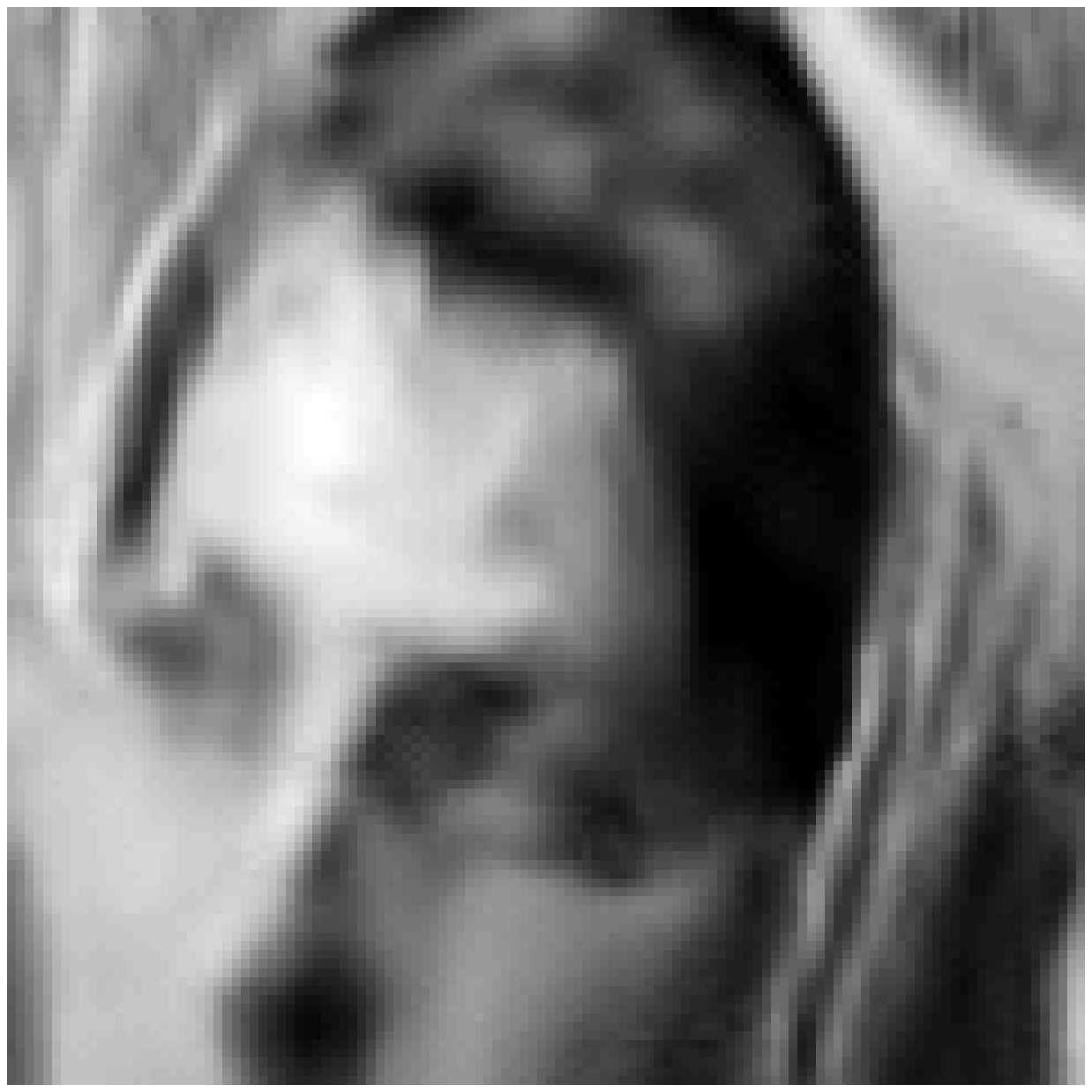}
\label{fig:barbara_bm3d}
}
\caption{Denoised examples taken from the Barbara image with SNR$=0.75$. Remaining noise artifacts are marked with red rectangles.}
\label{fig:barbara_denoised}
\end{center}
\end{figure}

\section{Summary}\label{sec:conclusions}

In this paper, we have investigated the idea of improving a Non-Local Means operator by manipulating its spectrum. We have shown a method
to do so without computing explicitly neither the eigenvectors of the original operator nor the matrix of the modified operator.

Our method operates by applying a filtering function to the original Non-Local Means operator. To that end, we have derived sufficient conditions
for such filtering functions and an efficient procedure for its application.

In this work we also show the connection between spectral shaping of the operator and the application of a matrix function on the operator. In the implementation of our approach, we demonstrate the well-known efficiency of the Chebyshev polynomials for matrix functions. Moreover, a bound on the approximation error of the truncated Chebyshev expansion for the class of Non-Local Means matrices is proved.

We present numerical experiments from which we learn a few important observations about the improvement that can be gained by a low-rank approximation of a Non-Local Means operator. First, the improvement depends on the choice of the kernel width parameter and the noise level. Second, we find that the optimal cutoff of the spectrum of the Non-Local Means operator varies for different images and noise levels. Nevertheless, the methods of eigenvalue truncation and SB filtering, as given in Algorithms  \ref{alg:nlm_eig} and \ref{alg:nlm_sbw} respectively, both achieve a non-negligible improvement, on average, on our dataset of $15$ images. It is also demonstrated that a further PSNR improvement can be achieved by running SB filtering via a two-stage scheme, as suggested in \cite{meyer2014perturbation}.

\bibliographystyle{plain}
\bibliography{mybib}


\begin{appendices}
\label{appendix}

\section{Proof of Lemma \ref{lemma:NLM_matrix_properties}} \label{sec:apn_proof_of_lemma}
Lemma \ref{lemma:NLM_matrix_properties} summarizes known properties of NLM operators. We provide its proof for the self-containedness of the paper.
\begin{proof}
The NLM operator of Definition \ref{def:nlmeans_op} is in general not symmetric, but it is conjugated via $D^{\frac{1}{2}}$ to the symmetric matrix $S = D^{-\frac{1}{2}}WD^{-\frac{1}{2}}$, namely $A=D^{-1/2} S D^{1/2}$. Therefore, the NLM operator is diagonalizable and has the same eigenvalues as $S$. On other hand, $K_h$ is the Gaussian kernel function, which implies that $W$ is a symmetric positive definite matrix. Since $S$ is obtained from $W$ by multiplication by $D^{-1/2}$ on both sides, by Sylvester's law of inertia, the eigenvalues of $S$ are positive as well. Thus, since $S$ and $A$ are conjugated, all eigenvalues of $A$ are also positive.

Since $A$ is element-wise positive, it follows from the Perron-Frobenius theorem that
\[ \min_{1 \le i \le n} \sum_{j=1}^n A_{ij} \le \abs{\max_{1 \le i \le n}(\lambda_i)} \le \max_{1 \le i \le n} \sum_{j=1}^n A_{ij} . \]
Moreover, since for any $1 \le i \le n$, $\sum_j {A_{ij}}=1$, we get that $\abs{\max_{1 \le i \le n}(\lambda_i)} =1$. Thus, $\lambda_{1}=1$ and from the positivity of the eigenvalues we conclude that $\lambda_i \in(0,1]$, $1 \le i \le n$.
\end{proof}

\section{Algorithms}  \label{sec:apx_algorithms}

This appendix contains the pseudocode for the algorithms described in the paper. In these algorithms we denote by $ \ChebCoef(f,N)$ the set of the first $N+1$ Chebyshev coefficients calculated via \eqref{eqn:chebyshev_series_scalars} and \eqref{eqn:Cheby_Coef}.

\begin{algorithm}
\begin{algorithmic}[1]
\Require Noisy image $Y$, patch size $p$, kernel width $h$, matrix approximation rank $k$.
\Ensure Denoised image $\hat X$.
\Procedure{NLM-Eig}{$Y, p, h, k$}
\State $A \gets \NLM_{p, h}(Y)$ \Comment{Create a NLM operator for the image.}
\State $B \gets \EIG(A, k)$ \Comment{Make a low-rank approximation of the NLM operator}.
\State $y \gets \COL(Y)$ \Comment{Convert the image $Y$ to a vector.}
\State $\hat x \gets By$ \Comment{Compute the output image using the low-rank approximation of $A$.}
\State \textbf{return} $\IMAGE(\hat x)$ \Comment{Reshape $\hat{x}$ as an image.}
\EndProcedure
\end{algorithmic}
\caption{NLM-Eig denoising scheme. Here, in the notation of Lemma \ref{lemma:NLM_matrix_properties}, $\EIG(A, k) = Q^{-1} \Lambda_k Q$,  where $\Lambda_k = \diag\left(\lambda_1,\ldots,\lambda_k,0,\ldots,0  \right)$.}
\label{alg:nlm_eig}
\end{algorithm}

\begin{algorithm}
\begin{algorithmic}[1]
\Require Matrix $A$, vector $y$, vector of coefficients $c$ of length $N$.
\Ensure The vector $\sum_{k=1}^{N}c_{k}T_{k}\left(A\right)y$.
\Procedure{ClenshawMatVec}{A, y, c}
\State $N\gets \abs{c}$
\State $T\gets 2 \cdot A - I_{n}$
\State $d\gets 0$
\State $dd\gets 0$
\For{$i\gets N \; downto \; 2$}
\State $temp\gets d $
\State $d\gets 2 \cdot T \cdot d - dd + c_i \cdot y$
\State $dd\gets temp$
\EndFor
\State \textbf{return} $T \cdot d - dd + 0.5 \cdot c_1\cdot y$
\EndProcedure
\end{algorithmic}
\caption{Evaluate the product of a truncated matrix Chebyshev expansion \eqref{eqn:cheb_mat_approx} by a vector.}
\label{alg:clenshaw}
\end{algorithm}

\begin{algorithm}
\begin{algorithmic}[1]
\Require Noisy image $Y$, patch size $p$, kernel width $h$, cutoff and order parameters $\omega$ and $d$ of $f^{sb}_{\omega,d}$, number of Chebychev coefficients $N$.
\Ensure Denoised image $\hat x$.
\Procedure{NLM-SB}{$Y, p, h, \omega, d,N$}
\State $\{ \alpha_j \} \gets \ChebCoef(f^{sb}_{\omega, d}, N)$ \Comment{Compute the Chebychev coefficients for $f^{sb}_{\omega, d}$.}
\State $A \gets \NLM_{p, h}(Y)$ \Comment{Create a NLM operator for the image.}
\State $x \gets \COL(Y)$ \Comment{Convert the image $Y$ to a vector.}
\State $\hat x \gets$ ClenshawMatVec($A, \{ \alpha_j \}, x$) \Comment{Algorithm~\ref{alg:clenshaw}}

\State \textbf{return} $\IMAGE(\hat x)$ \Comment{Reshape $\hat{x}$ as an image.}
\EndProcedure
\end{algorithmic}
\caption{NLM-SB denoising scheme.}
\label{alg:nlm_sbw}
\end{algorithm}

\begin{algorithm}
\begin{algorithmic}[1]
\Require Noisy image $Y$, patch size $p$, kernel widths $h_1$ and $h_2$, mixing weight $\gamma \in [0,1]$, cutoff and order parameters $\omega_1, \omega_2$ and $d_1, d_2$ of $f^{sb}_{\omega,d}$, number of Chebychev coefficients $N$.
\Ensure Denoised image $\hat x^{(3)}$.
\Procedure{NLM-SB2}{$Y, p, \gamma, h_1, h_2, \omega_1, \omega_2, d_1, d_2$}
\State $\hat x^{(1)} \gets \NLMSB(Y, p, h_{1}, \omega_{1}, d_{1} ,N$) \Comment{Algorithm~\ref{alg:nlm_sbw}.}
\State $\hat x^{(2)}\gets (1-\gamma)\hat x^{(1)} + \gamma x$ \Comment{Mix the esimated image with the original one.}
\State $\hat x^{(3)} \gets \NLMSB(\IMAGE(x^{(2)}), p, h_{2}, \omega_{2}, d_{2} ,N$) \Comment{Denoise again.}

\State \textbf{return} $\hat x^{(3)}$
\EndProcedure
\end{algorithmic}
\caption{NLM-SB2 two-stage denoising scheme.}
\label{alg:sbw_twostage}
\end{algorithm}

\end{appendices}

\end{document}